\newcommand{\new}{\newcommand}
\theoremstyle{plain}
\newtheorem{lemma}{Lemma}
\newtheorem{prop}{Proposition}
\newtheorem{corollary}{Corollary}
\newtheorem{theo}{Theorem}
\theoremstyle{definition}
\newtheorem{remark}{Remark}
\newtheorem{assumption}{Assumption}
\newtheorem{defi}{Definition}
\new{\argmin}{\operatornamewithlimits{argmin}}
\new{\runo}{{\mathbb R}}
\new{\cuno}{{\mathbb C}}
\new{\nat}{{\mathbb N}}
\new{\noi}{W}
\new{\ka}{\kappa}
\new{\la}{\lambda}
\new{\eps}{\varepsilon}
\new{\rp}{\la}
\new{\nor}[2]{\left\|{#2}\right\|_{#1}}
\new{\nory}[1]{{|#1|}}
\new{\norop}[3]{\left\|{#3}\right\|_{{}_{#1,#2}}}
\new{\scal}[3]{\left\langle{#2},{#3}\right\rangle_{#1}}
\new{\scaly}[2]{\left\langle{#1},{#2}\right\rangle}
\new{\sgn}[1]{\mathrm{sgn}(#1)}
\new{\supp}[1]{\mathrm{supp}(#1)}
\new{\Tr}[1]{\mathrm{Tr\,}(#1)}
\new{\Ker}[1]{\mathrm{ker\,}#1}
\new{\f}{f^*}
\new{\X}{\mathcal X}
\new{\Y}{\mathcal Y}
\new{\Z}{\mathcal Z}
\new{\G}{\Gamma}
\new{\g}{\gamma}
\new{\gG}{{\g\in\G}}
\new{\set}[1]{\{#1\}}
\new{\setG}[1]{({#1}_{\g})_{\gG}}
\new{\famG}[1]{(#1)_{\gG}}
\new{\be}{\beta}
\new{\bg}{\beta_\gamma}
\new{\vp}{\varphi}
\new{\vg}{\varphi_\gamma}
\new{\wg}{w_\gamma}
\new{\ug}{u_\gamma}
\new{\pg}{\psi_\gamma}
\new{\est}{\beta_n^\rp}
\new{\pen}[1]{p_\eps(#1)}
\new{\eqn}[1]{~(\ref{#1})}
\new{\beeq}[2]{\begin{equation}\label{#1}{#2}\end{equation}}
\new{\Pb}{{\mathbb P}}
\new{\PP}{P}
\new{\Ldue}{L^2_\Y(\PP)}
\new{\Lduen}{L^2_\Y({\mathbb P}_n)}
\new{\Lduex}{L^2_\Y(\X)}
\new{\ldue}{\ell_2}
\new{\hs}{{\mathcal L}_\text{HS}}
\new{\ma}{\rho}
\new{\Px}{\Phi_X}
\new{\Phx}{\Phi_x}
\new{\Ph}{\Phi}
\new{\PX}{\Phi_{\PP}}
\new{\Pn}{\Phi_n}
\new{\rph}{\hat\rp}
\new{\hh}{\mathcal H}
\new{\mir}{\beta^{\rp}}
\new{\II}{{\mathcal E}}
\new{\I}{{\mathcal E}^{\rp}}
\new{\In}{{\mathcal E}_n^\rp}
\new{\Ss}[2]{{\mathcal S}_{#1}\left(#2\right)}
\new{\Sh}[2]{{\mathbf S}_{#1}\left(#2\right)}
\new{\T}{{\mathcal T}}
\new{\beg}{\beta^\eps}
\new{\Ba}{H}
\new{\Bb}{M}
\new{\Prob}[1]{\operatornamewithlimits{\mathbb P}\left[#1\right]}
\new{\ex}[1]{\operatornamewithlimits{\mathbb E}\left[#1\right]}
\new{\exn}[1]{\operatornamewithlimits{{\mathbb E}_{n}}\left[#1\right]}
\new{\ls}{\kappa_0}
\new{\app}[1]{{\mathcal A}(#1)}
\new{\rpopt}{\rp_n^{\text{opt}} }
\new{\cs}{\cos\theta}
\new{\sn}{\sin\theta}
\new{\tn}{\tan\theta_n}
\new{\de}{\delta}
\new{\lao}{\cos{\de}}
\new{\muo}{\sin{\de}}
\new{\nullo}{{\mathcal P}_{\text{null}}}
\new{\sparso}{{\mathcal P}_{\text{sparse}}}
\new{\pieno}{{\mathcal P}_{\text{full}}}
\new{\dsparso}{{\Delta}_{\text{sparse}}}
\new{\dpieno}{{\Delta}_{\text{full}}}
\new{\bege}{\beta^\eps}
\new{\este}{\beta^{\eps,\rp}}
\new{\CC}{c}
\new{\note}[2]{{\color{red} #1}}
\new{\red}[1]{{\color{red}#1}}
\new{\blue}[1]{{\color{blue}#1}}
\new{\green}[1]{{\color{green}#1}}
\title{Elastic-Net Regularization in Learning Theory}
\author{C. De Mol, E. De Vito, L. Rosasco}
\address{Christine De Mol, Department of Mathematics and ECARES,
Universit\'e Libre de Bruxelles, Campus
   Plaine CP 217, Bd du Triomphe, 1050 Brussels, Belgium}
\email{demol@ulb.ac.be}
\address{Ernesto De Vito, D.S.A.,  Universit\`a di Genova, Stradone
  Sant'Agostino, 37, 16123, Genova, Italy  and INFN, Sezione di
  Genova, Via Dodecaneso 33, 16146 Genova, Italy}
\email{devito@dima.unige.it}
\address{Lorenzo Rosasco,
Center for Biological and Computational Learning at the 
Massachusetts Institute of Technology\\ \&
DISI, Universit\`a di Genova, Italy}
\email{lrosasco@mit.edu}
\date{\today}
\begin{document}
\begin{abstract}
{Within the framework of statistical learning theory we analyze in
detail the so-called elastic-net regularization scheme proposed by
Zou and Hastie \cite{zhuhas05} for the selection of groups of
correlated variables. To investigate on the statistical properties
of this scheme and in particular on its consistency properties, we
set up a suitable mathematical framework. Our setting is
random-design regression where we allow the response variable to
be vector-valued and we consider prediction functions which are
linear combination of elements ({\em features}) in an
infinite-dimensional dictionary. Under the assumption that the
regression function admits a sparse representation on the
dictionary, we prove that there exists a particular ``{\em
elastic-net representation}'' of the regression function such
that, if the number of data increases, the elastic-net estimator
is consistent not only for prediction but also for
variable/feature selection. Our results include finite-sample
bounds and an adaptive scheme to select the regularization
parameter. Moreover, using convex analysis tools, we derive an
iterative thresholding algorithm for computing the elastic-net
solution which is different from the optimization procedure
originally proposed in \cite{zhuhas05}.}
\end{abstract}

\maketitle

\section{Introduction}
We consider the standard framework of supervised learning, that is
nonparametric regression with random design. {In this
setting, there is an input-output pair $(X,Y)\in \X\times\Y$ with
unknown probability distribution $P$, and the goal is to find a
prediction function $f_n:\X\to\Y$, based on a training set
$(X_1,Y_1),\ldots,$ $(X_n,Y_n)$ of $n$ independent random pairs
distributed as $(X,Y)$. A good solution $f_n$ is such that, given
a new input $x\in\X$, the value $f_n(x)$ is a good prediction of
the true output $y\in\Y$. When choosing the square loss to measure
the quality of the prediction, as we do throughout this paper,
this means that the expected risk $\ex{\nory{Y-f_n(X)}^2}$ is {\em
small}, or, in other words, that $f_n$ is a {\em good}
approximation of the regression function $\f(x)=\ex{Y\mid X=x}$
minimizing this risk.}

In many learning problems, {a major goal besides prediction is
that of  {\em selecting the variables} that are {\em relevant to
achieve good predictions}.} In the problem of variable selection
we {are given} a set $\setG{\psi}$ of functions from the input
space $\X$ into the output space $\Y$ and we aim at selecting
those functions which are needed to represent the regression
function, where the {\em representation} is typically given by a
linear combination. The set $\setG{\psi}$ is usually called {\em
dictionary} and its elements {\em features}. We can think of the
features as measurements used to represent the input data, as
providing some relevant parameterization of the input space, or as
a (possibly overcomplete) dictionary of functions used to
represent the prediction function.  In modern applications, the
number $p$ of features in the dictionary is usually very large,
{possibly} much larger that the number $n$ of examples in the
training set. {This situation is often referred to as the ``large
$p$, small $n$ paradigm'' \cite{cantao07}, and a key to obtain a
meaningful solution in such case} is the requirement that the
prediction function $f_n$ is a linear combination of only a {\em
few} elements in the dictionary, i.e. that $f_n$ admits a {\em
sparse} representation.

The above setting can be illustrated by two examples of
applications we are currently working on and which provide an
underlying motivation for the theoretical framework developed in
the present paper. The first application is a classification
problem in computer vision, namely face detection
\cite{demodeveod07,demodeveod08,dedeodve08}. The training set contains images
of faces and non-faces and each image is represented by a very
large redundant set of features capturing the local geometry of
faces, for example wavelet-like dictionaries or other local
descriptors. The aim is to find a good predictor able to detect
faces in new images.

The second application is the analysis of microarray data, where
the features are the expression level measurements of the genes in
a given sample or patient, and the output is either a
classification label discriminating between two or more
pathologies or a  continuous index indicating,  for example,  the
gravity of an illness. In this problem, besides prediction of the
output for examples-to-come, another important goal is the
identification of the features that are the most relevant to build
the estimator and would constitute a gene signature for a certain
disease \cite{demotrve07,bamorove08}. In both applications, the
number of features we have to deal with is much larger than the
number of examples and assuming sparsity of the solution is a very
natural requirement.

{The problem of variable/feature selection has a long history in
statistics and it is known that the brute-force approach (trying
all possible subsets of features), though theoretically appealing,
is computationally unfeasible. {A first strategy to overcome this
problem is provided by greedy algorithms. A second route, which we
follow in this paper, makes use of sparsity-based regularization
schemes (convex relaxation methods). The most well-known example
of such schemes is probably the so-called  {\em Lasso regression}
\cite{tibshirani96} -- also referred to in the signal processing
literature as Basis Pursuit Denoising \cite{chdosa98} -- where a
coefficient vector $\be_n$ }is estimated as the minimizer
of the empirical risk penalized with the $\ell_1$-norm, namely
\[ \be_n=\argmin_{\be= (\bg)_\gG}\left(\frac{1}{n}\sum_{i=1}^n \nory{Y_i-f_\be(X_i)}^2+\rp
\sum_{\gG}|\bg|\right)\qquad\qquad ,\]
where $f_\be=\sum_{\gG}\bg\pg$, $\rp$ is a suitable positive
regularization parameter and $\famG{\pg}$ a given 
set of features. An extension of this approach, called {\em bridge
regression}, amounts to replacing the $\ell_1$-penalty by an
$\ell_p$-penalty \cite{fu98}. It has been shown that this kind of
penalty can still achieve sparsity when $p$ is bigger, but very
close to $1$ (see  \cite{kolt06}). For this class of techniques,
both consistency and computational aspects have been studied.
Non-asymptotic bounds within the  framework of statistical
learning have been studied in several  papers
\cite{kefu00,butswe06,loge02,tage06,green06,geer06,zhyu06,kolt06}.
 A common feature of these results is that they
assume that the dictionary is finite (with cardinality possibly
depending on the number of examples) and satisfies some
assumptions about the linear independence of the relevant features
--~see  \cite{kolt06} for a discussion on this point~-- whereas
$\Y$ is usually assumed to be $\runo$. {Several numerical
algorithms have also been proposed to solve the optimization
problem underlying Lasso regression and are based e.g. on
quadratic programming \cite{chdosa98}, on the so-called LARS
algorithm \cite{efhajo04} or on iterative soft-thresholding (see
\cite{daub} and references therein).}

}

Despite of its success in many applications, the Lasso strategy
has some drawback in variable selection problems where there are
highly correlated features and we need to identify all the
relevant ones. This situation is of uttermost importance for e.g.
microarray data analysis since, as well-known, there is a lot of
functional dependency between genes which are organized in small
interacting networks. The identification of such groups of
correlated genes involved in a specific pathology is desirable to
make progress in the understanding of the underlying biological
mechanisms.

Motivated by microarray data analysis, Zou and Hastie
\cite{zhuhas05} proposed the use of a penalty which is a weighted
sum of the $\ell_1$-norm and the square of the $\ell_2$-norm of
the coefficient vector $\be$. The first term enforces the sparsity of
the solution, whereas the second term ensures democracy among
groups of correlated variables. In \cite{zhuhas05} the
corresponding {method} is called {\em (naive) elastic net}. The
method allows to select groups of correlated features when the
groups are not known in advance (algorithms to enforce
group-sparsity with {\em preassigned} groups of variables have
been proposed in e.g. \cite{owen06,yulin06,forrau07} using other
types of penalties).

{In the present paper we study several properties of the
elastic-net regularization scheme for vector-valued regression in
a random design. In particular, we prove consistency under some
adaptive and non-adaptive choices for the regularization
parameter. As concerns variable selection, we assess the accuracy
of our estimator for the vector $\be$ with respect to the
$\ell_2$-norm, whereas the prediction ability of the corresponding
function {$f_n=f_{\be_n}$} is measured by the expected risk
$\ex{\nory{Y-f_n(X)}^2}$. To derive such error bounds, we
characterize the solution of the variational problem underlying
elastic-net regularization as the fixed point of a contractive map
and, as a byproduct, we derive an explicit iterative thresholding
procedure to compute the estimator. As explained below, in the
presence of highly collinear features, the presence of the
$\ell_2$-penalty, besides enforcing grouped selection, is crucial
to ensure stability with respect to random sampling.

In the remainder of this section, we define the main ingredients
for elastic-net regularization within our general framework,
discuss the underlying motivations for the method and then outline
the main results established in the paper.}

{As an extension of the setting originally proposed in
\cite{zhuhas05}, we allow the dictionary to have an infinite
number of features. In such case, to cope with infinite sums, we
need some assumptions on the coefficients. We assume that the
prediction function we have to determine is a linear combination
of the features $(\pg)_\gG$ in the dictionary and that the series
\[f_{\be}(x)=\sum_{\gG} \bg \pg(x), \]
converges absolutely for all $x\in\X$ and for all sequences
$\be=(\be_\g)_\gG$ satisfying $\sum_\gG\ug \bg^2 < \infty$, where
$\ug$ are given positive weights. The latter constraint can be
viewed as a constraint on the {\em regularity} of the functions
$f_\be$ we use to approximate the regression function. For
infinite-dimensional sets, as for example wavelet bases or
splines, suitable choices of the weights correspond to the
assumption that $f_\be$ is in a Sobolev space {(see Section 2 for
  more details about this point)}. Such requirement of regularity is
common when dealing with infinite-dimensional spaces of functions, as it happens in
approximation theory, signal analysis and inverse problems.

To ensure the convergence of the series defining $f_\be$,
we assume that
\begin{equation}\label{rrr}
\sum_{\gG} \frac{\nory{\pg(x)}^2}{\ug}\qquad\text{is finite for all
  }x\in X.
\end{equation}
Notice that for finite dictionaries, the series becomes a finite
sum and the previous condition as well as the introduction of
weights become superfluous.

To simplify the notation and the formulation of our results, and
without any loss in generality, we will in the following rescale
the features by defining $\vg=\pg / \sqrt{\ug}$, so that on this
{\em rescaled dictionary}, $f_\be=\sum_{\gG}\tilde{\bg}\vg$ will
be represented by means of a vector $\tilde{\bg}=\sqrt{\ug} \bg$
belonging to $\ldue$; the condition\eqn{rrr} then becomes
$\sum_{\gG} \nory{\vg(x)}^2<+\infty$, for all $x\in X$. From now
on, we will only use this rescaled representation and we drop the
tilde on the vector $\be$.

Let us now define our estimator as the minimizer of the empirical
risk penalized with a (weighted) elastic-net penalty, that is, a
combination of the squared $\ldue$-norm and of a weighted
$\ell_1$-norm of the vector $\be$.} More precisely, we define the
elastic-net penalty as follows.

\begin{defi}
Given a family $\setG{w}$ of weights $\wg\geq 0$ and a parameter
$\eps\geq 0$, let $p_\eps:\ell_2\to [0,\infty]$ be defined as
\beeq{enetpen}{\pen{\be}=\sum_{\gG} (\wg |\bg|+\eps \bg^2 )}
which
can also be rewritten as $\pen{\be}=\nor{1,w}{\be}+\eps
\nor{2}{\be}^2$, where $\nor{1,w}{\be}=\sum_{\gG}\wg |\bg|$.
\end{defi}
The weights $\wg$ allow us to enforce more or less sparsity on
different groups of features. We assume that they are prescribed
in a given problem, so that we do not need to explicitly indicate
the dependence of $\pen{\be}$ on these weights. The elastic-net
estimator is defined by the following minimization problem.

\begin{defi}
Given $\rp >0$,  let $\In:\ldue\to [0,+\infty]$ be the empirical
risk penalized by the penalty $\pen{\be}$
\beeq{elnetfunc}{\In(\be)=\frac{1}{n}\sum_{i=1}^n \nory{Y_i-
f_{\be}(X_i)}^2 + \rp \pen{\be},} 
and let $\est\in\ldue$ be {\em
the} or {\em a} minimizer of \eqn{elnetfunc} on $\ldue$
\beeq{estimatore}{\est=\argmin_{\be\in\ldue}\In(\be). }
\end{defi}
\noindent {The positive parameter $\rp$ is a regularization parameter
controlling the trade-off between the empirical error and the
penalty. Clearly, $\est$ also depends on
the parameter $\eps$, but we do not write explicitly this
dependence since $\eps$ will always be fixed.}

Setting $\eps=0$ in \eqn{elnetfunc}, we obtain as a special case
an infinite-dimensional extension of the Lasso regression scheme.
On the other hand, setting $\wg=0,\ \forall \g $, the method
reduces to $\ell_2$-regularized least-squares regression -- also
referred to as {\em ridge regression} -- with a generalized linear
model. The $\ell_1$-penalty has selection capabilities since it
enforces sparsity of the solution, whereas the $\ell_2$-penalty
induces a linear shrinkage on the coefficients leading to stable
solutions. The positive parameter $\eps$ controls the trade-off
between the $\ell_1$-penalty and the $\ell_2$-penalty.

{We will show that, if $\eps>0$, the minimizer $\est$ always exists and
is unique. In the paper we will focus on the
case $\eps>0$. Some of our results, however, still hold for
$\eps=0$, possibly under some supplementary conditions, as will be
indicated in due time.}

As previously mentioned one of the main advantage of the
elastic-net penalty is that it allows to achieve stability with
respect to random sampling. To illustrate this property more
clearly, we consider a toy example where the (rescaled) dictionary
has only two elements $\vp_1$ and $\vp_2$ with  weights
$w_1=w_2=1$.  The effect of random sampling is particularly
dramatic in the presence of highly correlated features. {To
illustrate this situation, we assume that $\vp_1$ and $\vp_2$
exhibit a special kind of linear dependency, namely that they are
linearly dependent on the input data $X_1,\ldots,X_n$:
$\vp_2(X_i)=\tn\ \vp_1(X_i)$ for all $i=1,\ldots,n$, where we have
parametrized the coefficient of proportionality by means of the
angle $\theta_n\in[0,\pi/2]$. Notice that this angle is a random
variable since it depends on the input data.}

Observe that the minimizers of \eqn{elnetfunc} must lie at a
tangency point between a level set of the empirical error and a
level set of the elastic-net penalty. The level sets of the
empirical error are all parallel straight lines with slope
$-\cot{\theta_n}$, as depicted by a dashed line in the two panels
of Figure~\ref{fig:2D}, whereas  the level sets of the elastic-net
penalty  are {\em  elastic-net balls} ({\em  $\eps$-balls}) with
center at the origin and corners at the intersections with the
axes, as depicted in Figure~\ref{fig:balls}.
\begin{figure}
  \centering 
 \includegraphics[scale=0.5]{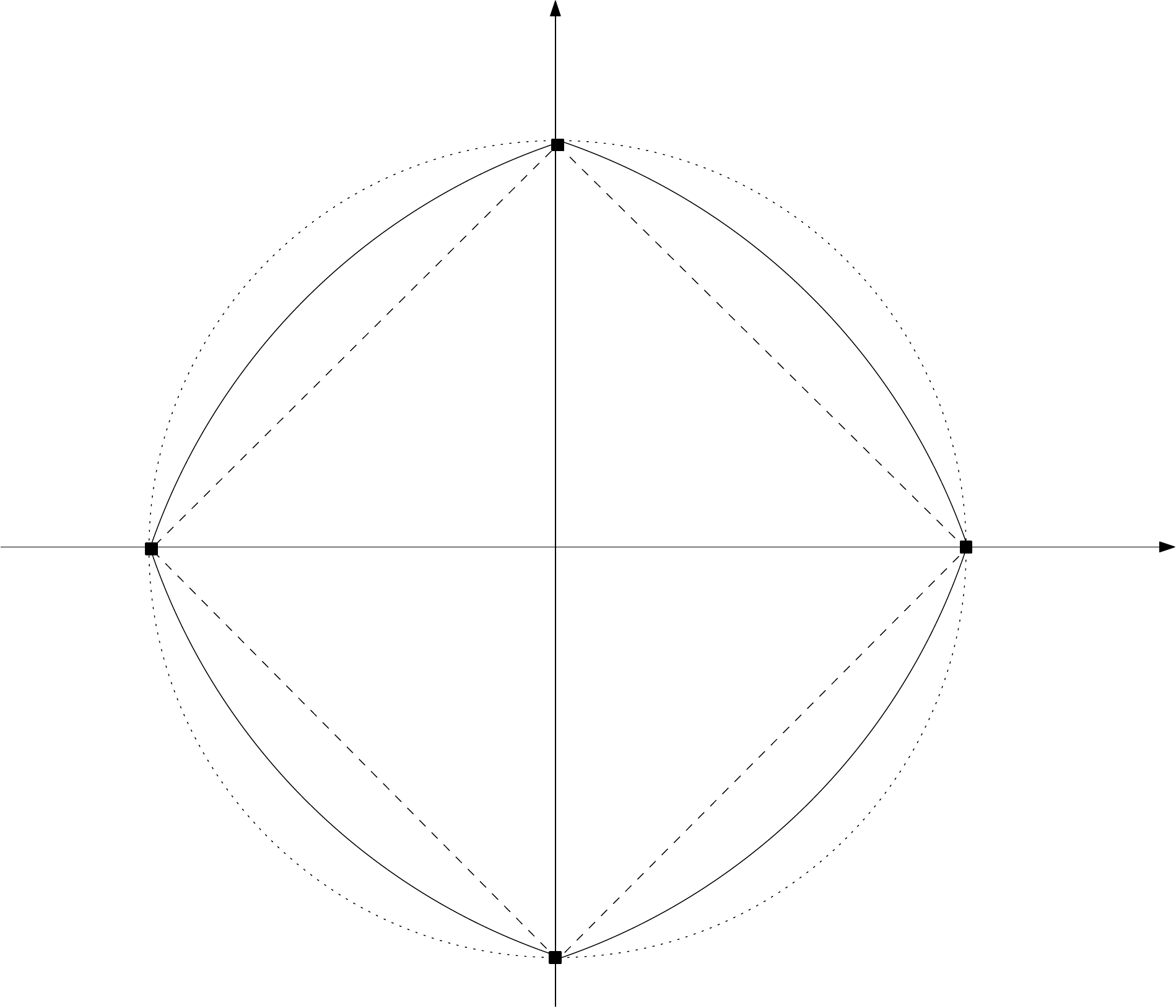}
  \caption{\small{The $\eps$-ball with $\eps>0$ (solid line),
      the square ($\ell_1$-ball), which is the $\eps$-ball with $\eps=0$
      (dashed line), and the disc ($\ell_2$-ball), which is the
      $\eps$-ball with $\eps\to\infty$ (dotted line).}}\label{fig:balls}
\end{figure}
When $\eps=0$, i.e. with a pure $\ell_1$-penalty (Lasso), the
$\eps$-ball is simply {\em a square} {(dashed line in Figure~\ref{fig:balls})} and we see that the unique
tangency point will be the {\em top corner} if $\theta_n>\pi/4$
(the point $T$ in the two panels of Figure~\ref{fig:2D}), or the
{\em right corner} if $\theta_n<\pi/4$. For $\theta_n = \pi/4$
~(that is, when $\vp_1$ and $\vp_2$ coincide on the data), the
minimizer of \eqn{elnetfunc} is no longer unique since the level
sets will touch along an edge of the square. Now, if $\theta_n$
randomly tilts around $\pi/4$  (because of the random sampling of
the input data), we see that the Lasso estimator is not stable
since it randomly jumps between the top and the right corner. If
$\eps\to\infty$, i.e. with a pure $\ell_2$-penalty (ridge
regression), the $\eps$-ball becomes {a disc (dotted line in
  Figure~\ref{fig:balls})}  and the minimizer is the point of the
straight line having minimal 
distance from the origin (the point $Q$ in the two panels of
Figure~\ref{fig:2D}). The solution always exists, is stable under
random perturbations, but it is never sparse (if
$0<\theta_n<\pi/2$).

{The situation changes if we consider the elastic-net estimator
with $\eps>0$ (the corresponding minimizer is the point $P$ in the
two panels of Figure~\ref{fig:2D}). The presence of the 
$\ell_2$-term ensures a smooth and stable behavior when the Lasso estimator
becomes unstable. More precisely, let $-\cot{\theta_+}$ be the
slope of the right tangent at the top corner of the elastic-net
ball ($\theta_+ > \pi/4$), and $-\cot{\theta_-}$ the slope of the
upper tangent at the right corner ($\theta_- < \pi/4$). As
depicted in top panel of Figure~\ref{fig:2D}, the minimizer will
be the top corner  if $\theta_n>\theta_+$. It will be the right
corner if $\theta_n<\theta_-$. In both cases the elastic-net
solution is sparse. On the other hand, if $\theta_-\leq\theta_n
\leq\theta_+$ the minimizer has both components $\be_1$ and
$\be_2$ different from zero -- see the bottom panel of
Figure~\ref{fig:2D}; in particular, $\be_1=\be_2$ if
$\theta_n=\pi/4$. Now we observe that if $\theta_n$ randomly tilts
around $\pi/4$, the solution smoothly moves between the top corner
and the right corner. However, the price we paid to get such
stability is a decrease in sparsity, since the solution is sparse
only when $\theta_n\not\in [\theta_-,\theta_+]$.}

\begin{figure}
\centering
\includegraphics[scale=.8]{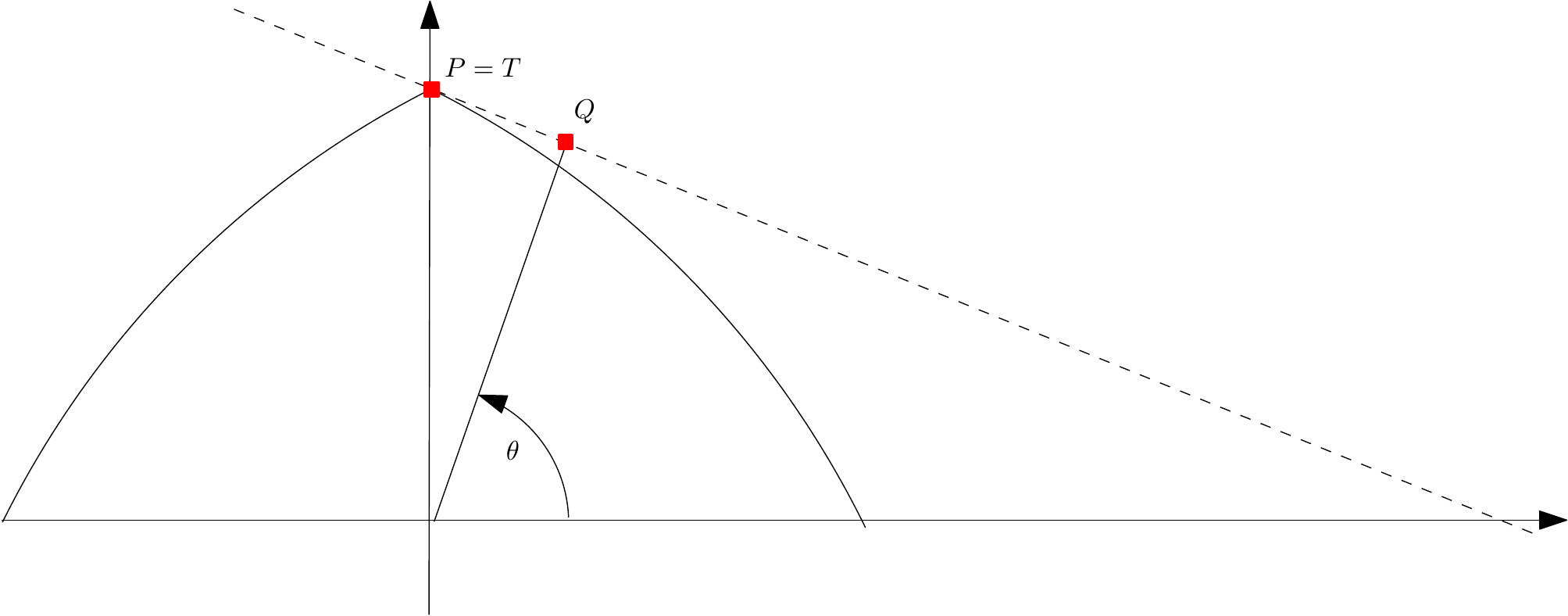}\\
\includegraphics[scale=.8]{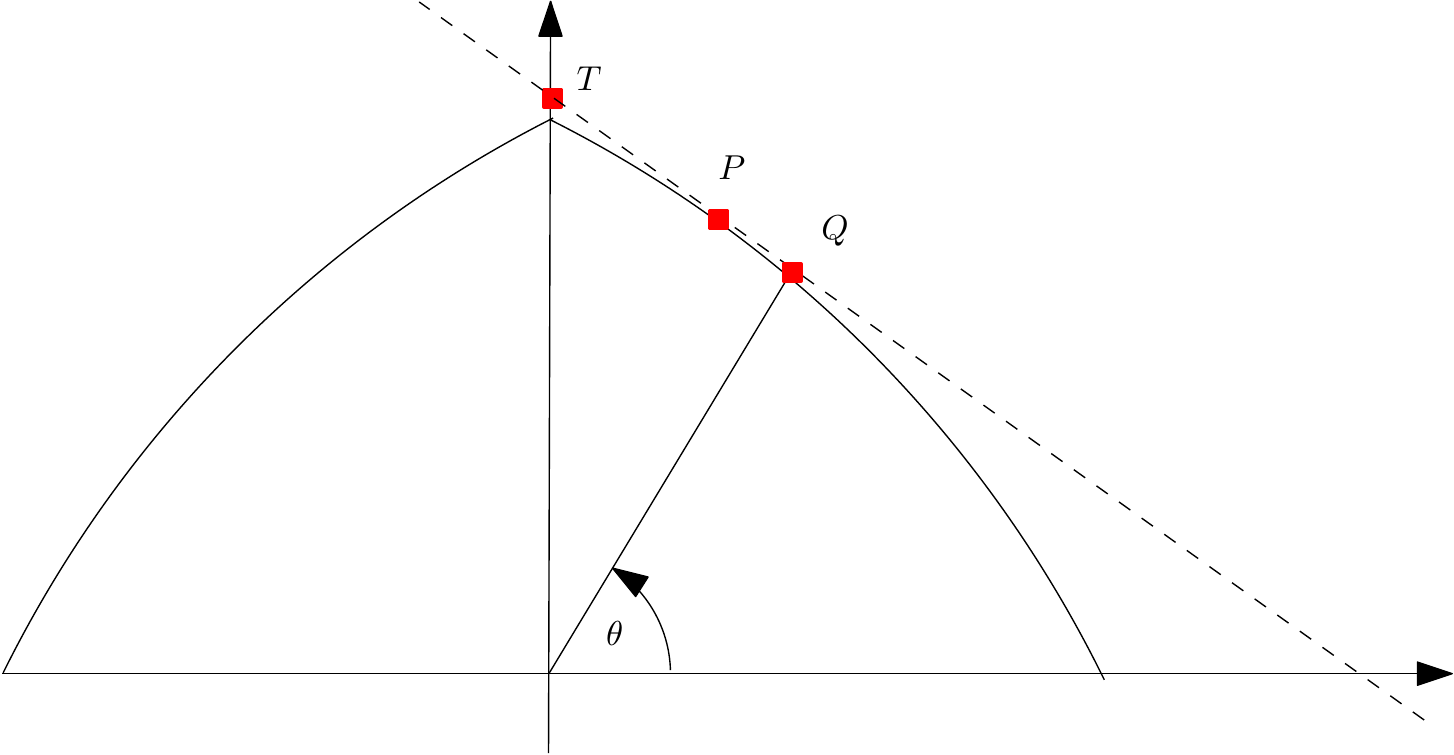}
\begin{caption}{\small{Estimators in the two-dimensional example: $T$=Lasso, $P$=elastic net
    and $Q$=ridge regression. Top panel:
    $\theta_+<\theta<\pi/2$. Bottom panel: $\pi/4<\theta<\theta_+$.}}\label{fig:2D}
\end{caption}
\end{figure}

{The previous elementary example could be refined in various ways
to show the essential role played by the $\ell_2$-penalty to
overcome the instability effects inherent to the use of the
$\ell_1$-penalty for variable selection in a random-design
setting.

We now conclude this introductory section by a summary of the main
results which will be derived in the core of the paper. A key
result will be to show that for $\eps>0$, $\est$ is the fixed
point of the following contractive map}

$$
\be=
 \frac{1}{\tau +\eps\rp}\Sh{\rp}{(\tau I -\Pn^* \Pn)\be+ \Pn^* Y}
$$
where $\tau$ is a suitable relaxation constant, $\Pn^*\Pn$ is the
matrix with entries
$(\Pn^*\Pn)_{\g,\g^\prime}=\frac{1}{n}\sum_{i=1}^n\scaly{\vp_\g(X_i)}{\vp_{\g^\prime}(X_i)}$,
$\Pn^*Y$ is the vector $(\Pn^*Y)_\g=
\frac{1}{n}\sum_{i=1}^n\scaly{\vp_\g(X_i)}{Y_i}$
($\scaly{\cdot}{\cdot}$ denotes the scalar product in the output
space $\Y$). Moreover, $\Sh{\rp}{\be}$ is the soft-thresholding
operator acting componentwise as follows
\[\ [\Sh{\rp}{\be}]_\g=\left\{\begin{array}{lccl} \bg -\frac{\rp\wg}{2} &
      \text{\rm if } & \bg>\frac{\rp \wg}{2}  \\
0 & \text{\rm if } & |\bg|\leq \frac{\rp\wg}{2} \\
\bg+\frac{\rp\wg}{2} & \text{\rm if } & \bg <-\frac{\rp\wg}{2}
\end{array}\right..\]
As a consequence of the Banach fixed point theorem, $\est$ can be
computed by means of an iterative algorithm. This procedure is
completely different from the modification of the LARS algorithm
used in \cite{zhuhas05} and is akin instead to the algorithm
developed in \cite{daub}.

Another interesting property which we will derive from the above
equation is that the non-zero components of $\est$ are such that
$\wg\leq\frac{C}{\rp}$, where $C$ is a constant depending on the
data. Hence the only active features are those for which the
corresponding weight lies below the threshold $C/\rp$. If the
features are organized into finite subsets of increasing
complexity (as it happens for example for wavelets) and the
weights tend to infinity with increasing feature complexity, then
the number of active features is finite and can be {determined}
for any given data set. Let us recall that in the case of ridge
regression, the so-called {\em representer theorem}, see
\cite{wahba90}, ensures that {we only have to solve in practice} a
finite-dimensional optimization problem, even when the dictionary
is infinite-dimensional (as in kernel methods). This is no longer
true, however, with an $\ell_1$-type regularization and, {for
practical purposes, one would need to truncate infinite
dictionaries. A standard way to do this is to consider only a
finite subset of $m$ features, with $m$ possibly depending on $n$
-- see for example \cite{butswe06,green06}. Notice that such
procedure implicitly assumes some order in the features and makes
sense only if the retained features are the most relevant ones.}
For example, in \cite{bacoda06}, it is assumed that there is a
natural exhaustion of the hypothesis space with nested subspaces
spanned by finite-dimensional subsets of features of increasing
size. In our approach we adopt a different strategy, namely the
encoding of such information in the elastic-net  penalty by means of
suitable weights in the $\ell_1$-norm.

The main result of our paper concerns the consistency for variable
selection of $\est$. We prove that, if the regularization
parameter $\rp=\la_n$ satisfies the conditions {$\lim_{n\to\infty}\rp_n=0$ and } $\lim_{n\to\infty}(\la_n\sqrt{n} -2\log n)= +\infty$, then
$$
\lim_{n\to \infty}  \nor{2}{\be_n^{\rp_n}-\beg} =0 \qquad\text{
with probability one},
$$
where the vector $\beg$, which we call the {\em elastic-net
representation of $f_\be$}, is the minimizer of
\[\min_{\be\in\ldue} \left(\sum_{\gG} \wg |\bg| + \eps
\sum_{\gG} |\bg|^2 \right)\qquad\text{subject to}\qquad
f_\be=f^*.\] The vector $\beg$ exists and is unique provided that
$\eps>0$ and the regression function $\f$ admits a {\em sparse
representation on the dictionary}, i.e.  $\f=\sum_{\gG}\bg^*\vg$
for at least a vector $\be^*\in\ldue$ such that
$\sum_{\gG}\wg|\bg^*|$ is finite. Notice that,  when the features
are linearly dependent, there is a problem of identifiability
since there are many vectors $\be$ such that $\f=f_\be$. The
elastic-net regularization scheme forces $\be_n^{\rp_n}$ to
converge to $\beg$. This is precisely what happens for linear
inverse problems where the regularized solution converges to the
minimum-norm solution of the least-squares problem. As a
consequence of the above convergence result, one easily deduces
the consistency of the corresponding prediction function
$f_n:=f_{\be^{\la_n}_n}$, that is, $\lim_{n\to \infty}
\ex{\nory{f_n-f^*}^2}=0$ with probability one. When the regression
function does not admit a sparse representation, we can still
prove the previous consistency result for $f_n$ provided that the
linear span of the features is sufficiently rich.
Finally, we use a data-driven choice for the regularization
parameter, based on the so-called balancing principle
\cite{bauper05}, to obtain non-asymptotic bounds which are
adaptive to the unknown regularity of the regression function.

 {The rest of the paper is organized as follows. In Section
2, we set up the mathematical framework of the problem. In Section
3, we analyze the optimization problem underlying elastic-net
regularization and the iterative thresholding procedure we propose
to compute the estimator. Finally, Section 4 contains the
statistical analysis with our main results concerning the
estimation of the errors on our estimators as well as their
consistency properties under appropriate a priori and adaptive
strategies for choosing the regularization parameter.}

\section{Mathematical setting of the problem}\label{preli}

\subsection{Notations and assumptions}

In this section we describe the general setting of the regression
problem we want to solve and specify all the required assumptions.

We assume that $\X$ is a separable metric space and that $\Y$ is a
(real) separable Hilbert space, with norm and scalar product
denoted respectively by $\nory{\cdot}$ and $\scaly{\cdot}{\cdot}$.
Typically, $\X$ is  a subset of $\mathbb R^d$ and $\Y$ is $\mathbb
R$.  Recently, however, there has been an increasing interest for
vector-valued regression problems \cite{mipo05,bagibama08} and
multiple supervised learning tasks \cite{mipo05a,arevpo07}: in
both settings $\Y$ is taken to be $\mathbb R^m$. {Also
infinite-dimensional output spaces are of interest as e.g. in the
problem of estimating of glycemic response during a time interval
depending on the amount and type of food; in such case, $\Y$ is
the space $L^2$ or some Sobolev space. Other examples of
applications in an infinite-dimensional setting are given in
\cite{camipoyi08}.}

Our first assumption concerns the set of features.
\begin{assumption}
The family of features $\famG{\vg}$ is a countable
set of measurable functions $\vg:\X\to\Y$ such that
\begin{eqnarray}
  \label{eq:rkhs}
\forall  x\in \X \qquad k(x)=\sum_{\gG} \nory{\vg(x)}^2\leq \kappa,
\end{eqnarray}
for some finite number $\kappa$.
\end{assumption}
The index set $\G$ is countable, but we do not assume any order.
{As for} the convergence of series, we use the notion of
summability: given a family $\setG{v}$ of vectors in a normed
vector space $V$, $v=\sum_\gG v_\g$ means that $\setG{v}$ is summable\footnote{That is, for
  all $\eta>0$, there is a finite subset $\G_0\subset\G $ such that
$\nor{V}{v-\sum_{\g\in\G'}v_\g} \leq\eta$ for all finite subsets
$\G'\supset \G_0$. If $\G=\nat$, the notion of summability is
equivalent to requiring the series to converge unconditionally
(i.e. its terms can be permuted without affecting convergence). If
the vector space is finite-dimensional, summability is equivalent
to absolute convergence, but in the infinite-dimensional setting,
there are summable series which are not absolutely convergent.} 
with sum $v\in V$. 

Assumption~1 can be seen as a condition on the class of functions
that can be {recovered} by the elastic-net scheme. {As already
noted in the Introduction, we have at our disposal an arbitrary
(countable) dictionary $\famG{\pg}$ of measurable
functions, and we try to approximate $\f$ with linear combinations
$f_\be(x)=\sum_{\gG}\bg\pg(x)$ where the set of coefficients
$(\bg)_{\gG}$ satisfies {some {\em decay condition} equivalent to
a {\em regularity condition} on the functions $f_\be$}. We make
this condition precise by assuming that there exists a sequence of
positive weights $\setG{u}$ such that $\sum_{\gG}\ug \bg^2<\infty$
and, for any of such vectors $\be=(\bg)_{\gG}$, that the series
defining $f_\be$ converges absolutely for all $x\in\X$. These two
facts follow from the requirement that the set of rescaled
features $\vg=\frac{\pg}{\sqrt{\ug}}$ satisfies
$\sum_{\gG}|\vg(x)|^2<\infty$. Condition\eqn{eq:rkhs} is a little
bit stronger since it requires that
$\sup_{x\in\X}\sum_{\gG}|\vg(x)|^2<\infty$, so that we also have
that the functions $f_\be$ are bounded. To simplify the notation,
in the rest of the paper, we only use the (rescaled) features
$\vg$ and, with this choice, the regularity condition on the
coefficients $(\bg)_{\gG}$ becomes $\sum_{\gG}\bg^2<\infty$. }

An example of features satisfying the condition\eqn{eq:rkhs} is
given by a family of {{\em rescaled}} wavelets on $\X=[0,1]$.
Let $\set{\psi_{jk}\mid j=0,1\ldots;\ k\in\Delta_j}$ be a
{orthonormal} wavelet basis in $L^2([0,1])$ with regularity
$C^r$, $r>\frac{1}{2}$, where for $j\geq 1$ $\set{\psi_{jk}\mid
k\in\Delta_j}$ is the orthonormal wavelet basis  {(with
suitable boundary conditions)} spanning the detail space at level
$j$. To simplify notation, it is assumed that the set
$\set{\psi_{0k}\mid k\in\Delta_0}$ contains both the wavelets and
the scaling functions at level $j=0$. Fix $s$ such that
$\frac{1}{2}<s< r$ and let $\varphi_{jk}=2^{-js}\psi_{jk}$. Then
\[ {\sum_{j=0}^\infty\sum_{k\in\Delta_j} |\varphi_{jk}(x)}|^2=
{\sum_{j=0}^\infty\sum_{k\in\Delta_j}}2^{-2js}|\psi_{jk}(x)|^2\leq
C \sum_{j=0}^\infty2^{-2js} 2^j= C \frac{1}{1- 2^{1-2s}}=\kappa,\]
where $C$ is a suitable constant depending on the number of
wavelets that are non-zero at a point $x\in [0,1]$ for a given
level $j$, and on the maximum values of the scaling function and
of the mother wavelet; see \cite{amanpe06} for a similar setting.

Condition \eqn{eq:rkhs} allows to define the hypothesis space in
which we search for the estimator. Let $\ell_2$ be the Hilbert
space of the families $\setG{\be}$ of real numbers such that
$\sum_{\gG}\bg^2<\infty$, with the usual scalar product
$\scal{2}{\cdot}{\cdot}$ and the corresponding norm
$\nor{2}{\cdot}$.  We will denote by $\setG{e}$ the canonical
basis of $\ldue$ and by $\supp{\be}=\set{\gG\mid \bg \neq 0}$ the
support of $\be$. The Cauchy-Schwarz inequality and the condition
\eqn{eq:rkhs} ensure that, for any $\be=\setG{\be}\in\ldue$, the
series
\[\sum_{\gG}\bg \vg(x)=f_\be(x)\]
is summable in $\Y$ uniformly on $\X$ with
\beeq{csi}{\sup_{x\in\X}\nory{f_\be(x)}\leq
\nor{2}{\be}\ka^{\frac{1}{2}}.} Later on, in
Proposition~\ref{rkhs}, we will show that the hypothesis space
$\hh=\set{f_\be\mid\be\in\ldue}$ is then a vector-valued
reproducing kernel Hilbert space on $\X$ with a bounded kernel
\cite{cadeto06}, and that $\setG{\vp}$ is a {normalized tight}
frame {for $\hh$.}
In the example of the
wavelet features one can easily check that $\hh$ is the Sobolev
space $H^s$ on $[0,1]$ {and $\nor{2}{\be}$ is equivalent to $\nor{H^s}{f_\be}$.}

The second assumption concerns the regression model.
\begin{assumption}\label{statistical_model}
The random couple $(X,Y)$ in $\X\times \Y$ {obeys} the
regression model {
\[Y=\f(X)+\noi\]
where
\begin{equation}\label{eq:sparse1}
\f=f_{\be^*}\qquad \text{for some }\be^*\in\ldue\quad  \text{
with}\qquad  \sum_{\gG}\wg|\bg^*| < +\infty
\end{equation}}
and
\begin{eqnarray}
\label{eq:zeromean}
\ex{\noi\mid X} & = & 0 \\
\ex{\exp\left(\frac{\nory{\noi}}{L}\right)-\frac{\nory{\noi}}{L}-1\Big| X}& \leq &
\label{eq:noise}\frac{\sigma^2}{2L^2}
\end{eqnarray}
with $\sigma,L>0$. {The family $\famG{\wg}$ are the positive
weights defining the elastic-net  penalty $\pen{\be}$ in\eqn{enetpen}}.
\end{assumption}

Observe that $f^*=f_{\be^*}$  is always a bounded function
by\eqn{csi}. Moreover the condition\eqn{eq:sparse1} is a further
regularity condition on the regression function and will not be
needed for some of the results derived in the paper.
Assumption\eqn{eq:noise} is satisfied by bounded, Gaussian or
sub-Gaussian noise. In particular, it implies
\begin{eqnarray}
  \label{eq:bennett}
\ex{\nory{\noi}^m|X}\leq
  \frac{1}{2}m!\; \sigma^2 L^{m-2},\qquad \forall m\geq 2,
\end{eqnarray}
see \cite{vaart}, so that $W$ has a finite second moment.  It
follows that $Y$ has a finite first moment and\eqn{eq:zeromean}
implies that $\f$ is the regression function $\ex{Y\mid
  X=x}$.

Condition\eqn{eq:sparse1} controls both the sparsity and the
regularity of the regression function. If $\inf_{\gG}\wg=w_0>0$,
it is sufficient to require that $\nor{1,w}{\be^*}$ is finite.
Indeed, the H\"{o}lder inequality gives that \beeq{daneo}{
\nor{2}{\be} \leq\frac{1}{w_0}\nor{1,w}{\be}.} If $w_0=0$, we also
need $\nor{2}{\be^*}$ to be finite. In the example of the
{(rescaled)} wavelet features a natural choice for the weights is
$w_{jk}=2^{ja}$ for some $a\in\runo$, so that $\nor{1,w}{\be}$ is
equivalent to the norm $\nor{B^{\tilde{s}}_{1,1}}{f_\be}$, with
$\tilde{s}= a+s+\frac{1}{2}$, in the Besov space
$B^{\tilde{s}}_{1,1}$  on $[0,1]$ (for more details, see e.g. the
appendix in \cite{daub}). In such a case, \eqn{eq:sparse1} is
equivalent to requiring that $\f\in H^s\cap B^{\tilde{s}}_{1,1}$.

Finally, our third assumption concerns the training sample.
\begin{assumption}\label{sample_model}
The sequence of {random pairs $(X_n,Y_n)_{n\geq 1}$ are
independent and identically distributed ({\em i.i.d.}) according
to the distribution of $(X,Y)$.}
\end{assumption}
In the following, we let $\PP$ be the probability distribution of
$(X,Y)$, and $\Ldue$ be the Hilbert space of (measurable)
functions $f:\X\times\Y\to\Y$ with the norm
\[\nor{\PP}{f}^2=\int_{\X\times\Y}|f(x,y)|^2\ dP(x,y).\]
With a slight abuse of notation, we regard the random pair $(X,Y)$
as a function on $\X\times\Y$, that is, $X(x,y)=x$ and $Y(x,y)=y$.
Moreover, we denote by $\Pb_n=\frac{1}{n}\sum_{i=1}^n
\delta_{X_i,Y_i}$ the empirical distribution and by $\Lduen$ the
corresponding (finite-dimensional) Hilbert space with norm
\[\nor{n}{f}^2= \frac{1}{n} \sum_{i=1}^{n} \nory{f(X_i,Y_i)}^2.\]

\subsection{Operators defined by the set of features}\label{sec:op}

The choice of a quadratic loss function and the Hilbert structure
of the hypothesis space suggest to use some tools from the theory
of linear operators. In particular, the function $f_\be$ depends
linearly on $\be$ and can be regarded as an element of both
$\Ldue$ and of $\Lduen$. Hence it defines two operators, whose
properties are summarized by the next two propositions, based on
the following lemma.
\begin{lemma}\label{rankone}
For any fixed $x\in\X$, the map $\Phx:\ldue\to\Y$ defined by
\[ \Phx\be=\sum_{\gG} \vg(x) \bg=f_\be(x) \]
is a  Hilbert-Schmidt operator, its adjoint $\Phx^*:\Y\to\ldue$ acts as
\begin{equation}
  \label{eq:2}
(\Phx^* y)_\g=\scaly{y}{\vg(x)}\qquad \gG \quad y\in\Y.
\end{equation}
In particular $\Phx^*\Phx$ is a trace-class operator with
\beeq{trace}{\Tr{\Phx^*\Phx }\leq\ka.} Moreover, $\Px^*Y$ is
a $\ldue$-valued random variable with
\begin{equation}
  \label{eq:9}
\nor{2}{\Px^*Y}\leq \ka^{\frac{1}{2}} \nory{Y},
\end{equation}
and $\Px^*\Px$ is a $\hs$-valued random variable with
\begin{equation}
  \label{eq:pa}
\nor{\mathrm{HS}}{\Px^*\Px}\leq \ka,
\end{equation}
where $\hs$ denotes the separable Hilbert space of the
Hilbert-Schmidt operators on $\ell_2$, and
$\nor{\mathrm{HS}}{\cdot}$ is the Hilbert-Schmidt norm.
\end{lemma}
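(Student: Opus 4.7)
The plan is to reduce everything to elementary computations on the canonical basis $\setG{e}$ of $\ldue$, together with the uniform bound (\ref{eq:rkhs}). First I would verify that $\Phx$ is well defined: for every $\be\in\ldue$, the Cauchy--Schwarz inequality and (\ref{eq:rkhs}) yield
\[
\sum_\gG |\bg|\,\nory{\vg(x)} \;\leq\; \nor{2}{\be}\Big(\sum_\gG \nory{\vg(x)}^2\Big)^{1/2} \;\leq\; \nor{2}{\be}\,\ka^{1/2},
\]
so $\Phx\be = \sum_\gG \bg\vg(x)$ is summable in $\Y$ and $\Phx$ is a bounded linear map. Since $\Phx e_\g = \vg(x)$, the Hilbert--Schmidt norm is
\[
\nor{\mathrm{HS}}{\Phx}^2 \;=\; \sum_\gG \nory{\Phx e_\g}^2 \;=\; \sum_\gG \nory{\vg(x)}^2 \;=\; k(x) \;\leq\; \ka,
\]
which gives the Hilbert--Schmidt property.

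For the adjoint I would just compute: for every $\be\in\ldue$ and $y\in\Y$,
\[
\scaly{\Phx \be}{y} \;=\; \sum_\gG \bg\, \scaly{\vg(x)}{y} \;=\; \scal{2}{\be}{v}
\]
with $v_\g = \scaly{\vg(x)}{y}$, which forces the identity (\ref{eq:2}). The trace bound (\ref{trace}) then follows from the identity $\Tr{\Phx^*\Phx} = \nor{\mathrm{HS}}{\Phx}^2$ (valid because $\Phx^*\Phx$ is positive trace-class), combined with the inequality just proved.

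Finally I would transfer the pointwise bounds to their random counterparts. Evaluating the adjoint formula at $x=X$, $y=Y$ and applying Cauchy--Schwarz gives
\[
\nor{2}{\Px^* Y}^2 \;=\; \sum_\gG |\scaly{\vg(X)}{Y}|^2 \;\leq\; \nory{Y}^2 \sum_\gG \nory{\vg(X)}^2 \;\leq\; \ka\,\nory{Y}^2,
\]
which is (\ref{eq:9}). For (\ref{eq:pa}) I would exploit positivity: for any positive trace-class operator $A$ the spectral theorem gives $\nor{\mathrm{HS}}{A}^2 = \sum_i \sigma_i^2 \leq \big(\sum_i \sigma_i\big)^2 = \Tr{A}^2$ with $\sigma_i\geq 0$, hence $\nor{\mathrm{HS}}{\Px^*\Px}\leq \Tr{\Px^*\Px}\leq \ka$ by (\ref{trace}) applied with $x=X$. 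The only point that requires care is Bochner measurability of the $\ldue$-valued map $\Px^* Y$ and of the $\hs$-valued map $\Px^*\Px$; since both target spaces are separable Hilbert spaces, I would invoke Pettis's theorem to reduce to measurability of the real-valued coordinate maps $(x,y)\mapsto \scaly{\vg(x)}{y}$ and $x\mapsto \scaly{\vg(x)}{\varphi_{\g'}(x)}$, each of which is measurable because every $\vg$ is measurable and the scalar product in $\Y$ is jointly continuous.
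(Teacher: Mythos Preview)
Your proof is correct and follows essentially the same route as the paper's: compute on the canonical basis to get the Hilbert--Schmidt bound and trace estimate, identify the adjoint by duality, and then transfer the pointwise bounds to the random versions using separability for measurability and the inequality $\nor{\mathrm{HS}}{A}\leq\Tr{A}$ for positive trace-class $A$. The only cosmetic differences are that you spell out well-definedness of $\Phx$ via Cauchy--Schwarz and invoke Pettis's theorem explicitly, whereas the paper handles both points more tersely.
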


\begin{proof}
Clearly $\Phx$ is a linear map from $\ldue$ to $\Y$. Since
$\Phx e_\g=\vg(x)$, we have
$$
\sum_{\gG}\nory{\Phx e_\g}^2=\sum_{\gG}\nory{\vg(x)}^2\leq\ka,
$$
so that $\Phx$ is a Hilbert-Schmidt operator and
$\Tr{\Phx^*\Phx}\leq\ka$ {by\eqn{eq:rkhs}}. Moreover, given $y\in\Y$ and
$\gG$
\[(\Phx^* y)_\g=\scal{2}{\Phx^* y}{e_\g}=\scaly{y}{\vg(x)}\]
which is \eqn{eq:2}. Finally, since $\X$ and $\Y$ are separable,
the map $(x,y)\to\scaly{y}{\vg(x)}$ is measurable, then
$(\Px^*Y)_\g$ is a real random variable and, since $\ldue$ is
separable, $\Px^*Y$ is $\ldue$-valued random variable with
\[\nor{2}{\Px^*Y}^2=\sum_{\gG} \scaly{Y}{\vg(X)}^2\leq \ka \nory{Y}^2.\] A similar proof holds for
$\Px^*\Px$, recalling that any trace-class operator is in $\hs$
and $\nor{\mathrm{HS}}{\Px^*\Px}\leq\Tr{\Px^*\Px}$.
\end{proof}
The following proposition defines the distribution-dependent
operator $\PX$ as a map from $\ell_2$ into $\Ldue$.
\begin{prop}\label{lemma1}
The map
$\PX:\ldue\to\Ldue$, defined by $\PX\be=f_\be$, is a  Hilbert-Schmidt
operator and
\begin{eqnarray}
\PX^* Y & = & \ex{\Px^*Y} \label{dual_pop} \\
\PX^* \PX & = &\ex{\Px^*\Px} \label{eq:PXS} \\
\Tr{\PX^*\PX }  &= & \ex{k(X)}\leq\ka. \label {trace1}
\end{eqnarray}
\end{prop}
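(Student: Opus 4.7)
The plan is to exploit the Hilbert-space structure throughout, using the canonical basis $\setG{e}$ of $\ldue$ to compute Hilbert-Schmidt norms, and the defining property $\scal{2}{\ex{Z}}{v} = \ex{\scal{2}{Z}{v}}$ of Bochner integrals in separable Hilbert spaces to push expectations inside inner products.

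First I would establish the Hilbert-Schmidt property of $\PX$ and the trace formula\eqn{trace1} in one shot. Since $\PX e_\g = \vg$ (viewed as an element of $\Ldue$), one has
\[
\sum_{\gG} \nor{\PP}{\PX e_\g}^2 = \sum_{\gG} \int_\X \nory{\vg(x)}^2\, dP_X(x) = \int_\X k(x)\, dP_X(x) = \ex{k(X)} \leq \ka,
\]
where the interchange of sum and integral is legitimate by Tonelli (the integrands are nonnegative) and the last bound is\eqn{eq:rkhs}. This shows that $\PX$ is Hilbert-Schmidt, and since $\Tr{\PX^*\PX} = \nor{\mathrm{HS}}{\PX}^2$, it simultaneously proves\eqn{trace1}.

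Next I would verify\eqn{dual_pop}. A preliminary check is that $\Px^*Y$ is Bochner integrable in $\ldue$: by Lemma~\ref{rankone}, $\nor{2}{\Px^*Y} \leq \ka^{1/2}\nory{Y}$, and $\ex{\nory{Y}}$ is finite because $\f=f_{\be^*}$ is bounded via\eqn{csi} while $\nory{W}$ has finite first moment by\eqn{eq:bennett}. Then, for any $\be\in\ldue$, unfolding the definitions yields
\[
\scal{2}{\PX^*Y}{\be} = \scal{\PP}{Y}{\PX\be} = \ex{\scaly{Y}{f_\be(X)}} = \ex{\scaly{Y}{\Px\be}} = \ex{\scal{2}{\Px^*Y}{\be}},
\]
where the last equality uses\eqn{eq:2}. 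The defining property of the Bochner integral identifies this with $\scal{2}{\ex{\Px^*Y}}{\be}$, and arbitrariness of $\be$ gives\eqn{dual_pop}. The argument for\eqn{eq:PXS} is entirely analogous in the separable Hilbert space $\hs$: Bochner integrability is secured by the bound\eqn{eq:pa}, and for all $\be,\be'\in\ldue$ one computes $\scal{2}{\PX^*\PX\be}{\be'} = \scal{\PP}{\PX\be}{\PX\be'} = \ex{\scaly{\Px\be}{\Px\be'}} = \ex{\scal{2}{\Px^*\Px\be}{\be'}}$, yielding matrix-element equality in the basis $\setG{e}$ and hence operator equality.

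The only real technical point is to justify the several interchanges between (countable) sums, integrals and expectations; these are handled uniformly by\eqn{eq:rkhs} for the Hilbert-Schmidt computation (via Tonelli), and by the operator-norm bounds in Lemma~\ref{rankone} combined with\eqn{eq:bennett} for the Bochner integrability of the $\ldue$- and $\hs$-valued random variables $\Px^*Y$ and $\Px^*\Px$. Once this bookkeeping is in place, the three identities reduce to the single principle that taking adjoints, forming inner products, and taking expectations commute in our setting.
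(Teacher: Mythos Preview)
Your proof is correct and follows essentially the same route as the paper: computing $\sum_{\gG}\nor{\PP}{\PX e_\g}^2=\ex{k(X)}\leq\ka$ to get the Hilbert-Schmidt property and\eqn{trace1} simultaneously, then establishing\eqn{dual_pop} and\eqn{eq:PXS} by testing against arbitrary $\be,\be'\in\ldue$ and invoking the Bochner integrability guaranteed by the bounds\eqn{eq:9} and\eqn{eq:pa} from Lemma~\ref{rankone}. Your added detail on Tonelli and the Bochner-integral pairing property simply spells out what the paper leaves implicit.
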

\begin{proof}
Since $f_\be$ is a bounded (measurable) function, $f_\be\in\Ldue$ and
\[\sum_{\gG}\nor{\PP}{\PX
  e_\g}^2=\sum_{\gG}\ex{\nory{\vg(X)}^2}=\ex{k(X)}\leq\ka. \]
Hence  $\PX$ is a Hilbert-Schmidt operator with
$\Tr{\PX^*\PX}=\sum_{\gG}\nor{\PP}{\PX
  e_\g}^2$ so that\eqn{trace1} holds.
By\eqn{eq:noise} $\noi$ has a finite second moment and by\eqn{csi}
$\f=f_{\be^*}$ is a bounded function, hence $Y=\f(X)+W$ is in
$\Ldue$. Now for any $\be\in\ldue$ we have
\[\scal{2}{\PX^*Y}{\be}=
\scal{\PP}{Y}{\PX\be}=\ex{\scaly{Y}{\Px\be}}=\ex{\scal{2}{\Px^*Y}{\be}}.\]
On the other hand, by\eqn{eq:9}, $\Px^*Y$ has finite expectation,
so that \eqn{dual_pop} follows. Finally, given $\be,\be'\in\ldue$
\[\scal{2}{\PX^*\PX\be'}{\be}=
\scal{\PP}{\PX\be'}{\PX\be}=\ex{\scaly{\Px\be'}{\Px\be}}=\ex{\scal{2}{\Px^*\Px\be'}{\be}}\]
so that\eqn{eq:PXS} is clear, since  $\Px^*\Px$ has finite expectation
as a consequence of the fact that it is a bounded
$\hs$-valued random variable.
\end{proof}
Replacing $\PP$ by the empirical measure we get the sample version
of the operator.
\begin{prop}\label{lemma0}
The map $\Pn:\ldue\to \Lduen$  defined by $\Pn\be=f_\be$
is  Hilbert-Schmidt operator and
\begin{eqnarray}\label{axs}
 \Pn^* Y & = & \frac{1}{n}\sum_{i=1}^n \Phi_{X_i}^*
  Y_i   \\
 \Pn^*\Pn & = & \frac{1}{n}\sum_{i=1}^n \Phi^*_{X_i}
  \Phi_{X_i} \label{pippo}\\
\Tr{\Pn^*\Pn} & = & \frac{1}{n}\sum_{i=1}^n k(X_i)\leq\ka \label{tracetx}.
\end{eqnarray}
\end{prop}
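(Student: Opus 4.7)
The plan is to mimic the proof of Proposition \ref{lemma1} almost verbatim, with the probability measure $\PP$ replaced everywhere by the empirical measure $\Pb_n=\tfrac{1}{n}\sum_{i=1}^n \delta_{X_i,Y_i}$ and the expectation $\ex{\cdot}$ replaced by the empirical expectation $\exn{\cdot}=\tfrac{1}{n}\sum_{i=1}^n$. Because the empirical measure is a finite discrete measure, every integrability issue trivializes, and one can proceed by direct computation on the canonical basis $\setG{e}$ of $\ldue$ together with Lemma \ref{rankone} applied pointwise at the data sites $X_1,\ldots,X_n$.

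First I would check that $\Pn$ is well-defined: given $\be\in\ldue$, the function $f_\be$ is bounded by\eqn{csi}, hence a fortiori in $\Lduen$, and the assignment $\be\mapsto f_\be$ is linear. To show Hilbert-Schmidtness and simultaneously compute the trace, I would evaluate the Hilbert-Schmidt sum on the basis: by definition of $\nor{n}{\cdot}$ and since $\Pn e_\g=\vg$,
\[
\sum_{\gG}\nor{n}{\Pn e_\g}^2
=\sum_{\gG}\frac{1}{n}\sum_{i=1}^n\nory{\vg(X_i)}^2
=\frac{1}{n}\sum_{i=1}^n k(X_i)\leq \ka,
\]
where the last inequality uses\eqn{eq:rkhs} and the interchange of the two (nonnegative) sums is justified by Tonelli. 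This gives both that $\Pn$ is Hilbert-Schmidt and establishes\eqn{tracetx} via $\Tr{\Pn^*\Pn}=\sum_{\gG}\nor{n}{\Pn e_\g}^2$.

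Next I would identify the adjoint. For $\be\in\ldue$ and $g\in\Lduen$,
\[
\scal{2}{\Pn^* g}{\be}=\scal{n}{g}{\Pn\be}
=\frac{1}{n}\sum_{i=1}^n\scaly{g(X_i,Y_i)}{f_\be(X_i)}
=\frac{1}{n}\sum_{i=1}^n\scal{2}{\Phi_{X_i}^*\,g(X_i,Y_i)}{\be},
\]
where I have used $f_\be(X_i)=\Phi_{X_i}\be$ together with\eqn{eq:2}. Specializing to the function $g=Y$, i.e.\ $g(x,y)=y$, one immediately obtains\eqn{axs}.

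Finally I would compute $\Pn^*\Pn$ in the same fashion: for $\be,\be'\in\ldue$,
\[
\scal{2}{\Pn^*\Pn\be'}{\be}
=\scal{n}{\Pn\be'}{\Pn\be}
=\frac{1}{n}\sum_{i=1}^n\scaly{\Phi_{X_i}\be'}{\Phi_{X_i}\be}
=\frac{1}{n}\sum_{i=1}^n\scal{2}{\Phi_{X_i}^*\Phi_{X_i}\be'}{\be},
\]
which yields\eqn{pippo} by nondegeneracy of the $\ldue$ scalar product. There is no serious obstacle here: the whole statement is the empirical counterpart of Proposition \ref{lemma1}, and the only thing one needs to be mildly careful about is that $\Lduen$ is a genuine (finite-dimensional) Hilbert space so that all adjoints and traces make sense without any measurability or integrability complications.
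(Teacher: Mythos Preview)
Your proposal is correct and is exactly the approach the paper takes: the paper simply states that the proof of Proposition~\ref{lemma0} is analogous to that of Proposition~\ref{lemma1}, with $\PP$ replaced by the empirical measure $\Pb_n$. Your explicit computations are precisely what this substitution amounts to.
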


The proof of Proposition~\ref{lemma0} is analogous to the proof of
Proposition~\ref{lemma1}, except that $\PP$ is to be replaced by
$\Pb_n$.

By\eqn{eq:2} with $y=\vp_{\g'}(x)$, we have that the matrix
elements of the operator $\Phx^*\Phx$ are
$(\Phx^*\Phx)_{\g\g'}=\scaly{\vp_{\g'}(x)}{\vg(x)}$ so that
$\Pn^*\Pn$ {is the empirical mean of the Gram matrix of the set
$\setG{\vp}$, whereas $\PX^*\PX$ is the corresponding mean with
respect to the distribution $P$.}
Notice that if the
features are linearly dependent in $\Lduen$, the 
{matrix} $\Pn^*\Pn$ has a non-trivial kernel and hence is not
invertible. More important, if $\G$ is countably infinite,
$\Pn^*\Pn$ is a compact operator, so that its inverse (if it
exists) is not bounded. On the contrary, if $\G$ is finite and
$\setG{\vp}$ are linearly independent in $\Lduen$, then $\Pn^*\Pn$
is invertible. A similar reasoning holds for the
{matrix} $\PX^*\PX$. {To control whether}
{these matrices}
have a bounded inverse or not, we introduce a lower
spectral bound $\ls\geq 0$, such that
\[ \ls \leq \inf_{\be\in\ldue\,\mid\,
  \nor{2}{\be}=1}\scal{2}{\PX^*\PX\be}{\be}\]
and, with probability $1$,
\[ \ls\leq\inf_{\be\in\ldue\,\mid\,
  \nor{2}{\be}=1}\scal{2}{\Pn^*\Pn\be}{\be}.\]
Clearly we can have $\ls>0$ only if $\G$ is finite and the
features $\setG{\vp}$ are linearly independent both in $\Lduen$
and {$\Ldue$.}

On the other hand,\eqn{trace1} and (\ref{tracetx}) give the crude
upper spectral bounds
\begin{eqnarray*}
\sup_{\be\in\ldue\,\mid\,
  \nor{2}{\be}=1}\scal{2}{\PX^*\PX\be}{\be}& \leq & \ka, \\
\sup_{\be\in\ldue\,\mid\,
  \nor{2}{\be}=1}\scal{2}{\Pn^*\Pn\be}{\be}& \leq &\ka.
\end{eqnarray*}
{One} could improve these estimates by means of a tight bound
on the largest eigenvalue of $\PX^*\PX$.

We end this section by showing that, under the assumptions we
made, a structure of reproducing kernel Hilbert space emerges
naturally. Let us denote by $\Y^\X$ the space of functions from
$\X$ to $\Y$.
\begin{prop}\label{rkhs}
The linear operator $\Ph:\ldue\to\Y^\X$, $\Ph\be=f_\be$, is a
partial isometry from $\ldue$ onto the vector-valued reproducing
kernel Hilbert space $\hh$ on $\X$, with reproducing kernel
$K:\X\times \X\to {\mathcal L}(\Y)$ \beeq{rep_kernel} {K(x,t)y =
(\Ph_x\Ph_t^*) y =
  \sum_{\gG}\vg(x)\scaly{y}{\vg(t)}\qquad x,t\in\X,\ y\in\Y,}
the null space of $\Ph$ is
\beeq{kernel}{\Ker\Ph=\set{\be\in\ldue\mid \sum_{\gG}\vg(x)\bg=0\
  \  \forall x\in X}, }
and the family $\setG{\vp}$ is a {normalized tight} frame in $\hh$, namely
\[ \sum_{\gG}
|\scal{\hh}{f}{\vg}|^2=\nor{\hh}{f}^2\qquad \forall f\in\hh.\]
Conversely, let $\hh$ be a vector-valued reproducing kernel
Hilbert space with reproducing kernel $K$ such that $K(x,x):\Y\to
\Y$ is a trace-class operator for all $x\in\X$, with trace bounded
by $\ka$. If $\setG{\vp}$ is a {normalized tight} frame in $\hh$,
then\eqn{eq:rkhs} holds.
\end{prop}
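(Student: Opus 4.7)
The plan is to realize $\hh$ concretely as the range of $\Ph$ equipped with the quotient Hilbert structure from $\ldue$, and then read off all four stated properties from this single construction.

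First I would decompose $\ldue=\Ker\Ph\oplus(\Ker\Ph)^\perp$ and let $P$ be the orthogonal projection onto the second summand. The pointwise identity $(\Ph\be)(x)=\Phx\be=\sum_\gG \vg(x)\bg$, which is well-defined and summable in $\Y$ by Lemma~\ref{rankone}, yields\eqn{kernel} immediately: $\be\in\Ker\Ph$ iff $f_\be$ vanishes pointwise on $\X$. Since two vectors in $\ldue$ give the same $f_\be$ precisely when they differ by an element of $\Ker\Ph$, the range $\hh=\Ph(\ldue)\subset\Y^\X$ consists of genuine functions, and the norm $\nor{\hh}{f}=\nor{2}{P\be}$ (for any $\be$ with $\Ph\be=f$) is unambiguously defined and makes $\Ph$ a partial isometry onto $\hh$ with $\Ph^*\Ph=P$. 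Point evaluations on $\hh$ are continuous by\eqn{csi}, so $\hh$ is a vector-valued reproducing kernel Hilbert space.

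Next I would extract the reproducing kernel. For $x\in\X$ and $y\in\Y$, Lemma~\ref{rankone} gives $\Phx^*y\in\ldue$, and since $\scal{2}{\Phx^*y}{\be_0}=\scaly{y}{\Phx\be_0}=0$ for every $\be_0\in\Ker\Ph$, we have $\Phx^*y\in(\Ker\Ph)^\perp$. Setting $K(\cdot,x)y:=\Ph(\Phx^*y)$, its value at $t$ equals $\Phi_t\Phx^*y=\sum_\gG\vg(t)\scaly{y}{\vg(x)}$, which is exactly\eqn{rep_kernel}. For $f=\Ph\be\in\hh$, the partial isometry relation $\Ph^*\Ph=P$ and self-adjointness of $P$ yield
\[\scal{\hh}{f}{K(\cdot,x)y}=\scal{2}{P\be}{\Phx^*y}=\scal{2}{\be}{\Phx^*y}=\scaly{\Phx\be}{y}=\scaly{f(x)}{y},\]
which is the reproducing property. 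For the tight frame identity, observe $\vg=\Ph e_\g$, so $\scal{\hh}{f}{\vg}=\scal{2}{P\be}{e_\g}=(P\be)_\g$, and summing squares gives $\sum_\gG|\scal{\hh}{f}{\vg}|^2=\nor{2}{P\be}^2=\nor{\hh}{f}^2$.

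Finally, for the converse, let $\famG{\vg}$ be a normalized tight frame in an RKHS $\hh$ with reproducing kernel $K$ such that $\Tr{K(x,x)}\leq\ka$ for every $x$. For any orthonormal basis $\{e_j\}$ of $\Y$, the reproducing property gives $\nor{\hh}{K(\cdot,x)e_j}^2=\scaly{K(x,x)e_j}{e_j}$, while the Parseval identity for the tight frame gives $\nor{\hh}{K(\cdot,x)e_j}^2=\sum_\gG|\scaly{e_j}{\vg(x)}|^2$. Summing over $j$ and exchanging sums by Tonelli yields $\Tr{K(x,x)}=\sum_\gG\sum_j|\scaly{e_j}{\vg(x)}|^2=\sum_\gG\nory{\vg(x)}^2$, so the hypothesis on the trace is precisely\eqn{eq:rkhs}. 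The main bookkeeping hazard throughout is handling $P$ correctly when $\Ker\Ph\neq\{0\}$, but everything reduces to two elementary facts—$\Phx\be_0=0$ for $\be_0\in\Ker\Ph$, and $\Ph^*\Ph=P$ for a partial isometry—after which the remaining steps are essentially formal.
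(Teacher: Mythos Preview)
Your argument is correct and complete. The main difference from the paper's proof is one of packaging: the paper disposes of the forward direction in one line by invoking Proposition~2.4 of \cite{cadeto06} (with $\mathcal K=\Y$, $\widehat\hh=\ldue$, $\gamma(x)=\Phx^*$, $A=\Ph$), which already delivers the partial isometry onto an RKHS with kernel $K(x,t)=\Phx\Ph_t^*$; you instead reconstruct that result from scratch by putting the quotient Hilbert structure on $\Ph(\ldue)$ and verifying the reproducing property by hand via $\Phx^*y\in(\Ker\Ph)^\perp$ and $\Ph^*\Ph=P$. Your route is longer but fully self-contained, which is a genuine advantage if one does not want to rely on the external reference; the only small thing you leave implicit is completeness of $\hh$, but this is immediate since your norm makes $\hh$ isometric to the closed subspace $(\Ker\Ph)^\perp\subset\ldue$. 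The tight-frame identity and the converse direction are handled exactly as in the paper (apply the frame/Parseval identity to $K_xy$, use the reproducing property, and sum over an orthonormal basis of $\Y$ to recover $\Tr{K(x,x)}$), so there is no substantive divergence there.
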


\begin{proof}
Proposition~2.4 of \cite{cadeto06} (with ${\mathcal K}=\Y$,
$\widehat{\hh}=\ldue$, $\gamma(x)=\Ph_x^* $ and $A=\Ph$) gives
that $\Ph$ is a partial isometry from $\ldue$ onto the reproducing
kernel Hilbert space $\hh$, with reproducing kernel $K(x,t)$.
Eq.\eqn{kernel} is clear. Since $\Ph$ is a partial isometry with
range $\hh$ and $\Ph e_\g=\vg$ where $\setG{e}$ is a basis in
$\ldue$, then
$\setG{\vp}$ is {normalized tight} frame in $\hh$.\\
To show the converse result, given $x\in X$ and $y\in\Y$, we apply
the definition of a {normalized tight}
 frame {to the function $K_xy$ defined by $(K_xy)(t)=K(t,x)y$.
 $K_xy$  belongs to  $\hh$ by definition of a reproducing kernel Hilbert
 space and is such that the following reproducing property holds
 $\scal{\hh}{f}{K_xy}=\scaly{f(x)}{y}$ for any $f\in\hh$. Then }
\[ \scaly{K(x,x)y}{y}=\nor{\hh}{K_xy}^2=  \sum_{\gG}
|\scal{\hh}{K_xy}{\vg}|^2= \sum_{\gG}|\scaly{y}{\vg(x)}|^2,\]
where we used twice the reproducing {property.}
Now, if $(y_i)_{i\in I}$ is a basis in $\Y$ and $x\in\X$
\begin{multline*}
 \sum_{\gG}\nory{\vg(x)}^2 =\sum_{\gG}\sum_{i\in
  I}|\scaly{y_i}{\vg(x)}|^2=\sum_{i\in
  I} \scaly{K(x,x)y_i}{y_i}= \Tr{K(x,x)}\leq\ka.
  \end{multline*}
\end{proof}

\section{Minimization of the elastic-net  functional}\label{algos}
{In this section, we study the properties of the elastic net estimator $\est$ defined by\eqn{estimatore}. First of all, we characterize
the minimizer of the elastic-net  functional\eqn{elnetfunc}} as the unique fixed point of
a contractive map. Moreover, we characterize some sparsity
properties of the estimator and propose a natural iterative
soft-thresholding algorithm to compute it. {Our algorithmic
approach is totally different from the method proposed in
\cite{zhuhas05}, where $\est$ is computed by first reducing the
problem to the case of a pure $\ell_1$ penalty and then applying
the LARS algorithm \cite{efhajo04}.}

{In the following we make use the of the following vector notation.}
Given  a sample of $n$ i.i.d. observations
$(X_1,Y_1),\ldots,(X_n,Y_n)$,  and using the operators defined in
the previous section, we can rewrite {the elastic-net
  functional\eqn{elnetfunc} as}
\beeq{vecform}{ \In(\be)=\nor{n}{\Pn\be- Y}^2+ \rp \pen{\be},}
{where the $\pen{\cdot}$ is the elastic net penalty defined
by \eqn{enetpen}.}
\subsection{Fixed point equation}
The main difficulty in minimizing \eqn{vecform} is that the
functional is not differentiable because of the presence of the
$\ell_1$-term in the  penalty. Nonetheless the convexity of such
term  enables us to use tools from subdifferential calculus.
 Recall that, if $F: \ldue \to
\runo$ is a convex functional, the subgradient at a point
$\beta\in \ldue$ is the set of elements $\eta\in \ldue$
such that
$$
F(\beta+\beta') \geq F (\beta) + \scal{2}{\eta}{\beta'}\qquad
\forall \beta'\in \ldue .
$$
The subgradient at $\beta$ is denoted by $\partial F(\beta)$, see \cite{ekeland}. We  compute the subgradient of the
convex functional $\pen{\be}$, using the following definition of
$\sgn{t}$
\beeq{sub1}{\left\{\begin{array}{lcl}\sgn{t}=1 & \text{if } t>0 \\
\sgn{t}\in[-1,1] & \text{if } t=0 \\
\sgn{t}=-1 & \text{if } t<0.\end{array}\right. }

We first state the following lemma.
\begin{lemma}\label{penalty}
The functional $\pen{\cdot}$ is a convex, lower semi-continuous
(l.s.c.) functional from $\ldue$ into $[0,\infty]$. Given
$\be\in\ldue$, a vector $\eta\in\partial\pen{\be}$ if and only if
\[\eta_\g=\wg \sgn{\bg} + 2 \eps  \bg \ \ \forall \gG \qquad\text{and}\qquad \sum_{\gG}\eta_\g^2<+\infty .\]
\end{lemma}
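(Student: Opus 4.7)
The plan is to separate the three claims: convexity, lower semi-continuity, and the explicit form of the subdifferential, and then reduce each to a one-dimensional statement about the scalar function $f_\gamma(t)=\wg|t|+\eps t^2$.

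For convexity and l.s.c., I would write
\[
\pen{\be}=\sup_{F\subset\G,\ F\text{ finite}}\sum_{\g\in F}\bigl(\wg|\bg|+\eps\bg^2\bigr).
\]
Each finite partial sum is a finite linear combination of the convex continuous functions $\be\mapsto|\bg|$ and $\be\mapsto\bg^2$ (continuity follows since the coordinate evaluation $\be\mapsto\bg$ is continuous on $\ldue$). Hence each partial sum is convex and continuous on $\ldue$, and taking the supremum over finite subsets preserves convexity and yields lower semi-continuity, with values in $[0,+\infty]$.

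For the subdifferential characterization, first note that if $\pen{\be}=+\infty$ then $\partial\pen{\be}=\emptyset$ (choose $\be'=-\be$ in the subgradient inequality to get the contradiction $+\infty\leq\pen{0}+\scal{2}{\eta}{-\be}$), so we may assume $\pen{\be}<+\infty$. For the sufficient direction, suppose $\eta\in\ldue$ has the prescribed form $\eta_\g=\wg s_\g+2\eps\bg$ with $s_\g\in\sgn{\bg}$ in the sense of\eqn{sub1}. For each $\g$, elementary one-dimensional convex analysis gives, for every $t\in\runo$,
\[
\wg|\bg+t|+\eps(\bg+t)^2\ \geq\ \wg|\bg|+\eps\bg^2+\bigl(\wg s_\g+2\eps\bg\bigr)\,t.
\]
Applying this with $t=\bg'$, summing over $\g$ (the series on the right is absolutely convergent by Cauchy–Schwarz since both $\eta$ and $\be'$ are in $\ldue$), one gets $\pen{\be+\be'}\geq\pen{\be}+\scal{2}{\eta}{\be'}$.

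For the necessary direction, given $\eta\in\partial\pen{\be}$, testing the subgradient inequality at $\be'=t\,e_\g$ for arbitrary $t\in\runo$ yields
\[
\wg|\bg+t|+\eps(\bg+t)^2\ \geq\ \wg|\bg|+\eps\bg^2+\eta_\g t,
\]
since all other components of $\pen{\be+te_\g}-\pen{\be}$ cancel. This is exactly the condition that $\eta_\g$ lies in the subdifferential of $f_\g$ at $\bg$, which as a standard one-dimensional computation is $\{\wg s+2\eps\bg:s\in\sgn{\bg}\}$ with $\sgn{\cdot}$ given by\eqn{sub1}. Finally, the condition $\sum_\g\eta_\g^2<\infty$ is automatic since by definition the subdifferential of a convex function on the Hilbert space $\ldue$ consists of elements of $\ldue$ itself. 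The only delicate point worth flagging is the convergence issue when summing the componentwise inequalities in the sufficient direction; everything else is a routine reduction to the scalar case.
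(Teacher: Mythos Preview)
Your proof is correct and follows essentially the same route as the paper: write $\pen{\be}$ as a supremum of finite partial sums to get convexity and lower semi-continuity, then reduce the subdifferential computation to the scalar function $t\mapsto\wg|t|+\eps t^2$ by testing with $\be'=t e_\g$ for necessity and summing the componentwise inequalities for sufficiency. Your extra remark handling the case $\pen{\be}=+\infty$ is a welcome addition the paper omits (note a small sign slip there: after setting $\be'=-\be$ the inequality reads $\pen{0}\geq\pen{\be}+\scal{2}{\eta}{-\be}$, so the contradiction is $+\infty\leq\pen{0}+\scal{2}{\eta}{\be}$).
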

\begin{proof}
Define the map $F:\G\times\runo\to [0,\infty]$
\[F(\g,t)= \wg |t| + \eps t^2.\]
Given $\gG$, $F(\g,\cdot)$ is a convex, continuous function and
its subgradient is
\[\partial F(\g,t)=\set{\tau\in\runo\mid \tau= \wg \sgn{t}+2 \eps  t},\]
where we used the fact that the subgradient of $|t|$ is given by
$\sgn{t}$. Since
\[\pen{\be}= \sum_{\gG} F(\g,\bg) =\sup_{\G'\text{ finite}} \sum_{\g\in \G'} F(\g,\bg)\]
and $\be\mapsto \bg$ is continuous, a standard result of convex
analysis \cite{ekeland} ensures that
$\pen{\cdot}$ is convex and lower semi-continuous.\\
The computation of the subgradient is standard. Given
$\be\in\ldue$ and $\eta\in\partial\pen{\be}\subset\ldue$, by the
definition of a subgradient,
\[\sum_{\gG} F(\g,\bg+\bg') \geq \sum_{\gG} F(\g,\bg) + \sum_{\gG}
\eta_\g \bg' \qquad \forall \be'\in\ldue.\] Given $\gG$, choose
$\be'=t e_\g$ with $t\in\runo$, it follows that $\eta_\g$ belongs
to the subgradient of $F(\g,\bg)$, that is, \beeq{1}{\eta_\g=\wg
\sgn{\bg} + 2 \eps  \bg.} Conversely, if\eqn{1} holds for all
$\gG$, by definition of a subgradient
\[F(\g,\bg+\bg')\geq  F(\g,\bg) + \eta_\g \bg'.\]
By summing over $\gG$ and taking into account the fact that
$\famG{\eta_\g\bg}\in\ell_1$, then
\[\pen{\be+\be'}\geq \pen{\be} +\scal{2}{\eta}{\be'}.\]
\end{proof}
To state our main result about the characterization of the
minimizer of \eqn{vecform}, we need to introduce the
soft-thresholding function ${\mathcal S}_\rp:\runo\to\runo$,
$\rp>0$ which is defined by
\beeq{thresholds}{\Ss{\rp}{t}=\left\{\begin{array}{lccl}
t-\frac{\rp}{2} &
\text{\rm if } & t>\frac{\rp}{2}  \\
0 & \text{\rm if } & |t|\leq \frac{\rp}{2} \\
t+\frac{\rp}{2} & \text{\rm if } & t <-\frac{\rp}{2}
\end{array}\right.,}
and the corresponding nonlinear thresholding operator ${\mathbf
S}_\rp:\ldue\to \ldue$ acting componentwise as
\beeq{threshold}{[\Sh{\rp}{\be}]_\g  =  \Ss{\rp\wg}{\bg}.}
We note that the soft-thresholding operator satisfies
\begin{eqnarray}
\Sh{a\rp}{a \be} & = & a \Sh{\rp}{\be}\qquad a>0, \be\in\ldue \label{dil1},\\
  \nor{2}{\Sh{\rp}{\be}- \Sh{\rp}{\be'}} & \leq &
  \nor{2}{\be-\be'}\qquad \be,\be'\in\ldue\label{lipS}.
\end{eqnarray}
These properties are immediate consequences of the fact that
\begin{eqnarray*}
  \Ss{a\rp}{a t} & = & a \Ss{\rp}{t}\qquad a>0, t\in\runo \\
  |\Ss{\rp}{t}- \Ss{\rp}{t'}| & \leq & |t-t'|\qquad t,t'\in\runo.
\end{eqnarray*}
Notice that\eqn{lipS} with $\be'=0$ ensures that
$\Sh{\rp}{\be}\in\ldue$ for all $\be\in\ldue$.

We are ready to prove the following theorem.
\begin{theo}\label{repre_n} 
Given $\eps\geq 0$ and $\rp>0$, a vector $\be\in\ldue$ is a
minimizer of {the elastic-net functional$\eqn{elnetfunc}$} if and only if it solves the
nonlinear equation 
\beeq{repres4}{\frac{1}{n}\sum_{i=1}^n
\scaly{Y_i -(\Pn\be)(X_i)}{\vg(X_i)}-\eps\rp\bg =\frac{\rp}{2} \wg
\sgn{\bg}\qquad \forall \gG,} 
or, equivalently, 
\beeq{repres3}{
\be=\Sh{\rp}{(1-\eps\rp)\be+\Pn^*(Y-\Pn\be)}.} If $\eps>0$ the
solution always exists and is unique. If $\eps=0$, $\ls>0$ and
$w_0=\inf_{\gG} \wg>0$, the solution still exists and is unique.
\end{theo}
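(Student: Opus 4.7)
My plan is to derive the two equivalent optimality conditions from first-order subdifferential calculus, and then handle existence and uniqueness separately from standard convex analysis.

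The first step is to apply the Fermat rule: since $\In$ is a proper convex function on $\ldue$, a vector $\be$ is a minimizer if and only if $0\in\partial\In(\be)$. The empirical risk $\be\mapsto\nor{n}{\Pn\be-Y}^2$ is everywhere Fréchet differentiable with gradient $2\Pn^*(\Pn\be-Y)$, so no qualification issue arises and
\[
\partial\In(\be)=-2\Pn^*(Y-\Pn\be)+\rp\,\partial\pen{\be}.
\]
Now Lemma~\ref{penalty} says that $\eta\in\partial\pen{\be}$ if and only if $\eta_\g=\wg\sgn{\bg}+2\eps\bg$ for every $\gG$ (and $\eta\in\ldue$). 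Requiring $0\in\partial\In(\be)$ componentwise, and using the expression for $\Pn^*$ from Proposition~\ref{lemma0}, yields exactly\eqn{repres4} after dividing by $2$.

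For the equivalence with the fixed-point equation\eqn{repres3}, I would add $\bg$ to both sides of\eqn{repres4} to get, componentwise,
\[
\bg+\tfrac{\rp\wg}{2}\sgn{\bg}=(1-\eps\rp)\bg+\bigl(\Pn^*(Y-\Pn\be)\bigr)_\g =:v_\g.
\]
A direct case analysis on the sign of $\bg$, using the definition\eqn{sub1} of $\sgn{\cdot}$, shows this scalar equation is equivalent to $\bg=\Ss{\rp\wg}{v_\g}$: if $\bg>0$ then $v_\g>\rp\wg/2$ and $\Ss{\rp\wg}{v_\g}=v_\g-\rp\wg/2=\bg$; the case $\bg<0$ is symmetric; and $\bg=0$ corresponds exactly to $|v_\g|\leq\rp\wg/2$. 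Bundling these componentwise identities via the operator $\Sh{\rp}{\cdot}$ defined in\eqn{threshold} produces\eqn{repres3}.

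For existence and uniqueness, I would argue by convex analysis rather than by contractivity (the Banach argument with a relaxation constant $\tau$ is a separate story and does not give uniqueness directly when $\eps=0$). The functional $\In$ is convex and lower semicontinuous by Lemma~\ref{penalty} and continuity of the empirical risk; since $\ldue$ is a Hilbert space, weak lower semicontinuity follows, so it suffices to verify coercivity and strict convexity. When $\eps>0$, the term $\rp\eps\nor{2}{\be}^2$ makes $\In$ $2\rp\eps$-strongly convex, hence coercive and strictly convex, so the minimizer exists and is unique. When $\eps=0$ with $\ls>0$ and $w_0>0$, the assumption $\ls>0$ forces $\Pn^*\Pn$ to be strictly positive definite on $\ldue$, which makes the quadratic term $\nor{n}{\Pn\be-Y}^2$ strictly convex; coercivity follows from
\[
\In(\be)\geq\rp w_0\nor{1}{\be}\geq\rp w_0\nor{2}{\be},
\]
using the standard bound $\nor{2}{\be}\leq\nor{1}{\be}$ on $\ell_2$. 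Existence then follows from weak lower semicontinuity plus coercivity on a reflexive space, and uniqueness from strict convexity.

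The only delicate point is the coercivity argument in the infinite-dimensional setting, which must rely on weak compactness rather than the Heine--Borel theorem; everything else is routine once Lemma~\ref{penalty} and the soft-thresholding identities are in place.
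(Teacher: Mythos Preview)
Your proposal is correct and follows essentially the same route as the paper: Fermat's rule plus the sum rule for the subdifferential (the paper cites Corollary~III.2.1 of \cite{ekeland}), Lemma~\ref{penalty} for the componentwise form of $\partial\pen{\be}$ to obtain\eqn{repres4}, then a scalar case analysis to pass to the soft-thresholding fixed-point form\eqn{repres3}, and finally the standard convex-analysis argument (strict convexity, coercivity, l.s.c., reflexivity of $\ldue$) for existence and uniqueness, with the coercivity in the $\eps=0$ case coming from $\nor{2}{\be}\leq w_0^{-1}\nor{1,w}{\be}$ exactly as in the paper's\eqn{daneo}. The only cosmetic difference is that the paper writes the case analysis in terms of $\bg'=v_\g-\bg$ rather than $v_\g$ directly.
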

\begin{proof}
If $\eps>0$ the functional $\In$ is  strictly convex, finite at
$0$,  and it is coercive by
\[\In(\be)\geq\pen{\be}\geq \rp \eps \nor{2}{\be}^2.\]
{Observing that $\nor{n}{\Pn\be-Y}^2$ is continuous and, by
  Lemma~\ref{penalty}, the elastic-net penalty is l.s.c., then $\In$ is
l.s.c. and, since} $\ldue$  is reflexive, there is a unique minimizer $\est$ in
$\ldue$.  If $\eps=0$, $\In$ is convex, but the fact that $\ls>0$
ensures that the minimizer is unique. Its existence follows from
the observation that
\[\In(\be)\geq\pen{\be}\geq \rp  \nor{1,w}{\be}\geq\rp w_0\nor{2}{\be},\]
where we used\eqn{daneo}.
In both cases the convexity of $\In$ implies that $\be$ is a
minimizer if and only if $0\in\partial\In(\be)$. Since
$\nor{n}{\Pn\be-Y}^2$ is continuous, Corollary III.2.1 of
\cite{ekeland} ensures that the subgradient is linear. Observing
that $\nor{n}{\Pn\be-Y}^2$ is differentiable with derivative
$2\Pn^*\Pn\be-2 \Pn^* Y$, we get
\[\partial\In(\be)= 2\Pn^*\Pn\be-2 \Pn^* Y + \rp \partial\pen{\be}.\]
Eq.\eqn{repres4} follows taking into account the explicit form of
$\partial\pen{\be}$, $\Pn^*\Pn \beta$ and $\Pn^*Y$, given by
Lemma~\ref{penalty} and 
Proposition~\ref{lemma0}, respectively. \\
We now prove\eqn{repres3}, which is equivalent to the set of
equations
 \beeq{repres5}{ \bg=\Ss{\rp\wg}{(1-\eps\rp) \bg+
\frac{1}{n}\sum_{i=1}^n  \scaly{Y_i
-(\Pn\be)(X_i)}{\vg(X_i)}}\qquad\forall\gG.} Setting $\bg'=
\scal{n}{Y-\Pn\be}{\vg(X)} -\eps\rp \bg$, we have
$\bg=\Ss{\rp\wg}{\bg+\bg'}$ if and only if
\[\bg=\left\{\begin{array}{lccl} \bg+\bg'- \frac{\rp\wg}{2} & \text{\rm if }& \bg+\bg'>
    \frac{\rp\wg}{2} \\
0 & \text{\rm if } & |\bg+\bg'|\leq \frac{\rp\wg}{2}\\
\bg+\bg'+ \frac{\rp\wg}{2} & \text{\rm if } & \bg+\bg'
<-\frac{\rp\wg}{2}  \end{array}\right.,
\]
that is,
\[\left\{\begin{array}{lcl} \bg'= \frac{\rp\wg}{2} & \text{if } \bg> 0 \\
|\bg'|\leq \frac{\rp\wg}{2}  & \text{if } \bg=0 \\
\bg' = - \frac{\rp\wg}{2} & \text{if } \bg <0
\end{array}\quad\hbox{ or else}\quad \bg'=\frac{\rp\wg}{2}\
\sgn{\bg}\right.
\]
which is equivalent to \eqn{repres4}.
\end{proof}
The following corollary gives some more information about the
characterization of the solution as the fixed point of a
contractive map. In particular, it provides an explicit expression
for the Lipschitz constant of this map and it shows how it depends
on the spectral properties of the empirical {mean of the} Gram matrix and on the
regularization parameter $\rp$.
\begin{corollary}\label{fixed} 
Let $\eps\geq 0$ and $\rp>0$. {Pick up any arbitrary
$\tau>0$. Then $\be$ is a minimizer of $\In$ in $\ldue$ if and
only if it is a fixed point of {the following Lipschitz map
$\T_n:\ldue\to \ldue$, namely}
 \beeq{n_contraction}{
\be=\T_n\be\qquad\text{where}\qquad \T_n\be=
\frac{1}{\tau+\eps\rp}\Sh{\rp}{(\tau I -\Pn^*\Pn)\be+ \Pn^* Y}.}
}
With the choice $\tau=\frac{\ls+\ka}{2}$, the Lipschitz constant
is bounded by
\[q= \frac{\ka-\ls}{\ka+\ls+ 2\eps\rp}\leq 1.\]
In particular, {with this choice of $\tau$ and} if $\eps>0$
or $\ls>0$, ${\T_n}$ is a contraction.
\end{corollary}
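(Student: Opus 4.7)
The plan is to reduce everything to Theorem~\ref{repre_n} and then perform a spectral estimate.

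First I would establish the equivalence between being a minimizer of $\In$ and being a fixed point of $\T_n$. By Theorem~\ref{repre_n}, $\be$ is a minimizer if and only if equation\eqn{repres4} holds, which can be rewritten in vector form as
\[
\Pn^*Y - \Pn^*\Pn\be - \eps\rp\be = \tfrac{\rp}{2}\, w\odot\sgn{\be}.
\]
Adding $\tau\be$ to both sides, this becomes
\[
(\tau I - \Pn^*\Pn)\be + \Pn^*Y = (\tau+\eps\rp)\be + \tfrac{\rp}{2}\, w\odot\sgn{\be}.
\]
Then I would apply componentwise the very same soft-thresholding equivalence used in the proof of\eqn{repres3}: namely, for any $c>0$, the identity $c\,v_\gamma + \tfrac{\rp\wg}{2}\sgn{v_\gamma}=u_\gamma$ is equivalent to $c\,v_\gamma=\Ss{\rp\wg}{u_\gamma}$ (verified case by case according to the sign of $v_\gamma$, exactly as at the end of the proof of Theorem~\ref{repre_n}). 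Taking $c=\tau+\eps\rp$ yields precisely $(\tau+\eps\rp)\be = \Sh{\rp}{(\tau I-\Pn^*\Pn)\be + \Pn^*Y}$, i.e.\eqn{n_contraction}.

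Next I would compute the Lipschitz constant of $\T_n$. By the non-expansiveness property\eqn{lipS} of $\Sh{\rp}{\cdot}$,
\[
\nor{2}{\T_n\be - \T_n\be'} \leq \frac{1}{\tau+\eps\rp}\,\nor{2}{(\tau I - \Pn^*\Pn)(\be-\be')}.
\]
The operator $\Pn^*\Pn$ is self-adjoint, and from the bounds recalled in Section~\ref{sec:op} its spectrum is contained (almost surely) in $[\ls,\ka]$. Therefore the spectrum of $\tau I - \Pn^*\Pn$ lies in $[\tau-\ka,\tau-\ls]$, and with the symmetric choice $\tau=\tfrac{\ls+\ka}{2}$ this interval becomes $[-\tfrac{\ka-\ls}{2},\tfrac{\ka-\ls}{2}]$, yielding operator norm $\tfrac{\ka-\ls}{2}$. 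Plugging this into the previous inequality gives the announced bound
\[
\nor{2}{\T_n\be - \T_n\be'} \leq q\,\nor{2}{\be-\be'},\qquad q=\frac{\ka-\ls}{\ka+\ls+2\eps\rp}.
\]

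Finally, $q\leq 1$ is immediate since $\ls\geq 0$ and $\eps\rp\geq 0$; and $q<1$ whenever $\ls>0$ or $\eps>0$, proving that $\T_n$ is then a strict contraction. There is no serious obstacle here: the only thing that requires care is the passage from\eqn{repres4} to\eqn{n_contraction}, which is a direct rerun of the soft-thresholding bookkeeping already performed in the proof of Theorem~\ref{repre_n}, combined with the elementary spectral calculation.
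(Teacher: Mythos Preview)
Your proof is correct. The Lipschitz estimate is identical to the paper's; the difference lies in how you derive the fixed-point equation\eqn{n_contraction}. The paper observes that minimizing $\In$ is the same as minimizing $\frac{1}{\tau+\eps\rp}\In$, which amounts to replacing $\rp$, $\Pn$, $Y$ in\eqn{repres3} by $\frac{\rp}{\tau+\eps\rp}$, $\frac{1}{\sqrt{\tau+\eps\rp}}\Pn$, $\frac{1}{\sqrt{\tau+\eps\rp}}Y$, and then invokes the dilation property\eqn{dil1} with $a=\frac{1}{\tau+\eps\rp}$ to pull the factor outside the thresholding. You instead go back to the first-order condition\eqn{repres4}, add $\tau\be$ to both sides, and redo the soft-thresholding case analysis with the extra constant $c=\tau+\eps\rp$. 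Your route is a bit more hands-on but perfectly sound; the paper's route is slicker in that it reuses\eqn{repres3} and\eqn{dil1} wholesale rather than reopening the componentwise bookkeeping. Both arrive at the same place with essentially the same effort.
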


\begin{proof}
Clearly $\be$ is a minimizer of $\In$ if and only if it is a
minimizer of $\frac{1}{\tau+\eps\rp}\In$, which means that,
in\eqn{repres3}, we can replace $\rp$ with
$\frac{\rp}{\tau+\eps\rp}$, $\Pn$ by
$\frac{1}{\sqrt{\tau+\eps\rp}}\Pn$ and $Y$ by
$\frac{1}{\sqrt{\tau+\eps\rp}}Y$.  Hence $\be$ is a minimizer of
$\In$ if and only if it is a solution of
\[\be  =  \Sh{\frac{\rp}{\tau+\eps\rp}}{(1 -
\frac{\eps\rp}{\tau+\eps\rp})\be + \frac{1}{\tau+\eps\rp}
  \Pn^* (Y-\Pn\be)}.\]
Therefore, by\eqn{dil1} {with $a=\frac{1}{\tau+\eps\rp}$}, $\be$
is a minimizer of $\In$ if
  and only if $\be={\T_n}\be$.\\
We show that ${\T_n}$ is Lipschitz and calculate explicitly a bound on
the Lipschitz constant. By assumption we have $\ls I \leq
\Pn^*\Pn\leq \ka I$; then, by the Spectral Theorem,
\[ \norop{\ldue}{\ldue}{\tau I-\Pn^*\Pn}\leq\max\set{|\tau-\ls |, |\tau-\ka |},\]
{where $\norop{\ldue}{\ldue}{\cdot}$ denotes the operator norm of a bounded operator on $\ell_2$. Hence,} using \eqn{lipS}, we get
\begin{eqnarray*}
\nor{2}{{\T_n} \be-{\T_n} \be'} & \leq & \frac{1}{\tau+\eps\rp}
\nor{2}{(\tau I-\Pn^*\Pn) (\be- \be')} \\
& \leq & \max\set{|\frac{\tau-\ls }{\tau+\eps\rp}|,
  |\frac{\tau-\ka }{\tau+\eps\rp}|} \nor{2}{\be-\be'}\\
&=:&  q \nor{2}{\be-\be'}.
\end{eqnarray*}
The minimum of $q$ with respect to $\tau$ is obtained for
\[\frac{\tau-\ls }{\tau+\eps\rp}=\frac{\ka -\tau}{\tau+\eps\rp},\]
that is, $\tau= \frac{\ka +\ls }{2}$, and, with this choice, we
get
\[q= \frac{\ka -\ls }{\ka +\ls + 2\eps\rp}\ .\]
\end{proof}
By inspecting the proof, we notice that the choice of
$\tau=\frac{\ls+\ka}{2}$ provides the best possible Lipschitz
constant under the assumption that $\ls I\leq\Pn^*\Pn\leq\ka I$.
If $\eps>0$ or $\ls>0$, ${\T_n}$ is a contraction and
$\est$ can be computed by means of the Banach fixed point theorem.
If $\eps=0$ and $\ls=0$, ${\T_n}$ is only non-expansive, so that
proving the convergence of the successive approximation scheme is
not straightforward\footnote{Interestingly, it was proved in
\cite{daub} using different arguments that the same iterative
scheme can still be used for the case $\eps=0$ and $\ls=0$.}.

Let us now write down explicitly the iterative procedure suggested
by Corollary~\ref{fixed} to compute $\est$.
{Define} the iterative scheme by
\begin{eqnarray*}
\beta^0 & = & 0,\\
\be^\ell&=&
 \frac{1}{\tau+\eps\rp}\Sh{\rp}{(\tau I -\Pn^* \Pn)\be^{\ell-1}+ \Pn^* Y}\\
\end{eqnarray*}
with $\tau  = \frac{\ls+\ka}{2}$.
The following corollary shows that the $\be^\ell$
converges to $\est$ when $\ell$ goes to infinity.
\begin{corollary}\label{banach} 
{Assume that $\eps>0$ or $\ls>0$.}
For any $\ell\in\nat$ the following inequality holds
\beeq{banach_bound}{\nor{2}{\be^\ell-\est}\leq
 \frac{(\ka-\ls)^\ell}{(\ka+\ls+2\eps\rp)^{\ell}(\ls+\eps\rp)}  \nor{2}{\Pn^*Y}.}
{In particular,} $\lim_{\ell\to\infty}\nor{2}{\be^\ell-\est}=0$.
\end{corollary}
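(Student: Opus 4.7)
The plan is to apply the standard a priori error bound of the Banach fixed point theorem to the map $\T_n$. By Corollary~\ref{fixed} (with $\tau = (\ls+\ka)/2$), under the hypothesis $\eps>0$ or $\ls>0$ the map $\T_n$ is a contraction on $\ldue$ with Lipschitz constant $q = (\ka-\ls)/(\ka+\ls+2\eps\rp) < 1$, and its unique fixed point is $\est$. The iteration $\be^\ell = \T_n \be^{\ell-1}$ with $\be^0 = 0$ thus converges to $\est$, and the Banach theorem yields the standard estimate
\[
\nor{2}{\be^\ell - \est} \;\leq\; \frac{q^\ell}{1-q}\,\nor{2}{\be^1 - \be^0}.
\]

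Next I would compute $\be^1 = \T_n(0)$ explicitly. Since $\be^0 = 0$, we have $\be^1 = \frac{1}{\tau+\eps\rp}\Sh{\rp}{\Pn^* Y}$. Because $\Sh{\rp}{0}=0$ componentwise (as $|0|\leq \rp \wg/2$), the Lipschitz bound \eqn{lipS} applied with $\be'=0$ gives $\nor{2}{\Sh{\rp}{\Pn^*Y}}\leq \nor{2}{\Pn^*Y}$, so
\[
\nor{2}{\be^1 - \be^0} \;\leq\; \frac{1}{\tau+\eps\rp}\,\nor{2}{\Pn^*Y}.
\]

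It then remains to simplify $\frac{q^\ell}{(1-q)(\tau+\eps\rp)}$ with $\tau=(\ls+\ka)/2$. A direct calculation gives $1-q = \frac{2(\ls+\eps\rp)}{\ka+\ls+2\eps\rp}$ and $\tau+\eps\rp = \frac{\ka+\ls+2\eps\rp}{2}$, so the two factors of $\ka+\ls+2\eps\rp$ cancel and we are left with $\frac{q^\ell}{\ls+\eps\rp}$. Substituting $q^\ell = (\ka-\ls)^\ell/(\ka+\ls+2\eps\rp)^\ell$ gives exactly the constant appearing in \eqn{banach_bound}. The limit statement $\lim_{\ell\to\infty}\nor{2}{\be^\ell-\est}=0$ is immediate since $q<1$ implies $q^\ell\to 0$.

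There is essentially no main obstacle here: the argument is a bookkeeping exercise built on Corollary~\ref{fixed} and the non-expansiveness of $\Sh{\rp}$. The only mild subtlety is to make sure that the constants in $q$, $1-q$ and $\tau+\eps\rp$ are combined so that the stated closed-form bound emerges; the strict positivity of $\ls+\eps\rp$ under the standing hypothesis $\eps>0$ or $\ls>0$ guarantees that the denominator in \eqn{banach_bound} is non-zero.
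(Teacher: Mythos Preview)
Your proof is correct and follows essentially the same route as the paper: invoke Corollary~\ref{fixed} to obtain the contraction constant $q$, apply the standard Banach a priori estimate $\nor{2}{\be^\ell-\est}\le \frac{q^\ell}{1-q}\nor{2}{\be^1-\be^0}$, bound $\nor{2}{\be^1}$ via the non-expansiveness\eqn{lipS} of $\Sh{\rp}{\cdot}$, and simplify the constants. The paper merely spells out the derivation of the a priori estimate via a short telescoping argument rather than quoting it, but the content is identical.
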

\begin{proof}
Since ${\T_n}$ is a contraction with Lipschitz constant $q=
\frac{\ka-\ls}{\ka+\ls+ 2\eps\rp}<1$, the Banach fixed point
theorem applies and the sequence
$\left(\be^\ell\right)_{\ell\in\nat}$ converges to the unique
fixed point of ${\T_n}$, which is $\est$ by Corollary~\ref{fixed}.
Moreover we can use the Lipschitz property of ${\T_n}$ to write
\begin{eqnarray*}
\nor{2}{\be^\ell -\est} &\le& \nor{2}{\be^\ell -\be^{\ell+1}} +
\nor{2}{\be^{\ell+1} -\est}\\
 &\le & q{ \nor{2}{\be^{\ell-1} -\be^{\ell}} + q
\nor{2}{\be^{\ell} -\est}}\\
&\le& q^\ell \nor{2}{\be^0 -\be^1} +q \nor{2}{\be^\ell -\est}\ ,
\end{eqnarray*}
so that we immediately get
\[
\nor{2}{\be^\ell -\est} \leq  \frac{q^{\ell}}{1-q}
\nor{2}{\be^1-\be^0}\leq
\frac{(\ka-\ls)^\ell}{(\ls+\ka+2\eps\rp)^{\ell}(\ls+\eps\rp)}
\nor{2}{\Pn^* Y}
\]
since $\be^0=0$, $\be^1=\frac{1}{\tau+\eps\rp}\Sh{\rp}{\Pn^*
  Y}$ and $1-q=\frac{2(\ls+\eps\rp)}{\ls+\ka+2\eps\rp}$.
\end{proof}

Let us remark that the bound \eqn{banach_bound} provides a natural
stopping rule for the number of iterations, namely to select
$\ell$ such that $\nor{2}{\be^\ell-\est}\leq{\eta}$, where
${\eta}$ is a bound on the distance between the estimator $\est$
and the true solution. For example, if $\nor{2}{\Pn^*Y}$ is
bounded by $M$ and if $\ls=0$, the stopping rule is
\[\ell_{\text{stop}}
\geq \frac{\log \frac{M}{\eps\rp{\eta}}}{\log (1+\frac{2
\eps\rp}{\ka})} \qquad\text{so that}\qquad
\nor{2}{\be^{\ell_{\text{stop}}}-\est}\leq{\eta}.\]

Finally we notice that all previous results also hold when
considering the distribution-dependent version of the method. The following
proposition summarizes the results in this latter case.
\begin{prop}\label{minimizer}
Let $\eps\geq 0$ and $\rp>0$. Pick up any arbitrary $\tau
>0$. Then a vector $\be\in\ldue$ is a minimizer of
\[\I(\be)= \ex{\nory{\PX \be -Y}^2} + \rp \pen{\be}.\]
if and only if it is a fixed point of the following Lipschitz map, namely
\beeq{contraction}{\be=\T\be\qquad\text{where}\qquad\T\be=
  \frac{1}{\tau+\eps\rp}\Sh{\rp}{(\tau
I -\PX^*\PX)\be+ \PX^* Y}.}
If $\eps>0$ or $\ls>0$, the minimizer
is unique.
\end{prop}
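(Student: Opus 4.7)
The plan is to transplant the argument used in the proofs of Theorem~\ref{repre_n} and Corollary~\ref{fixed} from the empirical setting to the distribution-dependent one, replacing $\Pn$, $\Pn^*\Pn$ and $\Pn^*Y$ by their population analogues $\PX$, $\PX^*\PX$ and $\PX^*Y$ supplied by Proposition~\ref{lemma1}. Because the two variational problems have identical structure at the operator level, essentially every step carries over verbatim, so I do not expect a genuine obstacle; the only mild subtlety is to check at each step that $\PX^*Y\in\ldue$ and $\PX^*\PX\in\hs$, which is already guaranteed by Proposition~\ref{lemma1}.

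First I would establish existence and uniqueness of the minimizer. When $\eps>0$, the functional $\I$ is strictly convex, finite at $\be=0$ (since $Y\in\Ldue$), lower semi-continuous (the quadratic term is continuous and $\pen{\cdot}$ is l.s.c.\ by Lemma~\ref{penalty}), and coercive through $\I(\be)\geq\rp\eps\nor{2}{\be}^2$; reflexivity of $\ldue$ then delivers a unique minimizer. When $\eps=0$ but $\ls>0$, the lower spectral bound $\scal{2}{\PX^*\PX\be}{\be}\geq\ls\nor{2}{\be}^2$ gives both strict convexity of the quadratic part and coercivity via $\I(\be)\geq\ls\nor{2}{\be}^2-2\sqrt{\ls}\,\nor{\PP}{Y}\,\nor{2}{\be}+\rp\pen{\be}$.

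Next I would derive the optimality condition through subdifferential calculus. By convexity, $\be$ minimizes $\I$ iff $0\in\partial\I(\be)$. The quadratic term $\nor{\PP}{\PX\be-Y}^2$ is continuous and Fr\'echet-differentiable with gradient $2\PX^*\PX\be-2\PX^*Y$, so Corollary~III.2.1 of \cite{ekeland} yields $\partial\I(\be)=2\PX^*\PX\be-2\PX^*Y+\rp\,\partial\pen{\be}$. Combining this with the description of $\partial\pen{\be}$ in Lemma~\ref{penalty} gives the componentwise optimality condition
\[(\PX^*Y)_\g-(\PX^*\PX\be)_\g-\eps\rp\,\bg=\tfrac{\rp}{2}\wg\sgn{\bg},\qquad\forall\gG,\]
which is the distribution-dependent analogue of\eqn{repres4}.

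Finally I would convert the optimality condition into fixed-point form. Repeating verbatim the algebraic manipulation at the end of the proof of Theorem~\ref{repre_n} (setting $\bg'=[\PX^*(Y-\PX\be)]_\g-\eps\rp\,\bg$ and observing that the relation $\bg'=\tfrac{\rp\wg}{2}\sgn{\bg}$ is equivalent to $\bg=\Ss{\rp\wg}{\bg+\bg'}$) yields $\be=\Sh{\rp}{(1-\eps\rp)\be+\PX^*(Y-\PX\be)}$. To accommodate an arbitrary $\tau>0$, I would then invoke the rescaling trick of Corollary~\ref{fixed}: $\be$ minimizes $\I$ iff it minimizes $\tfrac{1}{\tau+\eps\rp}\I$, which corresponds to the substitutions $\rp\mapsto\rp/(\tau+\eps\rp)$, $\PX\mapsto\PX/\sqrt{\tau+\eps\rp}$, $Y\mapsto Y/\sqrt{\tau+\eps\rp}$, after which applying the homogeneity\eqn{dil1} with $a=1/(\tau+\eps\rp)$ produces $\be=\T\be$ in exactly the form stated in\eqn{contraction}.
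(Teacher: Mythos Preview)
Your proposal is correct and matches the paper's own treatment: the paper does not give an independent proof of Proposition~\ref{minimizer} but simply states that ``all previous results also hold when considering the distribution-dependent version of the method,'' referring back to Theorem~\ref{repre_n} and Corollary~\ref{fixed} with $\Pn$ replaced by $\PX$. Your write-up is exactly that transplant, with the details spelled out; the only cosmetic slip is in the coercivity inequality for the case $\eps=0$, $\ls>0$, where the cross term should carry $\nor{2}{\PX^*Y}$ (or $\sqrt{\ka}\,\nor{\PP}{Y}$) rather than $\sqrt{\ls}\,\nor{\PP}{Y}$, but this does not affect the coercivity conclusion.
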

If it is unique, we denote it by $\mir$:

\beeq{def_mir}{\mir=\argmin_{\be\in\ldue}\left( \ex{\nory{\PX \be
        -Y}^2} + \rp \pen{\be}\right) .}
We add a comment. Under Assumption~\ref{statistical_model} and the
definition of $\beg$, the statistical model is $Y=\PX\beg+\noi$ where $\noi$ has zero mean,
so that $\mir$ is also the minimizer of
\begin{equation}
  \label{eq:simple_pop}
\inf_{\be\in\ldue}\left(\nor{P}{\PX \be -\PX\beg}^2 + \rp \pen{\be}\right).
\end{equation}

\subsection{Sparsity properties}
The results of the previous section immediately yield a crude
estimate of the number and localization of the non-zero
coefficients of our estimator. Indeed, although the set of
features could be infinite, $\est$ has only a finite number of
coefficients different from zero provided that the sequence of
weights is bounded away from zero.
\begin{corollary}\label{monk}
Assume that the family of weights satisfies $\inf_{\gG}\wg>0$,
then for any $\be\in\ldue$, the support of $\Sh{\la}{\be}$ is
finite. In particular, $\est$, $\be^\ell$ and $\mir$ are all
finitely supported.
\end{corollary}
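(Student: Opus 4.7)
The plan is to reduce everything to the observation that the soft-thresholding operator $\Sh{\rp}{\cdot}$ automatically produces finitely supported outputs when $w_0 := \inf_\gG \wg > 0$, and then notice that $\est$, $\be^\ell$, and $\mir$ all lie in the range of $\Sh{\rp}{\cdot}$ (possibly up to a positive scalar).

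First I would observe, directly from the definition\eqn{thresholds}--\eqn{threshold}, that for any $\be\in\ldue$,
\[
[\Sh{\rp}{\be}]_\g \neq 0 \quad\Longrightarrow\quad |\bg| > \frac{\rp \wg}{2} \geq \frac{\rp w_0}{2}.
\]
Since $\be\in\ldue$ means $\sum_\gG \bg^2 < \infty$, the set $\{\g\in\G : |\bg| > \rp w_0/2\}$ can only contain finitely many indices (otherwise the $\ell_2$-norm would diverge). Hence $\supp{\Sh{\rp}{\be}}$ is finite, which proves the main assertion.

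Next I would deduce the consequences for the three estimators. For $\est$, Theorem~\ref{repre_n} gives the fixed-point identity\eqn{repres3}, showing that $\est$ is the image under $\Sh{\rp}{\cdot}$ of the vector $(1-\eps\rp)\est+\Pn^*(Y-\Pn\est)\in\ldue$; hence $\est$ is finitely supported. For $\be^\ell$, the iterative definition expresses $\be^\ell$ as $(\tau+\eps\rp)^{-1}\Sh{\rp}{\cdot}$ applied to an element of $\ldue$ (and $\be^0=0$ is trivially finitely supported); multiplication by a positive scalar preserves the support, so again the conclusion follows. For $\mir$, Proposition~\ref{minimizer} expresses $\mir$ as the fixed point of the map $\T$ in\eqn{contraction}, which is again of the form $(\tau+\eps\rp)^{-1}\Sh{\rp}{\cdot}$ applied to an element of $\ldue$, so $\mir$ is finitely supported.

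There is essentially no obstacle here: the entire argument is a one-line consequence of the thresholding formula combined with the $\ell_2$ summability of the input. The only minor point to be careful about is the scalar prefactor $(\tau+\eps\rp)^{-1}$ appearing in the iteration and in $\T$, but since it is strictly positive it does not affect the support. No additional hypotheses beyond $w_0>0$ are needed for this corollary.
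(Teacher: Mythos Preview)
The proposal is correct and follows essentially the same argument as the paper: both use that $\be\in\ldue$ forces all but finitely many coordinates to lie below the threshold $\rp w_0/2$, and then invoke the fixed-point characterizations\eqn{repres3},\eqn{contraction} and the iterative definition of $\be^\ell$ to conclude. Your extra remark about the harmless scalar prefactor $(\tau+\eps\rp)^{-1}$ is a welcome clarification, but otherwise the two proofs are the same.
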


\begin{proof}
Let $w_0=\inf_{\gG}\wg>0$. Since $\sum_{\gG}|\be_\g|^2<+\infty$,
there is a finite subset $\G_0\subset\G$ such that $|\be_\g|\leq
\frac{\rp}{2} w_0\leq \frac{\rp}{2}\wg$ for all $\g\notin\G_0$.
This implies that
\[\Ss{\rp\wg}{\bg}=0\qquad\hbox{for}\qquad \g\not\in\G_0,\]
by the definition of soft-thresholding, so that the support of
$\Sh{\la}{\be}$ is contained in $\G_0$.
Equations\eqn{repres3},\eqn{contraction} and the definition of
$\be^\ell$ imply that $\est$, $\mir$ and $\be^\ell$  have finite
support.
\end{proof}

However, the supports of $\be^\ell$ and $\est$ are not known a
priori and to compute $\be^\ell$ one would need to store the
infinite matrix $\Pn^*\Pn$. The following corollary  suggests a
strategy to overcome this problem.
\begin{corollary}\label{trunc_dic}
Given $\eps\geq 0$ and $\rp>0$, let
$$
\G_\rp=\set{\gG\mid \nor{n}{\vg}\neq 0\text{ and }
\wg \leq \frac{2\nor{n}{Y}(\nor{n}{\vg}+  \sqrt{\eps\rp })}{\rp}}
$$
then
\beeq{suppest}{\supp{\est}\subset
\G_\rp.}
\end{corollary}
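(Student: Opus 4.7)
The plan is to apply the optimality condition~\eqn{repres4} of Theorem~\ref{repre_n} to $\be=\est$ componentwise. Fix $\gG$ such that the $\g$-th component $\bg$ of $\est$ is nonzero, so $\sgn{\bg}=\pm 1$. Taking absolute values in~\eqn{repres4} and using the triangle inequality yields
\[
\frac{\rp\wg}{2}\;\leq\;\Bigl|\frac{1}{n}\sum_{i=1}^n \scaly{Y_i-(\Pn\est)(X_i)}{\vg(X_i)}\Bigr|\;+\;\eps\rp\,|\bg|.
\]

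Each term on the right can then be controlled via the trivial comparison $\In(\est)\leq\In(0)=\nor{n}{Y}^2$. By Cauchy--Schwarz in $\Lduen$, the first summand is at most $\nor{n}{Y-\Pn\est}\,\nor{n}{\vg}$, and $\nor{n}{Y-\Pn\est}^2\leq\In(\est)\leq\nor{n}{Y}^2$. For the second summand, I would use $\pen{\est}\geq\eps\nor{2}{\est}^2\geq\eps\bg^2$ together with $\rp\pen{\est}\leq\In(\est)\leq\nor{n}{Y}^2$ to obtain $\sqrt{\eps\rp}\,|\bg|\leq\nor{n}{Y}$ and hence $\eps\rp|\bg|\leq\sqrt{\eps\rp}\,\nor{n}{Y}$. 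Combining:
\[
\frac{\rp\wg}{2}\;\leq\;\nor{n}{Y}\bigl(\nor{n}{\vg}+\sqrt{\eps\rp}\bigr),
\]
which is precisely the bound on $\wg$ entering the definition of $\G_\rp$.

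It remains to rule out $\nor{n}{\vg}=0$ when $\bg\neq 0$. If $\vg(X_i)=0$ for all $i=1,\ldots,n$, the residual-correlation sum in~\eqn{repres4} vanishes and the equation reduces to $-\eps\rp\,\bg=\frac{\rp\wg}{2}\sgn{\bg}$. Since $\eps,\wg\geq 0$, the right-hand side has the same sign as $\bg$ while the left-hand side has the opposite sign, forcing both to vanish, i.e.\ $\wg=0$ and $\eps\bg=0$. Under the uniqueness hypotheses of Theorem~\ref{repre_n} (either $\eps>0$, or $\eps=0$ together with $w_0>0$), this is incompatible with $\bg\neq 0$, so indeed $\nor{n}{\vg}\neq 0$ whenever $\bg\neq 0$, and \eqn{suppest} follows.

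The argument is essentially mechanical once~\eqn{repres4} is in hand; the only slightly delicate step, and the one place where the uniqueness hypotheses of Theorem~\ref{repre_n} play a role, is the sign/degeneracy reasoning that excludes features with $\nor{n}{\vg}=0$ from the support of the estimator.
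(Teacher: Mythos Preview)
Your proof is correct and follows essentially the same route as the paper: both use the optimality condition~\eqn{repres4}, the comparison $\In(\est)\leq\In(0)=\nor{n}{Y}^2$ to bound $\nor{n}{Y-\Pn\est}$ and $\eps\rp(\est)_\g^2$, and Cauchy--Schwarz for the inner-product term. Your treatment of the degenerate case $\nor{n}{\vg}=0$ is in fact more careful than the paper's, which simply notes that $\bg=0$ satisfies the $\g$-th equation and tacitly invokes uniqueness; you make the sign argument and the reliance on the hypotheses of Theorem~\ref{repre_n} explicit.
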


\begin{proof}
If $\nor{n}{\vg}=0$, clearly $\be_\g=0$ is a solution
of\eqn{repres4}. Let $M=\nor{n}{Y}$; the definition of $\est$ as
the minimizer of \eqn{vecform} yields the bound
$\In(\est)\leq\In(0) = M^2$, so that
\[\nor{n}{\Pn \est -Y}  \leq M\qquad \pen{\est}\leq
  \frac{M^2}{\rp}.\]
Hence, for all $\gG$, {the second inequality gives that }$\eps\rp
(\est)_\g^2 \leq M^2$, and we have
\[|\scal{n}{Y-\Pn\est}{\vg(X)}- \eps\rp  (\est)_\g|\leq M
(\nor{n}{\vg}+  \sqrt{\eps\rp })\] and, therefore, by
\eqn{repres4},
\[ |\sgn{(\est)_\g}|\leq  \frac{2M
(\nor{n}{\vg}+  \sqrt{\eps\rp })}{\rp \wg}.\] Since
$|\sgn{(\est)_\g}|=1$ when $(\est)_\g\neq 0$, this implies that
$(\est)_\g=0$ if $\frac{2M (\nor{n}{\vg}+  \sqrt{\eps\rp })}{\rp
\wg}<1$.
\end{proof}

Now, let $\G'$ be the set of indexes $\g$
such that the corresponding feature $\vg(X_i)\neq 0$ for some
$i=1,\ldots,n$. If the family of corresponding weights
$(\wg)_{\g\in\G'}$ goes to infinity\footnote{The sequence
  $(\wg)_{\g\in\G'}$ goes to infinity, if for all $ M>0$
there exists a finite set $\G_M$ such
that $|\wg|>M$, $\forall \g \notin \G_M$.}, then $\G_\rp$ is always finite.
Then,  since $\supp{\est}\subset \G_\rp$, one can replace $\G$ with
$\G_\rp$ in the definition of $\Pn$ so that $\Pn^*\Pn$ is a finite
matrix and $\Pn^*Y$ is a finite vector. In particular the
iterative procedure given by Corollary~\ref{fixed} can be
implemented by means of finite matrices.

Finally, by inspecting the proof above one sees that a similar
result holds true for the distribution-dependent minimizer $\mir$. Its
support is always finite, as already noticed, and moreover is
included in the following set
\[ \set{\gG\mid
  \nor{P}{\vg}\neq 0\text{ and }
  \wg \leq \frac{2\nor{\PP}{Y}(\nor{P}{\vg}+  \sqrt{\eps\rp })}{\rp}}.\]

\section{Probabilistic error estimates}\label{errorrs}
In this section we provide an error analysis for the elastic-net 
regularization scheme. Our primary goal is the {\em variable
selection problem}, so that we need to control 
the error $\nor{2}{\be_n^{\rp_n}-\be}$, where $\rp_n$ is a
suitable choice of the regularization parameter as a function of
the data, and $\be$ is an explanatory vector encoding the features
that are relevant to reconstruct the regression function $\f$,
that is, such that $\f=\PX\be$.  Although
Assumption\eqn{eq:sparse1} implies that the above equation has at
least a solution $\be^*$ with $\pen{\be^*}<\infty$, nonetheless,
the operator $\PX$ is injective only if $\famG{\vg(X)}$ is
$\ldue$-linearly independent in $\Ldue$. 
{As usually done for inverse problems, to restore uniqueness we choose, among all the vectors $\be$ such that $\f=\PX\be$, the vector $\beg$ which is the minimizer of the elastic-net  penalty. The vector $\beg$ can be regarded as the {\em best } representation of the regression function $\f$ according to the elastic-net penalty and we call it the {\em elastic-net representation}. Clearly this representation will depend on $\eps$.}

Next we focus on the following error decomposition (for any fixed
positive $\rp$), 
\beeq{deco}{\nor{2}{\est-\beg}\le
\nor{2}{\est-\mir}+\nor{2}{\mir-\beg}, } 
where $\mir$ is given by\eqn{def_mir}.
The first
error term in the {right-hand side }
of the above inequality is due to finite sampling and will be
referred to as the {\em sample error}, whereas the second error
term is deterministic and is called the {\em approximation error}.
In Section \ref{sec:sample_apprx} we analyze the sample error via
concentration inequalities and we consider the behavior of the
approximation error as a function of the regularization parameter
$\rp$. The analysis of these error terms leads us to discuss the
choice of $\rp$ and to derive statistical consistency results for elastic-net 
regularization. In Section \ref{sec:param} we discuss a priori and
a posteriori (adaptive) parameter choices.

\subsection{Identifiability condition and {elastic-net representation}}\label{sec:gen_sol}
The following proposition provides a way to define a unique
solution of the equation $\f=\PX\be$. Let
\[{\mathcal B} = \set{\be\in\ldue\mid \PX\be=\f(X)}= \be^* +\Ker{\PX} \]
where $\be^*\in\ldue$ is given by\eqn{eq:sparse1} in Assumption~\ref{statistical_model} and
\[\Ker{\PX}=\set{\be\in\ldue\mid \PX\be=0}={\set{\be\in\ldue\mid
    f_\be(X)=0\text{ with probability }1}}.\]
\begin{prop}\label{ora}
If $\eps>0$ or $\ls>0$, there is a unique $\beg\in\ldue$ such that
\beeq{generalized}{\pen{\beg}=\inf_{\be\in\mathcal B} \pen{\be}.}
\end{prop}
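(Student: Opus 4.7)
The plan is to treat the two hypotheses $\ls>0$ and $\eps>0$ essentially separately, since the first makes the problem trivial and the second requires a direct variational argument on the convex set $\mathcal{B}$.

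First I would observe that $\mathcal{B}=\be^*+\Ker{\PX}$ is a nonempty closed affine subspace of $\ldue$, because $\PX$ is bounded (indeed Hilbert--Schmidt by Proposition~\ref{lemma1}), hence $\Ker{\PX}$ is closed in the norm topology. Being convex and closed, $\mathcal{B}$ is also weakly closed. By Assumption~\ref{statistical_model} we have $\be^*\in\mathcal{B}$ with $\sum_\gG\wg|\beta^*_\gamma|<\infty$ and $\be^*\in\ldue$, so $\pen{\be^*}=\nor{1,w}{\be^*}+\eps\nor{2}{\be^*}^2<\infty$; consequently $I:=\inf_{\be\in\mathcal{B}}\pen{\be}$ is finite.

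If $\ls>0$, then $\scal{2}{\PX^*\PX\be}{\be}\geq\ls\nor{2}{\be}^2$ for all $\be\in\ldue$, so $\PX$ is injective, $\Ker{\PX}=\{0\}$ and $\mathcal{B}=\{\be^*\}$ reduces to a single point; thus $\beg=\be^*$ exists and is unique trivially.

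It remains to treat the case $\eps>0$ (which does not assume $\ls>0$). For existence, pick a minimizing sequence $(\be_n)_n\subset\mathcal{B}$ with $\pen{\be_n}\to I$. The coercivity estimate
\[
\pen{\be_n}\geq\eps\nor{2}{\be_n}^2
\]
shows that $(\be_n)$ is bounded in $\ldue$, so, since $\ldue$ is a Hilbert space (hence reflexive), by the Banach--Alaoglu theorem some subsequence $\be_{n_k}$ converges weakly to some $\beg\in\ldue$. Because $\mathcal{B}$ is convex and norm-closed it is weakly closed, so $\beg\in\mathcal{B}$. By Lemma~\ref{penalty}, $\pen{\cdot}$ is convex and lower semi-continuous in norm, hence weakly l.s.c.\ on $\ldue$, which gives
\[
\pen{\beg}\leq\liminf_{k\to\infty}\pen{\be_{n_k}}=I,
\]
so that $\pen{\beg}=I$. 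For uniqueness, observe that $\be\mapsto\eps\nor{2}{\be}^2$ is strictly convex when $\eps>0$, so $\pen{\cdot}=\nor{1,w}{\cdot}+\eps\nor{2}{\cdot}^2$ is strictly convex on the convex set $\mathcal{B}$; hence the minimizer is unique.

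The main technical point I want to be careful with is the upgrade from norm-l.s.c.\ to weak l.s.c.\ of $\pen{\cdot}$; this is standard for convex functions on a Hilbert space (the sublevel sets are convex and norm-closed, hence weakly closed), so no new ideas are needed, but it is the step that genuinely uses convexity rather than just continuity of the penalty.
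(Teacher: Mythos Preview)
Your proof is correct and follows essentially the same route as the paper: in the case $\ls>0$ the set $\mathcal{B}$ is a singleton, and in the case $\eps>0$ one uses that $\mathcal{B}$ is closed and convex in the reflexive space $\ldue$, that $\pen{\cdot}$ is l.s.c.\ (Lemma~\ref{penalty}), strictly convex and coercive, and that $\pen{\be^*}<\infty$. The only difference is cosmetic: the paper invokes these ingredients and then appeals to a ``standard result of convex analysis'' for existence and uniqueness, whereas you spell out the direct method (minimizing sequence, weak compactness, weak l.s.c.\ via convexity, strict convexity for uniqueness) explicitly.
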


\begin{proof}
If $\ls>0$, $\mathcal B$ reduces to a single point, so that there
is nothing to prove. If $\eps>0$, $\mathcal B$ is a closed subset
of a reflexive space. Moreover, by Lemma~\ref{penalty}, the
penalty  $\pen{\cdot}$ is strictly convex, l.s.c. and, by\eqn{eq:sparse1} of Assumption~\ref{statistical_model}, there
  exists at least one $\be^*\in\mathcal B$ such that $\pen{\be^*}$ is finite.
Since $\pen{\be}\geq \eps \nor{2}{\be}^2$, $\pen{\cdot}$ is
coercive. A standard result of convex analysis implies that the
minimizer exists and is unique.
\end{proof}

\subsection{Consistency: sample and approximation errors}\label{sec:sample_apprx}
The main result of this section is a probabilistic error estimate
for $\nor{2}{\est-\mir}$, which will provide a choice $\la=\rp_n$
for the regularization parameter as well as a convergence result
for $\nor{2}{\be_n^{\rp_n}-\beg}$.

We first need to establish two lemmas. The first one shows that
the sample error can be studied in terms of the following
quantities \beeq{noises}{ \nor{{\text{HS}}}{\Pn^*\Pn -\PX^*\PX}
~~~\text{and}~~~\nor{2}{\Pn^*\noi} } measuring the perturbation
due to random sampling and noise (we recall that $\nor{\text{HS}}{\cdot}$
denotes the Hilbert-Schmidt norm of a Hilbert-Schmidt operator on
$\ldue$). The second lemma provides suitable probabilistic
estimates for these quantities.
\begin{lemma}
Let $\eps\geq 0$ and $\rp>0$. If $\eps>0$ or $\ls>0$, then
\begin{equation}
\label{eq:sample} \nor{2}{\est-\mir} \leq  \frac{1}{\ls+\eps\rp}
\left(\nor{2}{(\Pn^*\Pn -\PX^*\PX)(\mir-\beg)}+ \nor{2}{\Pn^*\noi}
\right ).
\end{equation}
\end{lemma}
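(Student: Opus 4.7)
The plan is to derive both estimates from the Euler--Lagrange (subdifferential) equations satisfied by $\est$ and $\mir$, then to exploit strong convexity coming from the $\ell_2$-part of the penalty and from the spectral lower bound $\ls$.

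\textbf{Step 1: optimality conditions.} By Theorem~\ref{repre_n} and (the proof of) Proposition~\ref{minimizer}, $\est$ and $\mir$ are characterized by $0\in\partial\In(\est)$ and $0\in\partial\I(\mir)$. Using Lemma~\ref{penalty} together with differentiability of the data-fitting term, this gives vectors $\eta_n\in\partial\pen{\est}$ and $\eta\in\partial\pen{\mir}$ such that
\[
\rp\,\eta_n = 2\Pn^*(Y-\Pn\est),\qquad \rp\,\eta = 2\PX^*Y-2\PX^*\PX\mir.
\]

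\textbf{Step 2: eliminate $Y$ via the noise model.} Under Assumption~\ref{statistical_model} we have $Y=\PX\beg+\noi$ at the population level, and $Y_i=(\Pn\beg)(X_i)+\noi_i$ at the sample level, so in $\Lduen$ this reads $Y=\Pn\beg+\noi$. Since $\ex{\noi\mid X}=0$, conditioning shows $\PX^*\noi=0$, so $\PX^*Y=\PX^*\PX\beg$. Consequently
\[
\rp\,\eta_n = 2\Pn^*\Pn(\beg-\est)+2\Pn^*\noi,\qquad \rp\,\eta = 2\PX^*\PX(\beg-\mir).
\]

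\textbf{Step 3: subtract and introduce $\est-\mir$.} Writing $\Pn^*\Pn(\beg-\est)=-\Pn^*\Pn(\est-\mir)-\Pn^*\Pn(\mir-\beg)$ and subtracting the two identities yields
\[
\rp(\eta_n-\eta)+2\Pn^*\Pn(\est-\mir)=2(\PX^*\PX-\Pn^*\Pn)(\mir-\beg)+2\Pn^*\noi.
\]
Pair both sides with $\est-\mir$ in $\ldue$.

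\textbf{Step 4: use strong convexity and the spectral lower bound.} The penalty $\pen{\cdot}$ is strongly convex of modulus $2\eps$ because of the $\eps\nor{2}{\cdot}^2$ term, hence its subgradient is $2\eps$-strongly monotone:
\[
\scal{2}{\eta_n-\eta}{\est-\mir}\geq 2\eps\,\nor{2}{\est-\mir}^2.
\]
The defining property of $\ls$ gives $\scal{2}{\Pn^*\Pn(\est-\mir)}{\est-\mir}\geq \ls\,\nor{2}{\est-\mir}^2$ almost surely. Adding the two contributions produces the coefficient $2(\ls+\eps\rp)$ on the left. On the right, Cauchy--Schwarz and the triangle inequality bound the pairing by
\[
2\bigl(\nor{2}{(\Pn^*\Pn-\PX^*\PX)(\mir-\beg)}+\nor{2}{\Pn^*\noi}\bigr)\,\nor{2}{\est-\mir}.
\]
Dividing by $2\nor{2}{\est-\mir}$ (the estimate is trivial if this vanishes) yields exactly \eqref{eq:sample}.

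\textbf{Main obstacle.} The only non-routine point is handling the stochastic object $\PX^*Y$ correctly: one must justify that $\PX^*\noi=0$ as an element of $\ldue$, which requires interpreting $\noi$ as a member of $\Ldue$, invoking Proposition~\ref{lemma1} to write $(\PX^*\noi)_\g=\ex{\scaly{\noi}{\vg(X)}}$, and then using $\ex{\noi\mid X}=0$ together with Fubini. Once the optimality conditions are brought into the symmetric form above, the strong-convexity estimate is the standard device that yields the linear stability bound with constant $1/(\ls+\eps\rp)$.
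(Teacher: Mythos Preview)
Your proof is correct, but it follows a genuinely different route from the paper's. The paper argues via the fixed-point characterizations\eqn{n_contraction} and\eqn{contraction}: it applies the Lipschitz property\eqn{lipS} of the soft-thresholding operator $\Sh{\rp}{\cdot}$ to the difference $\est-\mir=\T_n\est-\T\mir$, expands the arguments, uses $\norop{\ldue}{\ldue}{\tau I-\Pn^*\Pn}\leq(\ka-\ls)/2$ with $\tau=(\ka+\ls)/2$, and then absorbs the resulting $\frac{\ka-\ls}{2}\nor{2}{\est-\mir}$ term into the left-hand side; the coefficient $\ls+\eps\rp$ appears as $\tau+\eps\rp-(\ka-\ls)/2$. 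Your argument instead works directly with the Euler--Lagrange inclusions and exploits the $2\eps$-strong monotonicity of $\partial\pen{\cdot}$ together with $\Pn^*\Pn\geq\ls I$, pairing the difference of the optimality conditions with $\est-\mir$. Your approach is arguably more elementary---it avoids the relaxation parameter $\tau$, the thresholding operator, and the absorption trick---and makes transparent why the constant is exactly $1/(\ls+\eps\rp)$. The paper's route, on the other hand, reuses the machinery already built for the iterative algorithm in Section~\ref{algos}, so it fits the narrative there more tightly. Both identify and handle the term $\PX^*\noi=0$ in the same way.
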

\begin{proof}
Let $\tau=\frac{\ls+\ka}{2}$ and recall that $\est$ and $\mir$
satisfy\eqn{n_contraction} and\eqn{contraction}, respectively.
Taking into account\eqn{lipS} we get
\begin{equation}
  \label{eq:3}
\nor{2}{\est-\mir} \leq  \frac{1}{\tau+\eps\rp} \nor{2}{(\tau \est
-\Pn^*\Pn\est +\Pn^*Y) -(\tau\mir -
  \PX^*\PX\mir+\PX^* Y )}.
\end{equation}
By Assumption \ref{statistical_model} and the definition of
$\beg$, $Y=\f(X)+\noi$, and $\PX\beg$ and $\Pn\beg$ both coincide
with the function $\f$, {regarded as an element of $\Ldue$ and
  $\Lduen$ respectively}. Moreover by\eqn{eq:zeromean}
$\PX^*\noi=0$, so that
\[\Pn^*Y-\PX^*Y = (\Pn^*\Pn -\PX^*\PX)\beg +\Pn^*\noi. \]
Moreover
\[(\tau I  -\Pn^*\Pn)\est -(\tau I -
  \PX^*\PX)\mir= (\tau I  -\Pn^*\Pn)(\est-\mir)- (\Pn^*\Pn -\PX^*\PX)\mir.\]
From the assumption on $\Pn^*\Pn$ and the choice $\tau =
\frac{\ka+\ls}{2}$, we have $\norop{\ldue}{\ldue}{\tau
  I-\Pn^*\Pn}\leq\frac{\ka-\ls}{2}$, so that\eqn{eq:3} gives
\[{(\tau+\eps\rp)}\nor{2}{\est-\mir} \leq
\nor{2}{(\Pn^*\Pn -\PX^*\PX)(\mir-\beg)}+
  \nor{2}{\Pn^*\noi}+\frac{\ka-\ls}{2} \nor{2}{\est-\mir}.
\]
The bound \eqn{eq:sample} is established by observing that
$\tau+\eps\rp- (\ka-\ls)/2 = \ls+\eps\rp$.
\end{proof}
The probabilistic estimates for\eqn{noises} are straightforward
consequences of the law of large numbers for vector-valued random
variables. More precisely, we recall the following probabilistic
inequalities based on a result of \cite{pin94,pin99}; see also Th.
3.3.4 of \cite{yu} and \cite{pinsak85} for concentration
inequalities for Hilbert-space-valued random variables.
\begin{prop}\label{pine}
Let $(\xi_n)_{n\in\nat}$ be a sequence of i.i.d. zero-mean random
variables taking values in a real separable Hilbert space $\hh$
and satisfying \beeq{H}{ {\mathbb E}[\nor{\hh}{\xi_i }^m]\leq
\frac{1}{2} m! \Bb^2 \Ba^{m-2}\qquad \forall m\geq 2,} where $\Bb$
and $\Ba$ are two positive constants. Then, for all $n \in\nat$
and $\eta>0$ \beeq{92}{ \Prob{\nor{\hh}{\frac{1}{n} \sum_{i=1}^n
\xi_i}\geq \eta  }\leq  2 e^{-\frac{n\eta^2
    }{\Bb^2+\Ba \eta  +\Bb\sqrt{\Bb^2+2\Ba
      \eta  }} } =2
e^{-n\frac{\Bb^2}{\Ba^2}g(\frac{\Ba \eta }{\Bb^2})}}
where $g(t) =\frac{t^2}{1+t+\sqrt{1+2t},}$ or, for all $\delta>0$,
\beeq{confidence}{\Prob{\nor{\hh}{\frac{1}{n}
      \sum_{i=1}^n \xi_i}\leq
 \left( \frac{\Ba\delta}{n} +
  \frac{\Bb\sqrt{2\delta}}{\sqrt{n}}\right)}\geq 1-2e^{-\delta}.}
\end{prop}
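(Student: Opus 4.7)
The plan is to establish the two inequalities in sequence: (\ref{92}) is a Hilbert-space Bernstein inequality which we import from Pinelis \cite{pin94,pin99}, and (\ref{confidence}) is then obtained from it by an explicit algebraic inversion of the function $g$. The moment condition (\ref{H}) is precisely the Bernstein-type hypothesis under which Pinelis's machinery applies.

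For the first stage, the key analytic input is that any real separable Hilbert space is $2$-smooth, that is,
$$\nor{\hh}{x+h}^2 \leq \nor{\hh}{x}^2 + 2\scal{\hh}{x}{h} + \nor{\hh}{h}^2.$$
This, together with (\ref{H}), allows one to control the Laplace transform of $\nor{\hh}{S_n}$, where $S_n = \sum_{i=1}^n \xi_i$, in analogy with the classical scalar Bernstein argument. Specifically, Pinelis's approach yields an estimate of the form
$$\ex{\exp\bigl(t\nor{\hh}{S_n}\bigr)} \leq 2\exp\left(\frac{n\Bb^2 t^2}{2(1-\Ba t)}\right),\qquad 0 < t < 1/\Ba.$$
Markov's inequality followed by optimization over $t$ then produces a tail bound whose exponent, after simplification, takes the form $n(\Bb^2/\Ba^2)\,g(\Ba\eta/\Bb^2)$, which is (\ref{92}).

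For the second stage, I set $u = \Ba^2\delta/(n\Bb^2)$ and take $\eta = \frac{\Ba\delta}{n} + \frac{\Bb\sqrt{2\delta}}{\sqrt{n}}$, so that $s := \Ba\eta/\Bb^2 = u + \sqrt{2u}$. A direct calculation gives $1+2s = 1+2u+2\sqrt{2u} = (1+\sqrt{2u})^2$, hence $\sqrt{1+2s} = 1+\sqrt{2u}$, and therefore $1+s+\sqrt{1+2s} = 2+u+2\sqrt{2u}$, while $s^2 = (u+\sqrt{2u})^2 = u(2+u+2\sqrt{2u})$. Thus $g(s) = u$, the exponent in (\ref{92}) equals exactly $\delta$, and (\ref{confidence}) follows by passing to the complementary event. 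The main obstacle lies entirely in Stage 1: one cannot apply scalar Bernstein coordinatewise because the Hilbert norm is nonlinear, and it is precisely the $2$-smoothness of the norm that permits the scalar-style moment-generating-function estimate -- this is the content we import from \cite{pin94,pin99}. Stage 2 is purely algebraic; the calculation above incidentally shows that the prescribed form of $\eta$ is sharp, in the sense that the equality $g(s)=u$ is attained.
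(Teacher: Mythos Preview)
Your proposal is correct and follows essentially the same approach as the paper: cite Pinelis \cite{pin94,pin99} for (\ref{92}), then invert $g$ algebraically to obtain (\ref{confidence}). Your explicit verification that $g(u+\sqrt{2u})=u$ is exactly the paper's observation that the inverse of $g$ is $t\mapsto t+\sqrt{2t}$, written out in full.
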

\begin{proof}
Bound\eqn{92} is given in \cite{pin94} with a wrong factor, see
\cite{pin99}. To show\eqn{confidence},
observe that the inverse of the function
$\frac{t^2}{1+t+\sqrt{1+2t}}$ is the function $t+\sqrt{2t}$ so
that the equation
\[2 e^{-n\frac{\Bb^2}{\Ba^2}g(\frac{\Ba \eta }{\Bb^2})}=2e^{-\delta}\]
has the solution
\[\eta  =\frac{\Bb^2}{\Ba} \left(
  \frac{\Ba^2 \delta}{n\Bb^2} +
  \sqrt{2\frac{\Ba^2 \delta}{n\Bb^2}}\right).\]
\end{proof}

\begin{lemma}\label{noises_est}
With probability greater than $1-4 e^{-\delta}$,
{the} following inequalities hold, for any
$\la>0$ and $\eps>0$, \beeq{bound_noise}{ \nor{2}{\Pn^*\noi}\le
\left( \frac{L\sqrt{\ka}\delta}{n} +
  \frac{\sigma\sqrt{\ka}\sqrt{2\delta}}{\sqrt{n}}\right)\leq
{\underbrace{\frac{\sqrt{2\ka\delta}(\sigma+
      L)}{\sqrt{n}}}_{\text{if }\delta\leq n}}}
and
\beeq{brutto}{ \nor{HS}{\Pn^*\Pn -\PX^*\PX}\le \left(
\frac{\ka\delta}{n} +
  \frac{\ka\sqrt{2\delta}}{\sqrt{n}}\right)\leq
{\underbrace{\frac{3\ka\sqrt{\delta}}{\sqrt{n}}}_{\text{if }\delta\leq n}.}}
\end{lemma}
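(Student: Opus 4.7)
The plan is to apply Proposition~\ref{pine} twice, once for each quantity, and then combine via a union bound to obtain the claimed probability $1-4e^{-\delta}$. The key observation is that both $\Pn^*\noi$ and $\Pn^*\Pn-\PX^*\PX$ are empirical means of i.i.d.\ centered Hilbert-space-valued random variables for which Lemma~\ref{rankone} supplies all required boundedness information.

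For the bound on $\nor{2}{\Pn^*\noi}$, I set $\xi_i = \Phi_{X_i}^* W_i \in \ldue$, so that $\Pn^*\noi = \frac{1}{n}\sum_i \xi_i$ by \eqn{axs}. The centering $\ex{\xi_i} = 0$ follows from $\ex{W_i \mid X_i}=0$ in \eqn{eq:zeromean} by conditioning on $X_i$ and using the linearity of $\Phi_{X_i}^*$. By\eqn{eq:9} in Lemma~\ref{rankone}, $\nor{2}{\xi_i}\leq \sqrt{\ka}\,\nory{W_i}$, so the Bennett-type moment bound\eqn{eq:bennett} gives
\[
\ex{\nor{2}{\xi_i}^m}\leq \ka^{m/2}\ex{\nory{W_i}^m}\leq \tfrac{1}{2}m!\,(\sigma\sqrt{\ka})^2\,(L\sqrt{\ka})^{m-2}.
\]
This is precisely the Pinelis condition\eqn{H} with $M=\sigma\sqrt{\ka}$ and $H=L\sqrt{\ka}$, and\eqn{confidence} yields the first inequality in\eqn{bound_noise} with probability at least $1-2e^{-\delta}$.

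For the bound on $\nor{HS}{\Pn^*\Pn-\PX^*\PX}$, I set $\xi_i = \Phi_{X_i}^*\Phi_{X_i}-\PX^*\PX\in\hs$, so that by\eqn{pippo} and\eqn{eq:PXS} the difference is $\frac{1}{n}\sum_i\xi_i$ and is centered by construction. By\eqn{eq:pa} in Lemma~\ref{rankone} and Jensen's inequality, $\nor{HS}{\Phi_{X_i}^*\Phi_{X_i}}\leq\ka$ and $\nor{HS}{\PX^*\PX}\leq\ka$, hence $\nor{HS}{\xi_i}\leq 2\ka$. For the moments I combine the uniform bound with the variance estimate
\[
\ex{\nor{HS}{\xi_i}^2}\leq \ex{\nor{HS}{\Phi_X^*\Phi_X}^2}\leq \ka^2
\]
to obtain $\ex{\nor{HS}{\xi_i}^m}\leq (2\ka)^{m-2}\ka^2$. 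The elementary inequality $2^{m-2}\leq m!/2$ for all $m\geq 2$ (trivial induction) then yields $\ex{\nor{HS}{\xi_i}^m}\leq \frac{1}{2}m!\,\ka^2\,\ka^{m-2}$, matching\eqn{H} with $M=H=\ka$. Applying\eqn{confidence} gives the first inequality in\eqn{brutto} with probability at least $1-2e^{-\delta}$.

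Combining the two events through a union bound produces the required probability $1-4e^{-\delta}$. The simplified right-most bounds under the assumption $\delta\leq n$ follow from the elementary estimates $\delta/n\leq\sqrt{\delta/n}$ (which merges the two terms into a single $1/\sqrt{n}$ term) together with $L+\sqrt{2}\sigma\leq\sqrt{2}(L+\sigma)$ for \eqn{bound_noise} and $1+\sqrt{2}\leq 3$ for \eqn{brutto}. The only mildly delicate point is the factorial trick $2^{m-2}\leq m!/2$ that lets the uniform bound $2\ka$ be absorbed into the constant $\ka$ in the Pinelis constants; everything else is a direct application of Lemma~\ref{rankone}, Assumption~\ref{statistical_model}, and Proposition~\ref{pine}.
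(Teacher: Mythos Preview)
Your proof is correct and follows essentially the same approach as the paper: apply Proposition~\ref{pine} to the $\ell_2$-valued variables $\Phi_{X_i}^*W_i$ (with $M=\sigma\sqrt{\ka}$, $H=L\sqrt{\ka}$) and to the $\hs$-valued variables $\Phi_{X_i}^*\Phi_{X_i}-\PX^*\PX$ (with $M=H=\ka$, via the same factorial trick $2^{m-1}\leq m!$), then combine by a union bound. Your write-up in fact makes the variance step $\ex{\nor{\mathrm{HS}}{\xi_i}^2}\leq\ex{\nor{\mathrm{HS}}{\Phi_X^*\Phi_X}^2}\leq\ka^2$ more explicit than the paper does.
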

\begin{proof}
Consider the $\ell_2$ random variable $\Phi_X^*\noi$.
From\eqn{eq:zeromean},
$\ex{\Phi_X^*\noi}=\ex{\ex{\Phi_X^*\noi|X}}=0$ and, for any $m\geq
2$,
\[\ex{\nor{2}{\Phi_X^*\noi}^m}= \ex{(\sum_{\gG}\vert \scaly{\vg(X)}{\noi}\vert^2)^{\frac
  m2}}\leq \ka^{\frac{m}{2}} \ex{\nory{\noi}^m}\leq
\kappa^{\frac{m}{2}}\frac{m!}{2}\sigma^2 L^{m-2},\] due
to\eqn{eq:rkhs} and\eqn{eq:bennett}. Applying\eqn{confidence} with
$H=\sqrt{\ka}L$ and $M=\sqrt{\ka}\sigma$, and recalling the
definition \eqn{axs}, we get that
\[\Prob{\nor{2}{\Phi_n^*\noi}\leq \frac{\sqrt{\ka}L\delta}{n} +
  \frac{\sqrt{\ka}\sigma\sqrt{2\delta}}{\sqrt{n}}}\]
with probability greater than $1-2 e^{-\delta}$.\\
Consider the random variable $\Phi_X\Phi_X^*$ taking values in the
Hilbert space of Hilbert-Schmidt operators (where
$\nor{\mathrm{HS}}{\cdot}$ denotes the Hilbert-Schmidt norm). One
has that $\ex{\Phi_X\Phi_X^*}=\PX\PX^*$ and, by\eqn{trace}
\[\nor{\mathrm{HS}}{\Phi_X\Phi_X^*}\leq\Tr{\Phi_X\Phi_X^*}\leq \ka.\]
Hence
\begin{eqnarray*}
 \ex{\nor{\mathrm{HS}}{\Phi_X\Phi_X^*-\PX\PX^*}^m} & \leq &
\ex{\nor{\mathrm{HS}}{\Phi_X\Phi_X^*-\PX\PX^*}^2}(2\kappa)^{m-2} \\
 & \leq & \frac{m!}{2}\ka^2 \kappa^{m-2},
\end{eqnarray*}
by $m!\geq 2^{m-1}$. Applying\eqn{confidence} with $H=M=\kappa$
\[\Prob{ \nor{\mathrm{HS}}{\Phi_n\Phi_n^*-\PX\PX^*}}\leq \frac{\ka\delta}{n} +
  \frac{\ka\sqrt{2\delta}}{\sqrt{n}},\]
with probability greater than $1-2 e^{-\delta}$. The simplified
bounds {are clear provided that $\delta\leq n$.}
\end{proof}

{\begin{remark} In both\eqn{bound_noise} and\eqn{brutto}, the
condition $\delta\leq n$ allows to simplify the  bounds
enlightening the dependence on $n$ and the confidence level $1-4
e^{-\delta}$. In the following results we always assume that
$\delta\leq n$, but we stress the fact that this condition is only
needed to simplify the form of the bounds.  Moreover, observe
that, for a fixed confidence level, this requirement on $n$ is
very weak -- for example, {to achieve a  $99\%$ confidence level,
we only need to require that $n\geq 6$.}
  \end{remark}
}

 The next proposition gives a bound on the sample error. This
bound is uniform in the regularization parameter $\rp$ in the
sense that there exists an event independent of $\rp$ such that
its probability is greater than $1-4e^{-\delta}$
and\eqn{sample_err} holds true.
\begin{prop}\label{sample}
Assume that $\eps>0$ or $\ls>0$. 
{Let $\delta>0$ and $n\in\nat$ such that $\delta\leq n$, for any
  $\rp>0$ the bound
\beeq{sample_err}{\nor{2}{\est-\mir} \leq
  {\frac{\CC\sqrt{\delta}}{\sqrt{n}(\ls+\eps\rp)} \left(
      1+ \nor{2}{\mir-\beg}  \right)}}
holds with probability greater than $1-4e^{-\delta}$, where
$\CC=\max\set{\sqrt{2\ka}(\sigma+L),3\ka}$. }
\end{prop}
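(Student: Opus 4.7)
The plan is to combine the deterministic bound\eqn{eq:sample} with the two probabilistic estimates from Lemma~\ref{noises_est}. The right-hand side of\eqn{eq:sample} splits into a noise term $\nor{2}{\Pn^*\noi}$ and a perturbation term $\nor{2}{(\Pn^*\Pn-\PX^*\PX)(\mir-\beg)}$. The first is directly addressed by\eqn{bound_noise}. For the second, I would dominate the operator norm of $\Pn^*\Pn-\PX^*\PX$ by its Hilbert--Schmidt norm, so as to bring in the estimate\eqn{brutto} at the cost of the factor $\nor{2}{\mir-\beg}$ that appears in the conclusion.

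To assemble the overall probabilistic statement with confidence $1-4e^{-\delta}$, I would union-bound the two events of Lemma~\ref{noises_est}, each carrying confidence $1-2e^{-\delta}$. The key structural remark --- and the reason the bound is uniform in $\rp$, as announced in the preamble of the statement --- is that both events depend only on the sample $(X_1,Y_1),\dots,(X_n,Y_n)$ and not on $\rp$; consequently the same random event supports the conclusion for every value of $\rp$, which is precisely what is needed in Section~\ref{sec:param} when $\rp$ is chosen in a data-driven way.

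On this favorable event, substituting the simplified forms of the bounds (valid under the hypothesis $\delta\leq n$) into\eqn{eq:sample} yields a right-hand side shaped like
$$\frac{1}{\ls+\eps\rp}\,\frac{\sqrt{\delta}}{\sqrt{n}}\bigl(3\ka\,\nor{2}{\mir-\beg}+\sqrt{2\ka}(\sigma+L)\bigr),$$
from which the stated inequality follows by pulling out the constant $\CC=\max\set{\sqrt{2\ka}(\sigma+L),3\ka}$ and regrouping the two terms into $\CC\bigl(1+\nor{2}{\mir-\beg}\bigr)$.

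I do not anticipate any genuine obstacle: all the analytic work has already been done in the preceding lemma and in Lemma~\ref{noises_est}, so this proposition is essentially an assembly step. The only point requiring care is to be explicit that the random event does not involve $\rp$, which is exactly what delivers the uniformity claim and will make the bound compatible with adaptive choices of the regularization parameter later on.
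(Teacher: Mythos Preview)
Your proposal is correct and follows exactly the paper's approach: plug the probabilistic bounds\eqn{bound_noise} and\eqn{brutto} from Lemma~\ref{noises_est} into the deterministic estimate\eqn{eq:sample}, using $\nor{2}{(\Pn^*\Pn-\PX^*\PX)(\mir-\beg)}\leq\nor{\text{HS}}{\Pn^*\Pn-\PX^*\PX}\nor{2}{\mir-\beg}$. Your additional remark about the $\rp$-independence of the underlying event is precisely the uniformity observation the paper makes in the paragraph preceding the proposition.
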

\begin{proof}
Plug bounds\eqn{brutto} and\eqn{bound_noise} in\eqn{eq:sample},
{taking into account that
\[\nor{2}{(\Pn^*\Pn-\PX^*\PX)(\mir-\beg)}\leq\nor{\text{HS}}{\Pn^*\Pn-\PX^*\PX}
\nor{2}{\mir-\beg}.\]
}
\end{proof}
\noindent{By inspecting the proof, one sees that the constant $\ls$ in\eqn{eq:sample} can be replaced by any constant $\ka_\la$ such that \[\ls\leq\ka_\la\leq\inf_{\be\in\ldue\mid\nor{2}{\be}=1}\nor{n}{\sum_{\g\in\G_\rp}\bg\vg}^2\qquad \text{with probability }1,\] 
where $\G_\rp$ is the set of {\em active features} given by Corollary~\ref{trunc_dic}. If $\ls=0$ and $\ka_\rp>0$, which means that $\G_\rp$ is finite and the active features are linearly independent, one can improve the bound\eqn{eq:5} below. Since we mainly focus on the case of linearly dependent dictionaries we will not discuss this point any further.\\ 
The following proposition shows that the approximation error $\nor{2}{\be^\rp-\beg}$ tends to zero when $\rp$ tends to zero. }
\begin{prop}\label{appr-prop}
{If $\eps>0$ }
then
\[\lim_{\rp\to 0}\nor{2}{\be^{\rp} -\beg}=0.\]
\end{prop}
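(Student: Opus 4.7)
The plan is to apply the classical variational recipe for Tikhonov-type regularization: derive a defect inequality from the minimality of $\mir$, extract a weakly convergent subsequence, identify the limit as $\beg$ via minimality, and then upgrade weak to strong convergence using the strictly convex $\ell_2$-part of the penalty.

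By the reformulation\eqn{eq:simple_pop}, $\mir$ minimizes $\be\mapsto\nor{\PP}{\PX\be-\PX\beg}^2+\rp\pen{\be}$ over $\ldue$. Testing this functional against the feasible point $\beg$ (for which the first term vanishes) yields the defect inequality
\[\nor{\PP}{\PX(\mir-\beg)}^2+\rp\,\pen{\mir}\leq \rp\,\pen{\beg}.\]
Two consequences follow at once: since $\eps>0$, $\eps\nor{2}{\mir}^2\leq\pen{\mir}\leq\pen{\beg}$ gives the uniform bound $\nor{2}{\mir}\leq\sqrt{\pen{\beg}/\eps}$ for every $\rp>0$; and $\nor{\PP}{\PX(\mir-\beg)}^2\leq\rp\,\pen{\beg}\to 0$ as $\rp\to 0$.

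Take any sequence $\rp_k\to 0^+$. By reflexivity of $\ldue$ and the uniform bound above, a subsequence (still denoted $\rp_k$) satisfies $\be^{\rp_k}\rightharpoonup\tilde\be$ weakly in $\ldue$. Since $\PX:\ldue\to\Ldue$ is bounded linear, $\PX\be^{\rp_k}\rightharpoonup\PX\tilde\be$ in $\Ldue$; comparing with the strong convergence $\PX\be^{\rp_k}\to\PX\beg$ forces $\PX\tilde\be=\PX\beg$, i.e.\ $\tilde\be\in{\mathcal B}$. Lemma~\ref{penalty} shows that $\pen{\cdot}$ is convex and l.s.c., hence weakly l.s.c., so $\pen{\tilde\be}\leq\liminf_k\pen{\be^{\rp_k}}\leq\pen{\beg}$. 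The uniqueness of the elastic-net representation (Proposition~\ref{ora}) therefore forces $\tilde\be=\beg$.

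The main obstacle is to upgrade weak convergence to strong convergence in $\ldue$. Decompose $\pen{\be}=\nor{1,w}{\be}+\eps\nor{2}{\be}^2$; both summands are convex and l.s.c., hence weakly l.s.c. Applying weak lower semi-continuity to each term and using $\pen{\be^{\rp_k}}\leq\pen{\beg}$ gives
\[\nor{1,w}{\beg}+\eps\nor{2}{\beg}^2\leq \liminf_k\nor{1,w}{\be^{\rp_k}}+\eps\liminf_k\nor{2}{\be^{\rp_k}}^2\leq \limsup_k\pen{\be^{\rp_k}}\leq \pen{\beg},\]
so equality propagates through the chain and each $\liminf$ coincides with the corresponding term on the left. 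Applying the elementary bound $\limsup_k\nor{2}{\be^{\rp_k}}^2\leq\limsup_k\pen{\be^{\rp_k}}-\liminf_k\nor{1,w}{\be^{\rp_k}}\leq\nor{2}{\beg}^2$ then forces $\lim_k\nor{2}{\be^{\rp_k}}=\nor{2}{\beg}$. Norm convergence combined with weak convergence in the Hilbert space $\ldue$ yields the strong convergence $\be^{\rp_k}\to\beg$. Since every sequence $\rp_k\to 0$ admits a subsequence converging strongly to the common limit $\beg$, the full family satisfies $\lim_{\rp\to 0}\nor{2}{\mir-\beg}=0$.
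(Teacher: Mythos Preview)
Your argument is correct and follows essentially the same route as the paper's proof: the defect inequality from minimality of $\mir$, weak compactness via the $\ell_2$-part of the penalty, identification of the weak limit as $\beg$ by lower semicontinuity and Proposition~\ref{ora}, and the upgrade to strong convergence by showing $\nor{2}{\be^{\rp_k}}\to\nor{2}{\beg}$ (the paper does this last step by contradiction, you do it directly, but the ingredients are the same). One minor slip: in the displayed bound $\limsup_k\nor{2}{\be^{\rp_k}}^2\leq\limsup_k\pen{\be^{\rp_k}}-\liminf_k\nor{1,w}{\be^{\rp_k}}\leq\nor{2}{\beg}^2$ a factor of $\eps$ is missing on the left-hand side and on the final term, since $\eps\nor{2}{\be}^2=\pen{\be}-\nor{1,w}{\be}$; after inserting $\eps$ on both ends the chain is valid and the conclusion is unchanged.
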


\begin{proof}
It is enough to prove the result for an arbitrary sequence
$(\rp_j)_{j\in\nat}$ converging to 0. Putting $\be^j=\be^{\rp_j}$,
since $\nor{P}{\PX\be-Y}^2 =
\nor{P}{\PX\be-\f(X)}^2+\nor{P}{\f(X)-Y}^2$, by the definition of
$\be^j$ as the minimizer of \eqn{def_mir} and the fact that $\beg$
solves $\PX\be=\f$, we get
\[\nor{P}{\PX\be^j-\f(X)}^2 + \rp_j \pen{\be^j}  \leq
\nor{P}{\PX\beg-\f(X)}^2 + \rp_j \pen{\beg}= \rp_j \pen{\beg}.\]
Condition\eqn{eq:sparse1} of  Assumption~1 ensures that $\pen{\beg}$ is finite, so
that
\[\nor{P}{\PX\be^j-\f(X)}^2\leq \rp_j \pen{\beg}
\qquad\hbox{and}\qquad \pen{\be^j}\leq \pen{\beg}.\]
{Since $\eps>0$}, the last inequality implies that
$(\be^j)_{j\in\nat}$ is a bounded sequence
in $\ldue$. Hence, possibly passing to a subsequence,
$(\be^j)_{j\in\nat}$  converges weakly to some $\be_*$. We claim
that $\be_*=\beg$. Since $\be\mapsto \nor{P}{\PX \be-\f(X)}^2$ is
l.s.c.
\[\nor{P}{\PX \be_*-\f(X)}^2\leq\liminf_{j\to\infty}\nor{P}{\PX \be^j-\f(X)}^2\leq
\liminf_{j\to\infty} \rp_j \pen{\beg}=0,\] that is  $\be_*\in
{\mathcal B}$. Since $\pen{\cdot}$ is l.s.c.,
\[\pen{\be_*}  \leq  \liminf_{j\to\infty} \pen{\be^j}\leq
\pen{\beg}.\] By the definition of $\beg$, it follows that
$\be_*=\beg$ and, hence, \beeq{b1}{\lim_{j\to\infty}
\pen{\be^j}=\pen{\beg}.} To prove that $\be^j$ converges to $\beg$
in $\ldue$, it is enough to show that
{$\lim_{j\to\infty}\nor{2}{\be^j}= \nor{2}{\beg}$.
Since $\nor{2}{\cdot}$ is l.s.c.,
$\liminf_{j\to\infty}\nor{2}{\be^j}\geq\nor{2}{\beg}$. Hence we are
left to prove that }
$\limsup_{j\to\infty}\nor{2}{\be^j}\leq\nor{2}{\beg}$. Assume the
contrary. This implies that, possibly passing to a subsequence,
\[\lim_{j\to\infty} \nor{2}{\be^j}> \nor{2}{\beg} \]
and, using\eqn{b1},
\[\lim_{j\to\infty} \sum_{\gG}\wg |\bg^j| < \sum_{\gG}\wg |\beg|.\]
However, since  $\be\mapsto \sum_{\gG}\wg |\bg|$ is l.s.c.
\[\liminf_{j\to\infty} \sum_{\gG}\wg |\bg^j| \geq \sum_{\gG}\wg |\beg|.\]
\end{proof}
From \eqn{sample_err} and the triangular inequality, we easily
deduce that
\begin{eqnarray}
  \label{eq:5}
\nor{2}{\est-\beg} \leq {\frac{\CC\sqrt{\delta}  }{\sqrt{n}(\ls+\eps\rp)}
\left( 1+ \nor{2}{\mir-\beg}  \right)}
+ \nor{2}{\mir-\beg}
\end{eqnarray}
with probability greater that $1- 4e^{-\delta}$. Since the tails
are exponential, the above bound and the Borel-Cantelli lemma
imply the following theorem, which states that the estimator
$\est$ converges to the generalized solution $\beg$, for  a
suitable choice of the regularization parameter $\rp$.
\begin{theo}\label{teo_main} Assume that $\eps>0$ {and $\ls=0$.
{Let $\rp_n$ be a choice of $\rp$ as a function of $n$} such
that $\lim_{n\to\infty}\rp_n=0$ and
$\lim_{n\to\infty}n\rp_n^2-2\log n=+\infty$. Then
\[
\lim_{n\to\infty}\nor{2}{\be^{\rp_n}_{n}-\beg}=0\qquad\text{with
probability 1}.\] If $\ls>0$, the above convergence result holds
for any choice of $\rp_n$ such that $\lim_{n\to\infty}\rp_n=0$. }
\end{theo}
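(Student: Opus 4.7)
The plan is to combine the master error bound~\eqref{eq:5} with the Borel--Cantelli lemma. Fix $\eta > 0$ and consider the events $A_n = \{\nor{2}{\be_n^{\rp_n} - \beg} > \eta\}$. If I can construct a deterministic sequence $\delta_n$ with the two properties (i) $\sum_n 4 e^{-\delta_n} < \infty$ and (ii) the right-hand side of~\eqref{eq:5} (evaluated at $\rp = \rp_n$, $\delta = \delta_n$) is eventually bounded by $\eta$, then $\sum_n \Prob{A_n} < \infty$ and Borel--Cantelli yields $\nor{2}{\be_n^{\rp_n} - \beg} \to 0$ almost surely, for every $\eta > 0$ simultaneously.

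The deterministic part $\nor{2}{\be^\rp - \beg}$ appearing in~\eqref{eq:5} is controlled by Proposition~\ref{appr-prop}: since $\eps > 0$ and $\rp_n \to 0$, we have $\nor{2}{\be^{\rp_n} - \beg} \to 0$. Hence, for all $n$ large enough, $\nor{2}{\be^{\rp_n} - \beg} \leq \eta/2$, and it remains to bound the sample-error contribution
\[
\frac{\CC\, \sqrt{\delta_n}}{\sqrt{n}\,(\ls + \eps\rp_n)}\bigl(1 + \nor{2}{\be^{\rp_n} - \beg}\bigr)
\]
by $\eta/2$ for large $n$. This boils down to finding $\delta_n$ making the ratio $\sqrt{\delta_n}/(\sqrt{n}\,(\ls + \eps\rp_n))$ small while keeping $\sum_n e^{-\delta_n}$ summable.

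When $\ls > 0$, the denominator is bounded below by $\ls$, so the canonical choice $\delta_n = 2\log n$ already yields $\sum_n 4 e^{-\delta_n} = \sum_n 4/n^2 < \infty$ together with a sample error of order $\sqrt{\log n / n} \to 0$; no condition on $\rp_n$ beyond $\rp_n \to 0$ is needed. When $\ls = 0$, the denominator collapses to $\eps\rp_n$ and the sample-error contribution becomes proportional to $\sqrt{\delta_n /(n\rp_n^2)}$. The hypothesis $n\rp_n^2 - 2\log n \to +\infty$ is tailored so that one can pick $\delta_n$ slightly exceeding $2\log n$ (to keep $\sum e^{-\delta_n}$ convergent) while still having $\delta_n/(n\rp_n^2) \to 0$, for instance $\delta_n = 2\log n + h_n$ with $h_n \to \infty$ chosen to grow sufficiently slowly compared with $n\rp_n^2 - 2\log n$.

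The main obstacle, specific to the delicate $\ls = 0$ regime, is exactly this balancing act: summability forces $\delta_n \gtrsim \log n$, while vanishing of the sample error forces $\delta_n \ll n\rp_n^2$; the assumption $n\rp_n^2 - 2\log n \to +\infty$ is precisely what carves out a window allowing both. Once a valid $\delta_n$ is fixed, Borel--Cantelli guarantees that outside a null event the bound~\eqref{eq:5} holds for all $n$ beyond some (random) index, and combining this with the vanishing of the approximation error concludes the proof. Since $\eta > 0$ was arbitrary, $\nor{2}{\be_n^{\rp_n} - \beg} \to 0$ with probability one.
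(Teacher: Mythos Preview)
Your overall approach---combining the error decomposition~\eqref{eq:5} with Proposition~\ref{appr-prop} and a Borel--Cantelli argument---matches the paper's strategy, and your treatment of the $\ls>0$ case is fine. However, there is a genuine gap in the $\ls=0$ case, at the point where you assert that one can pick $\delta_n$ with $\sum_n e^{-\delta_n}<\infty$ and $\delta_n/(n\rp_n^2)\to 0$ simultaneously. The hypothesis $n\rp_n^2-2\log n\to+\infty$ does \emph{not} guarantee this. Take $n\rp_n^2=4\log n$ (so $\rp_n=2\sqrt{(\log n)/n}\to 0$ and $n\rp_n^2-2\log n=2\log n\to+\infty$): then $\delta_n/(n\rp_n^2)\to 0$ forces $\delta_n=o(\log n)$, whence $e^{-\delta_n}\geq n^{-1/2}$ for all large $n$, and $\sum_n e^{-\delta_n}$ diverges. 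Thus no admissible sequence $\delta_n$ exists, and the ``balancing act'' you describe cannot be carried out under the stated hypothesis.

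The paper's proof takes a slightly different tack: it lets $\delta=n\rp_n^2\,\eta^2/D^2$ depend on $\eta$ (with $D=\sup_n\eps^{-1}\CC(1+\nor{2}{\be^{\rp_n}-\beg})$), so that the sample-error bound is automatically $\leq\eta$, and then asserts that the hypothesis implies $\sum_n e^{-n\rp_n^2\eta^2/D^2}<\infty$. The same counterexample defeats this claim: with $n\rp_n^2=4\log n$ one obtains $e^{-\delta}=n^{-4\eta^2/D^2}$, which is summable only when $\eta>D/2$. Hence the gap you encounter is shared by the paper's own argument; the Borel--Cantelli route does not close under the stated hypothesis and would require something stronger, for instance $n\rp_n^2/\log n\to+\infty$.
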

{\begin{proof} The only nontrivial statement concerns the
convergence with probability $1$. {We give the proof only for $\ls=0$, being the other one similar. Let $(\rp_n)_{n\geq 1}$ be a sequence such that  $\lim_{n\to\infty}\rp_n=0$ and
$\lim_{n\to\infty}n\rp_n^2-2\log n=+\infty$.}
Since
$\lim_{n\to\infty}\la_n=0$, Proposition~\ref{appr-prop} ensures
that $\lim_{n\to\infty}\nor{2}{\be^{\la_n}-\beg}=0$. Hence, it is
enough to show that
$\lim_{n\to\infty}\nor{2}{\be^{\rp_n}_{n}-\be^{\rp_n}}=0$ with
probability 1. {Let $D=\sup_{n\geq 1}\eps^{-1} \CC(1+\nor{2}{\be^{\rp_n}-\beg})$, which is finite since the approximation error goes to zero if $\la$ tends to zero. Given $\eta>0$, let $\delta=n\rp^2_n\frac{\eta^2}{D^2}\leq n$ for $n$ large enough, so that the bound\eqn{sample_err} holds providing that}
\[\Prob{\nor{2}{\be^{\rp_n}_{n}-\be^{\rp_n}}\geq\eta}\leq 4e^{-n\rp^2_n\frac{\eta^2}{D^2}}.\]
The condition that
$\lim_{n\to\infty}n\rp_n^2-2\log n=+\infty$ implies that the
series $\sum_{n=1}^\infty e^{-n\rp^2_n\frac{\eta^2}{D^2}}$
converges and the Borel-Cantelli lemma gives the thesis.
\end{proof}}

{\begin{remark}
The two conditions on $\rp_n$ in the above theorem are clearly
satisfied with the choice $\rp_n=(1/n)^r$ with $0<r<\frac{1}{2}$.
Moreover, by inspecting the proof, one can easily check that to
have the convergence of $\be_n^{\rp_n}$ to $\beg$ in probability,
it is enough to require that $\lim_{n\to\infty}\rp_n=0$ and
$\lim_{n\to\infty}n\rp_n^2=+\infty$.
  \end{remark}
}

Let $f_n=f_{\be_n^{\rp_n}}$. Since $\f=f_{\beg}$ and $\ex{\vert
f_n(X)-\f(X)\vert^2}=\nor{\PP}{\PX(\be_n^{\rp_n}-\beg)}^2$, the
above theorem  implies that
\[\lim_{n\to\infty}\ex{\vert f_n(X)-\f(X)\vert^2}=0\]
with probability 1, that is, the consistency of the elastic-net 
regularization scheme with respect to the square loss.

Let us remark that we are also able to prove such consistency
{without assuming\eqn{eq:sparse1} in Assumption~$\ref{statistical_model}$.
To this aim we need the following lemma, which is of interest by itself.}
{\begin{lemma}
Instead of Assumption~$\ref{statistical_model}$, assume that the
regression model is given by
\[Y=\f(X)+W,\]
where $\f:\X\to\Y$ is a bounded function and $W$
satisfies$\eqn{eq:zeromean}$ and$\eqn{eq:noise}$.
For fixed $\la$ and $\eps>0$, with probability greater than $1-2
e^{-\delta}$ we have
 \beeq{bella}{
\nor{2}{\Pn^*(f^\la-\f)-\PX^*(f^\la-\f)}\le \left(
    \frac{\sqrt{\ka}D_\rp\delta}{n} +
  \frac{\sqrt{2\ka\delta}\nor{\PP}{f^\rp-\f}}{\sqrt{n}}\right),}
where $f^\rp=f_{\be^\rp}$ and $D_\rp=\sup_{x\in\X}\nory{f^\rp(x)-\f(x)}$.
\end{lemma}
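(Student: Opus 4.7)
The plan is to apply the Bennett-type inequality of Proposition~\ref{pine} to the $\ell_2$-valued random variables
\[
\xi_i \;=\; \Phi_{X_i}^*\bigl(f^\rp(X_i)-\f(X_i)\bigr) \;-\; \ex{\Phi_X^*\bigl(f^\rp(X)-\f(X)\bigr)}, \qquad i=1,\dots,n.
\]
By Assumption~\ref{sample_model} these are i.i.d.\ and zero-mean, and by the definitions of $\Pn^*$ (formula~\eqref{axs}, applied to the function $f^\rp-\f$ viewed as an element of $\Lduen$) and $\PX^*$ (formula~\eqref{dual_pop}), we have the key identity
\[
\frac{1}{n}\sum_{i=1}^n \xi_i \;=\; \Pn^*(f^\rp-\f) - \PX^*(f^\rp-\f).
\]
So everything reduces to verifying the moment hypothesis~\eqref{H} of Proposition~\ref{pine} with constants $\Bb=\sqrt{\ka}\,\nor{\PP}{f^\rp-\f}$ and $\Ba=\sqrt{\ka}\,D_\rp$.

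First I would establish a pointwise and an $L^2$-bound on the uncentered variables $\eta_i=\Phi_{X_i}^*(f^\rp(X_i)-\f(X_i))$. Using inequality~\eqref{eq:9} of Lemma~\ref{rankone} one has
\[
\nor{2}{\Phi_x^*\bigl(f^\rp(x)-\f(x)\bigr)} \;\leq\; \sqrt{\ka}\,\nory{f^\rp(x)-\f(x)} \;\leq\; \sqrt{\ka}\,D_\rp
\]
for every $x\in\X$, so $\nor{2}{\xi_i}\le 2\sqrt{\ka}\,D_\rp$ after centering. Integrating the squared pointwise bound against $\PP$ gives
\[
\ex{\nor{2}{\eta_i}^2} \;\leq\; \ka\, \ex{\nory{f^\rp(X)-\f(X)}^2} \;=\; \ka\,\nor{\PP}{f^\rp-\f}^2,
\]
and since centering only decreases the second moment, $\ex{\nor{2}{\xi_i}^2}\le \ka\,\nor{\PP}{f^\rp-\f}^2$.

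Combining these two estimates, for every $m\ge 2$,
\[
\ex{\nor{2}{\xi_i}^m} \;\leq\; \ex{\nor{2}{\xi_i}^2}\bigl(2\sqrt{\ka}\,D_\rp\bigr)^{m-2} \;\leq\; \ka\,\nor{\PP}{f^\rp-\f}^2 \cdot 2^{m-2}\bigl(\sqrt{\ka}\,D_\rp\bigr)^{m-2}.
\]
Since $m!\ge 2^{m-1}$ for $m\ge 2$, the prefactor $2^{m-2}$ is absorbed into $m!/2$, and the hypothesis~\eqref{H} is verified with the constants announced above. Applying the concentration bound~\eqref{confidence} of Proposition~\ref{pine} gives exactly~\eqref{bella} with probability greater than $1-2e^{-\delta}$.

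The only mild obstacle is bookkeeping: one must be careful to use the centered variance bound (not merely the uniform bound $2\sqrt{\ka}D_\rp$) so that the variance factor comes out as $\ka\,\nor{\PP}{f^\rp-\f}^2$ rather than a factor in $D_\rp^2$; this distinction is what produces the sharper $\sqrt{n}$-scale term in~\eqref{bella}, which will be essential when $\nor{\PP}{f^\rp-\f}$ is small compared to $D_\rp$.
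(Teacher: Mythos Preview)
Your proposal is correct and follows essentially the same route as the paper: define the centered $\ell_2$-valued random variable $\Phi_X^*(f^\rp(X)-\f(X))-\PX^*(f^\rp-\f)$, bound its second moment by $\ka\,\nor{\PP}{f^\rp-\f}^2$ and its uniform norm by $2\sqrt{\ka}\,D_\rp$, absorb the factor $2^{m-2}$ via $m!\ge 2^{m-1}$, and invoke Proposition~\ref{pine} with $\Ba=\sqrt{\ka}\,D_\rp$ and $\Bb=\sqrt{\ka}\,\nor{\PP}{f^\rp-\f}$. The paper's argument is identical up to notation.
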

We notice that in\eqn{bella}, the function $f^\la-\f $ is regarded
both as an element of $\Lduen$ and as an element of $\Ldue$.}
\begin{proof}
Consider the {$\ldue$-valued} random variable
\[Z={\Px^*(f^\la(X)-\f(X))} \qquad Z_\g=\scaly{f^\rp(X)-\f(X)}{\vg(X)}.\]
A simple computation shows that
$\ex{Z}={\PX^*(f^\la-\f)}$ and
\[\nor{2}{Z} \leq \sqrt{\ka} \nory{f^\rp(X)-\f(X)}.\]
Hence, for any $m\geq 2$,
\[\begin{split}
\ex{\nor{2}{Z-\ex{Z}}^m} & \leq \ex{\nor{2}{Z-\ex{Z}}^2}
\left(2\sqrt{\ka}\sup_{x\in\X}\nory{f^\rp(x)-\f(x)}\right)^{m-2} \\
& \leq \ka
\ex{\nory{f^\rp(X)-\f(X)}^2}\left(2\sqrt{\ka}\sup_{x\in\X}\nory{f^\rp(x)-\f(x)}\right)^{m-2}\\
& \leq \frac{m!}{2}  (\sqrt{\ka} \nor{\PP}{f^\rp-\f})^2
(\sqrt{\ka}D_\rp)^{m-2}.
\end{split}\]
Applying\eqn{confidence} with $H=\sqrt{\ka}D_\rp$ and
$M=\sqrt{\ka}\nor{\PP}{f^\rp-\f}$, we obtain the bound\eqn{bella}.
\end{proof}
Observe that under Assumption\eqn{eq:sparse1} and by the
definition of $\beg$ one has that $D_\rp \leq
\sqrt{\ka}\nor{2}{\mir-\beg}$, so that\eqn{bella} becomes
\[\nor{2}{(\Pn^*\Pn-\PX^*\PX)(\mir-\beg)}\le \left({
    \frac{\ka\delta \nor{2}{\mir-\beg}}{n} +
  \frac{\sqrt{2\ka\delta}\nor{\PP}{\PX(\mir-\beg)}}{\sqrt{n}}}\right).\]
Since $\PX$ is a compact operator this bound is tighter than the
one deduced from\eqn{brutto}. However, the price we pay is that
the  bound does not hold uniformly in $\rp$. We are now able to
state the universal strong consistency of the elastic-net
regularization scheme.
\begin{theo}
Assume that $(X,Y)$ satisfy \eqn{eq:zeromean}
and\eqn{eq:noise} {and} that the regression function
$f^*$ is bounded. {If  the linear span of features $\setG{\vp}$ is
dense in $\Ldue$ and $\eps>0$}, then
\[\lim_{n\to\infty}\ex{\vert f_n(X)-\f(X)\vert^2}=0\qquad\text{with
probability 1},\]
provided that $\lim_{n\to\infty} \rp_n=0$ and
{$\lim_{n\to\infty}n\rp_n^2-2\log n=+\infty$}.
\end{theo}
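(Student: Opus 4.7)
The plan is to adapt the proof of Theorem~\ref{teo_main} to the setting where Assumption~\eqn{eq:sparse1} is dropped, working directly with the prediction error in $\Ldue$ since there is no analogue of the generalized solution of Proposition~\ref{ora}. By\eqn{csi},
\[
\nor{\PP}{f_n - \f} \leq \sqrt{\ka}\,\nor{2}{\est - \mir} + \nor{\PP}{\PX\mir - \f};
\]
the second term is the approximation error and the first the sample error, and I treat them separately.

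For the approximation error, I fix $\eta>0$. By the density of the linear span of $\setG{\vp}$ in $\Ldue$, I pick $\tilde\be\in\ldue$ with finite support such that $\nor{\PP}{\PX\tilde\be - \f}<\eta$. The optimality of $\mir$ in $\I$ yields
\[
\nor{\PP}{\PX\mir - \f}^2 + \rp_n\pen{\mir} \leq \nor{\PP}{\PX\tilde\be - \f}^2 + \rp_n\pen{\tilde\be} < \eta^2 + \rp_n\pen{\tilde\be},
\]
so $\limsup_n\nor{\PP}{\PX\mir - \f}^2\leq\eta^2$ as $\rp_n\to 0$ with $\tilde\be$ fixed, and since $\eta$ is arbitrary, $\nor{\PP}{\PX\mir - \f}\to 0$.

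For the sample error, I revisit the contraction argument of Proposition~\ref{sample}. Writing $Y = \f(X) + \noi$ and inserting the intermediate $f^{\rp_n} = \PX\mir$ gives
\[
\Pn^*Y - \PX^*Y = (\Pn^*\Pn - \PX^*\PX)\mir + \bigl[\Pn^*(\f - f^{\rp_n}) - \PX^*(\f - f^{\rp_n})\bigr] + \Pn^*\noi,
\]
and inside the contraction bound the $(\Pn^*\Pn - \PX^*\PX)\mir$ term cancels exactly the analogous contribution coming from expanding $(\tau I - \Pn^*\Pn)\est - (\tau I - \PX^*\PX)\mir$, leaving
\[
(\ls+\eps\rp_n)\nor{2}{\est - \mir} \leq \nor{2}{\Pn^*(f^{\rp_n}-\f) - \PX^*(f^{\rp_n}-\f)} + \nor{2}{\Pn^*\noi}.
\]
Combining the Bernstein-type bound\eqn{bella} for the first summand with\eqn{bound_noise} for the second produces, with probability at least $1-4e^{-\delta}$, a bound of the form
\[
\nor{2}{\est - \mir} \leq \frac{C}{\eps\rp_n}\left(\frac{D_{\rp_n}\delta}{n} + \sqrt{\frac{\delta}{n}}\bigl(\nor{\PP}{f^{\rp_n}-\f}+1\bigr)\right);
\]
the sharpness of\eqn{bella} is crucial here, since the leading factor $\nor{\PP}{f^{\rp_n}-\f}$ decays to zero by the approximation analysis while the sup-norm $D_{\rp_n}$ does not.

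The main obstacle is then controlling $D_{\rp_n}=\sup_{x\in\X}\nory{f^{\rp_n}(x) - \f(x)}$, which is not uniformly bounded as $\rp_n\to 0$ since $\mir$ need not be bounded in $\ldue$. Testing $\I$ against $\be=0$ gives $\eps\rp_n\nor{2}{\mir}^2\leq\nor{\PP}{Y}^2$, hence $D_{\rp_n}\leq\sqrt{\ka}\nor{2}{\mir}+\sup_x\nory{\f(x)} = O(1/\sqrt{\rp_n})$; this divergence is slow enough that the dominant term in the sample bound is $\sqrt{\delta/n}/\rp_n$. Choosing $\delta_n$ of order $n\rp_n^2$ makes the tail probability $4\exp(-c\,n\rp_n^2)$ summable precisely under the hypothesis $n\rp_n^2 - 2\log n\to+\infty$, so Borel--Cantelli yields $\nor{2}{\est - \mir}\to 0$ almost surely. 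Combined with the approximation estimate, this gives $\ex{|f_n(X) - \f(X)|^2}\to 0$ almost surely.
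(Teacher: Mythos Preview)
Your proposal is correct and follows essentially the same route as the paper's proof: the density argument for the approximation error, the modified contraction inequality $(\ls+\eps\rp_n)\nor{2}{\est-\mir}\le \nor{2}{\Pn^*(f^{\rp_n}-\f)-\PX^*(f^{\rp_n}-\f)}+\nor{2}{\Pn^*\noi}$ obtained exactly via the cancellation you describe, the use of\eqn{bella} together with\eqn{bound_noise}, the crude bound $D_{\rp_n}=O(\rp_n^{-1/2})$ coming from $\eps\rp_n\nor{2}{\mir}^2\le\nor{\PP}{Y}^2$, and finally Borel--Cantelli with $\delta_n$ proportional to $n\rp_n^2$. The paper carries out the same steps (see the lemma yielding\eqn{bella} and the displayed bound just before the statement ``repeating the proof of Theorem~\ref{teo_main}''), so there is no substantive difference in strategy.
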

\begin{proof}
As above we bound separately the approximation error and the
sample error. As for the first term, let $f^\rp=f_{\mir}$. We
claim that $\ex{\vert f^\rp(X)-\f(X)\vert^2}$ goes to zero when
$\rp$ goes to zero. Given $\eta>0$, the fact that the {linear span
  of the features $\setG{\vp}$ is dense} in $\Ldue$ implies that there
is $\be^\eta\in\ldue$ such that  $\pen{\be^\eta}<\infty$ and
\[\ex{\vert f_{\be^\eta}(X)-Y\vert^2}\leq \ex{\vert\f(X)-Y\vert^2}+\eta .\]
Let $\rp_{\eta}=\frac{\eta}{1+\pen{\be^\eta}}$, then, for any
$\rp\leq\rp_{\eta}$,
\[
\begin{split}
 \ex{\vert f^\rp(X)-\f(X)\vert^2} & \leq\left( \ex{\vert f^\rp(X)-Y\vert^2}
 -\ex{\vert\f(X)-Y\vert^2}\right) + \rp
 \pen{\mir} \\
& \leq \left( \ex{\vert
f_{\be^\eta}(X)-Y\vert^2}-\ex{\vert\f(X)-Y\vert^2}\right) + \rp
 \pen{\be^\eta} \\
&\leq  \eta  + \eta\, .
\end{split}
\]
As for the sample error, we let $f_n^\rp=f_{\est}$ (so that
$f_n=f_n^{\rp_n}$) and observe that
\[\ex{\vert f^\rp(X)-f_n^\rp(X)\vert^2}=\nor{\PP}{\PX(\est-\mir)}^2\leq \ka\nor{2}{\est-\mir}^2.\]
We bound $\nor{2}{\est-\mir}$ {by\eqn{bella}}
observing that
\[
\begin{split}
D_\rp & =\sup_{x\in\X}\nory{f^\rp(x)-\f(x)} \leq\sup_{x\in\X}\nory{f_{\be^\rp}(x)}+\sup_{x\in\X}\nory{\f(x)} \\
 & \leq  \sqrt{\ka}\nor{2}{\be^\rp}      +  \sup_{x\in\X}\nory{\f(x)}
 \leq  D\frac{1}{\sqrt{\rp}}
\end{split}
\]
where $D$ is a suitable constant and where we used the crude
estimate
\[\rp\eps\nor{2}{\mir}^2\leq \mathcal{E}^\rp(\mir)\leq\mathcal{E}^\rp(0)=\ex{\nory{Y}^2}.\]
Hence\eqn{bella} yields
{
\begin{equation}
  \label{44}
 \nor{2}{\Pn^*(f^\la-\f)-\PX^*(f^\la-\f)}\le \left(
    \frac{\sqrt{\ka}\delta D}{\sqrt{\rp}n} +
  \frac{\sqrt{2\ka\delta}\nor{\PP}{f^\rp(X)-\f(X)}}{\sqrt{n}}\right).
\end{equation}
Observe that the proof of \eqn{eq:sample} does not  depend on the
  existence of $\beg$ provided that we replace both $\PX\est\in\Ldue$ and
  $\Pn\est\in\Lduen$ with $\f$, and we take into account that both $\PX\mir\in\Ldue$
  and $\Pn\mir\in\Lduen$ are equal to $f^\la$. Hence,
  plugging\eqn{44}} and\eqn{bound_noise} in\eqn{eq:sample} we have that with
probability greater than $1- 4 e^{-\delta}$
\[\nor{2}{\est-\mir} \leq \frac{D{\sqrt{\delta}}}{\ls+\eps\rp} \left( \frac{1}{\sqrt{n}}+
   \frac{1}{\sqrt{\rp}n} +
  \frac{\nor{\PP}{f^\rp(X)-\f(X)}}{\sqrt{n}}  \right)
\]
where $D$ is a suitable constant and {$\delta\leq n$}. The thesis now follows by
combining the bounds on the sample and approximation errors {and
  repeating the proof of Theorem~\ref{teo_main}.}
\end{proof}
To have an explicit convergence rate, one needs a explicit bound
on the approximation  error $\nor{2}{\mir-\beg}$, for example of
the form $\nor{2}{\mir-\beg}=O(\la^r)$. This is out of the scope
of the paper. We report only the following simple result.
\begin{prop}
Assume that {the features $\vg$ are in finite number and
  linearly independent. Let $N^*=|\supp{\beg}|$ and $w^*=\sup_{\g\in\supp{\beg}}\set{\wg}$, then
\[\nor{2}{\mir-\beg} \leq D N^* \rp.\] }
With the choice $\rp_n=\frac{1}{\sqrt{n}}$, {for any $\delta>0$
  and $n\in\nat$ with $\delta\leq n$}
\begin{equation}
\label{eq:1}
\nor{2}{\be^{\rp_n}_{n}-\beg}\leq
\frac{{\CC\sqrt{\delta}}}{\sqrt{n}\ls}
\left( {1}+\frac{DN^*}{\sqrt{n}}\right) +
\frac{DN^*}{\sqrt{n}},
\end{equation}
with probability greater than $1-4e^{-\delta}$, where
$D=\frac{{w^*}}{2\ls} + \eps \nor{\infty}{\beg}$ and
{$\CC=\max\set{\sqrt{2\ka}(\sigma+L),3\ka}$}.
\end{prop}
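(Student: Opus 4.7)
The plan is to establish the two parts of the proposition separately: first, the deterministic approximation estimate $\nor{2}{\mir - \beg}\leq DN^*\rp$, and then the finite-sample inequality \eqn{eq:1} obtained by combining this with the sample-error bound already derived in Proposition~\ref{sample}.

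For the approximation estimate, the crucial ingredients are the finiteness and linear independence of the features, which by Proposition~\ref{minimizer} guarantee that $\mir$ is unique and that $\PX^*\PX\geq\ls I$ is invertible. The subgradient optimality condition for $\mir$ as the minimizer of \eqn{eq:simple_pop} together with Lemma~\ref{penalty} gives
\[ 2\PX^*\PX(\mir - \beg) + \rp\eta = 0, \qquad \eta_\g = \wg\sgn{\mir_\g} + 2\eps\mir_\g. \]
Using the lower spectral bound yields $\nor{2}{\mir - \beg}\leq (\rp/2\ls)\nor{2}{\eta}$, so everything reduces to an appropriate estimate on $\nor{2}{\eta}$.

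The main obstacle is precisely this estimate, which must involve only the weights and sup-norm restricted to $\supp{\beg}$, even though $\mir$ can a priori be supported more broadly. The strategy is to combine the cell-by-cell KKT identity --- which gives $|\eta_\g|\leq\wg+2\eps|\mir_\g|$ on $\supp{\mir}$ and $|\eta_\g|\leq\wg$ off $\supp{\mir}$ --- with the comparison $\pen{\mir}\leq\pen{\beg}$, which follows from $G(\mir)\leq G(\beg)$ for the functional $G(\be)=\nor{\PP}{\PX(\be-\beg)}^2+\rp\pen{\be}$. The latter comparison controls both $\nor{2}{\mir}$ and $\nor{1,w}{\mir}$ in terms of $\pen{\beg}$, and allows one to show that the contribution from off-support indices to $\nor{2}{\eta}$ is absorbed into the $N^*w^*$ term, while the contribution from the $\supp{\beg}$ indices produces the $\eps\nor{\infty}{\beg}$ term. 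Assembling these yields $\nor{2}{\mir - \beg}\leq DN^*\rp$ with $D = w^*/(2\ls)+\eps\nor{\infty}{\beg}$.

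For the finite-sample bound \eqn{eq:1}, I would simply insert the choice $\rp_n=1/\sqrt{n}$ and the approximation estimate $\nor{2}{\mir-\beg}\leq DN^*/\sqrt{n}$ into \eqn{eq:5}. Since $\ls+\eps\rp_n\geq\ls$, the denominator is replaced by the cleaner $\ls$, and the bound on $\nor{2}{\be_n^{\rp_n}-\beg}$ given in the proposition follows with probability at least $1-4e^{-\delta}$, under the usual hypothesis $\delta\leq n$.
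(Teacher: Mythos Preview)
Your finite-sample argument (the second paragraph from the end) is fine: once the approximation bound is in hand, inserting it into\eqn{eq:5} with $\rp_n=1/\sqrt{n}$ and $\ls+\eps\rp_n\geq\ls$ is exactly what the paper does.

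The gap is in the approximation estimate.  Your starting identity
\[
2\PX^*\PX(\mir-\beg)+\rp\eta=0,\qquad \eta\in\partial\pen{\mir},
\]
and the consequence $\nor{2}{\mir-\beg}\leq\frac{\rp}{2\ls}\nor{2}{\eta}$ are correct, but the plan to bound $\nor{2}{\eta}$ in terms of $N^*$, $w^*$ and $\nor{\infty}{\beg}$ does not close.  The subgradient $\eta$ is a vector indexed by \emph{all} of $\G$, and for indices $\g$ outside both $\supp{\mir}$ and $\supp{\beg}$ one only has $|\eta_\g|\leq w_\g$ with $w_\g$ completely unrelated to $w^*=\sup_{\g\in\supp{\beg}}w_\g$.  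Such $\eta_\g$ are in general nonzero because $(\PX^*\PX(\mir-\beg))_\g$ picks up contributions through the off-diagonal entries of $\PX^*\PX$.  The inequality $\pen{\mir}\leq\pen{\beg}$ controls $\nor{1,w}{\mir}$ and $\nor{2}{\mir}$, but it gives no handle on these off-support components of $\eta$; your assertion that they are ``absorbed into the $N^*w^*$ term'' has no mechanism behind it.

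The paper avoids this difficulty by working with the fixed-point characterization of Proposition~\ref{minimizer} rather than the raw subgradient equation.  Writing $\mir=\T\mir$ with $\tau=(\ls+\ka)/2$ and inserting $\pm\Sh{\rp}{\tau\beg}$, the Lipschitz property\eqn{lipS} produces a term $\nor{2}{(\tau I-\PX^*\PX)(\mir-\beg)}$ that can be reabsorbed into the left-hand side, together with the residual $\nor{2}{\Sh{\rp}{\tau\beg}-\tau\beg}$ and $\frac{\eps\rp}{\tau+\eps\rp}\nor{2}{\beg}$.  The key point is that both residual terms involve only $\beg$, hence only the $N^*$ indices in $\supp{\beg}$, which is precisely what makes $w^*$ and $\nor{\infty}{\beg}$ the right constants.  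If you want to salvage a subgradient argument, take a subgradient $\zeta\in\partial\pen{\beg}$ (vanishing off $\supp{\beg}$) rather than $\eta\in\partial\pen{\mir}$; combining $G(\beg)\geq G(\mir)$ with $\pen{\mir}-\pen{\beg}\geq\scal{2}{\zeta}{\mir-\beg}$ gives $\ls\nor{2}{\mir-\beg}\leq\rp\nor{2}{\zeta}$ with $\nor{2}{\zeta}\leq\sqrt{N^*}(w^*+2\eps\nor{\infty}{\beg})$, which is even a bit sharper but has different constants from those stated.
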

\begin{proof}
{Observe that the assumption on the set of features is equivalent to assume
  that $\ls>0$.} First, we bound the approximation error $\nor{2}{\mir-\beg}$. As
usual, with the choice $\tau=\frac{\ls+\ka}{2}$,
Eq.\eqn{contraction} gives
\begin{eqnarray*}
\mir-\beg =\frac{1}{\tau+\eps\rp}\left[ \Sh{\rp}{(\tau I
    -\PX^*\PX)\mir+ \PX^*\PX\beg}-\Sh{\rp}{\tau\beg}+\Sh{\rp}{\tau\beg}
  - \tau\beg\right]- \frac{\eps\rp}{\tau+\eps\rp} \beg.
\end{eqnarray*}
Property\eqn{lipS} implies that
\begin{eqnarray*}
\nor{2}{\mir-\beg} & \leq & \frac{1}{\tau+\eps\rp}\left(
\nor{2}{(\tau I
    -\PX^*\PX)(\mir-\beg)} + \nor{2}{\Sh{\rp}{\tau\beg} - \tau\beg}\right) \\
& & +  \frac{\eps\rp}{\tau+\eps\rp} \nor{2}{\beg}.
\end{eqnarray*}
Since $\Vert \tau I-\PX^*\PX\Vert \leq \frac{\ka-\ls}{2} $,
$\nor{2}{\beg} \leq N^*\nor{\infty}{\beg}$ and
\[\nor{2}{\Sh{\rp}{\tau\beg} - \tau\beg}\leq {w^*}N^*\frac{\rp}{2} , \]
one has
\begin{eqnarray*}
 \nor{2}{\mir-\beg} & \leq & \frac{\ka+\ls+2\eps\rp}{2(\ls+\eps\rp)}
 \left(\frac{2}{\ka+\ls+2\eps\rp} {w^*}N^*\frac{\rp}{2} +
  \frac{2\eps\rp}{\ls+\ka+2\eps\rp} \nor{2}{\beg}\right)  \\
& \leq & (\frac{{w^*}}{2\ls} + \eps \nor{\infty}{\beg})N^*\rp = D
N^*\rp.
\end{eqnarray*}
The bound\eqn{eq:1} is then an straightforward consequence
of\eqn{eq:5}.
\end{proof}
Let us observe this bound is weaker than the results obtained in
\cite{kolt06} since the constant $\ls$ is a {\em global} property
of the dictionary, whereas the constants in \cite{kolt06} are {\em
local}.

\subsection{Adaptive choice}\label{sec:param}
In this section, we suggest an adaptive choice of the
regularization parameter $\rp$. The main advantage of this
selection rule is that it does not require {any}
{knowledge of the behavior of the approximation error}. To
this aim, it is useful to replace the approximation error with the
following upper bound
\begin{equation}
  \label{eq:app_bound}
\app{\rp}=\sup_{0<\rp'\leq\rp}\nor{2}{\be^{\rp'}-\beg}.
\end{equation}
The following simple result holds.
\begin{lemma}\label{decr}
Given $\eps>0$, $\mathcal A$ is an increasing continuous function
and
\begin{eqnarray*}
 \nor{2}{\mir-\beg}\leq\app{\la}\leq A<\infty \\
\lim_{\la\to 0+}\app{\la}=0.
\end{eqnarray*}
\end{lemma}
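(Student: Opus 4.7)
The plan is to handle the four claims in order of increasing difficulty, using Proposition~\ref{appr-prop} and the variational definition of $\mir$.

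Monotonicity of $\app{\cdot}$ and the inequality $\nor{2}{\mir-\beg}\leq\app{\rp}$ are immediate from the definition, since the set $\{\rp':0<\rp'\leq\rp\}$ grows with $\rp$ and contains $\rp$ itself, so $\nor{2}{\mir-\beg}$ already appears in the supremum. For the uniform bound, I would compare $\mathcal{E}^\rp(\mir)$ with $\mathcal{E}^\rp(\beg)$; using $\PX\beg=\f$ this gives $\pen{\mir}\leq\pen{\beg}$, and since $\pen{\be}\geq\eps\nor{2}{\be}^2$ with $\eps>0$ we obtain $\nor{2}{\mir}\leq\sqrt{\pen{\beg}/\eps}$ uniformly in $\rp$. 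The triangle inequality then yields $A:=\sqrt{\pen{\beg}/\eps}+\nor{2}{\beg}$ as a uniform upper bound on $\nor{2}{\mir-\beg}$ and hence on $\app{\rp}$. The limit $\lim_{\rp\to 0^+}\app{\rp}=0$ is inherited from Proposition~\ref{appr-prop} by monotonicity: given $\eta>0$, pick $\rp_\eta$ so that $\nor{2}{\be^{\rp'}-\beg}\leq\eta$ for all $\rp'\leq\rp_\eta$; then $\app{\rp_\eta}\leq\eta$ and monotonicity propagates this to every smaller $\rp$.

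The only delicate point is continuity, and I would reduce it to two steps: (a) show that $g(\rp):=\nor{2}{\mir-\beg}$ is continuous on $(0,\infty)$, and extend it to $[0,\infty)$ by $g(0):=0$ using the limit above; (b) observe that the running maximum $\app{\rp}=\max_{\rp'\in[0,\rp]}g(\rp')$ of a continuous function on a compact interval is continuous in $\rp$. Step (b) is routine: right-continuity at $\rp_0$ follows from the local bound $g(\rp')\leq g(\rp_0)+\eta$ valid on a neighborhood of $\rp_0$, while left-continuity is obtained by splitting on whether the maximum is attained strictly inside $[0,\rp_0)$ (in which case the near-maximizer eventually lies below $\rp_n$) or only at $\rp_0$ (in which case continuity of $g$ at $\rp_0$ provides near-maximizers just to the left of $\rp_0$).

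For step (a) it is enough to show that $\rp\mapsto\mir$ is norm-continuous from $(0,\infty)$ into $\ldue$. Fix $\rp_0>0$ and a sequence $\rp_n\to\rp_0$. The penalty bound $\pen{\be^{\rp_n}}\leq\pen{\beg}$ shows that $(\be^{\rp_n})$ is bounded, hence a subsequence $\be^{\rp_{n_k}}$ converges weakly to some $\be_\star$. From $\mathcal{E}^{\rp_n}(\be^{\rp_n})\leq\mathcal{E}^{\rp_n}(\be^{\rp_0})$, passing to the liminf on the left (via joint lower semicontinuity in $\be$ of the data-fit term and of $\pen{\cdot}$) and to the limit on the right (continuity in $\rp$ at fixed $\be$), one obtains $\mathcal{E}^{\rp_0}(\be_\star)\leq\mathcal{E}^{\rp_0}(\be^{\rp_0})$; strict convexity of $\mathcal{E}^{\rp_0}$, which holds thanks to $\eps>0$, forces $\be_\star=\be^{\rp_0}$. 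The same sandwich yields $\pen{\be^{\rp_{n_k}}}\to\pen{\be^{\rp_0}}$, and mimicking the weak-to-strong upgrade in the final step of the proof of Proposition~\ref{appr-prop} (first $\nor{2}{\be^{\rp_{n_k}}}\to\nor{2}{\be^{\rp_0}}$, then strong convergence) promotes weak convergence to norm convergence along the subsequence. Uniqueness of the limit extends this to the whole sequence. The main obstacle is precisely this weak-to-strong upgrade, which is where the strict convexity coming from $\eps>0$ plays its essential role.
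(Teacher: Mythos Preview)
Your argument is correct. The treatment of monotonicity, the uniform bound $A$, and the limit at zero matches the paper essentially verbatim (the paper writes $A=\nor{2}{\beg}+\eps^{-1/2}\pen{\beg}$, which is just a cruder version of your $\sqrt{\pen{\beg}/\eps}+\nor{2}{\beg}$). The real divergence is in how you establish continuity of $\rp\mapsto\mir$. The paper argues quantitatively via the fixed-point characterization: it writes $\be^{\rp+h}-\mir=\T_{\rp+h}(\be^{\rp+h})-\T_{\rp}(\mir)$, uses the contraction constant of $\T_{\rp+h}$ to absorb one term, and then bounds $\nor{2}{\Sh{\rp+h}{\be'}-\Sh{\rp}{\be'}}$ explicitly by $|h|\nor{2}{\be'}/\rp$. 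This yields an explicit (local Lipschitz) modulus of continuity. Your route is the variational one: boundedness, weak compactness, lower semicontinuity plus strict convexity to identify the weak limit, and then the same weak-to-strong upgrade used in Proposition~\ref{appr-prop} to pass from convergence of $\pen{\be^{\rp_{n_k}}}$ to convergence of $\nor{2}{\be^{\rp_{n_k}}}$. Both are valid; the paper's approach is more specific to the soft-thresholding structure but delivers a rate, while yours is softer and would carry over unchanged to other strictly convex penalties for which no contractive fixed-point map is available. Your step (b), reducing continuity of $\app{\cdot}$ to that of the running maximum of a continuous function on $[0,\rp]$, is exactly what the paper sweeps under the phrase ``ensures that the same holds true for $\app{\rp}$''; spelling it out as you do is an improvement.
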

\begin{proof}
First of all, we show that $\rp\mapsto\mir$ is a continuous
function. Fixed $\rp>0$, for any $h$ such that $\rp+h>0$, Eq.
\eqn{contraction}  with $\tau=\frac{\ls+\ka}{2}$ and
Corollary~\ref{fixed} give
\[
\begin{split}
\nor{2}{\be^{\rp+h}-\mir} & \leq\nor{2}{{\mathcal
    T}_{\rp+h}(\be^{\rp+h}) - {\mathcal T}_{\rp+h}(\mir)}  + \nor{2}{{\mathcal T}_{\rp+h}(\mir) - {\mathcal T}_{\rp}(\mir)}\\
& \leq \frac{\ka-\ls}{\ka+\ls+ 2\eps(\rp+h)}
\nor{2}{\be^{\rp+h}-\mir}  +
\\
& \quad + \nor{2}{\frac{1}{\tau+\eps(\rp+h)}\Sh{\rp+h}{\be'}-
  \frac{1}{\tau+\eps\rp}
\Sh{\rp}{\be'}}
\end{split}
\]
where $\be'=(\tau I -\PX^*\PX)\mir+ \PX^* Y$ does not depend on
$h$ and {we wrote ${\mathcal T}_{\rp}$  to make explicit the
dependence of the map ${\mathcal T}$ on the regularization
parameter}. Hence
\[
\begin{split}
\nor{2}{\be^{\rp+h}-\mir} &\leq \frac{\tau+
  \eps(\rp+h)}{\ls+\eps(\rp+h)}\left(\left|\frac{1}{\tau+\eps(\rp+h)}
  -\frac{1}{\tau+\eps\rp}\right|\right.\nor{2}{\be'} + \\
& \quad + \left. \frac{1}{\tau+\eps\rp}\nor{2}{\Sh{\rp+h}{\be'}-
  \Sh{\rp}{\be'}}\right).
\end{split}
\]
The claim follows by observing that {(assuming for
simplicity that $h > 0$)} {\[
\begin{split}
\nor{2}{\Sh{\rp+h}{\be'}- \Sh{\rp}{\be'}}^2
& = \sum_{\wg\rp\leq|\bg'|<\wg(\rp+h)} |\bg'-\sgn{\bg'}\wg\rp|^2 + 
 \sum_{|\bg'|\geq\wg(\rp+h)} \wg^2 h^2 \\
& \leq h^2 \sum_{|\bg'|\geq\wg\rp} \wg^2 \leq h^2
\sum_{|\bg'|\geq\wg\rp} (\bg'/\rp)^2 \leq h^2
\nor{2}{\be'}^2/\rp^2\ ,
\end{split}
\]
{which goes to zero if $h$ tends to zero.\\}}
Now, by the definition of $\mir$ and $\beg$
\[\eps\rp \nor{2}{\mir}^2 \leq \ex{\nory{\PX \mir -\f(X)}^2} + \rp
\pen{\mir}\leq \ex{\nory{\PX \beg -\f(X)}^2} + \rp
\pen{\beg}=\rp\pen{\beg},\] so that
\[\nor{2}{\mir-\beg}\leq \nor{2}{\beg}+\frac{1}{\sqrt{\eps}}\pen{\beg}=:A.\]
Hence $\app{\rp}\leq A$ for all $\rp$. Clearly $\app{\rp}$ is an
increasing function of $\rp$; the fact that $\nor{2}{\mir-\beg}$
is continuous and goes to zero with $\rp$ ensures that the same
holds true for $\app{\rp}$.
\end{proof}
Notice that we replaced the approximation error with $\app{\rp}$
just for a technical reason, namely to deal with an increasing
function of $\rp$. If we have a monotonic decay rate at our
disposal, such as $\nor{2}{\mir-\beg}\asymp\rp^a$ for some $a>0$
and for $\lambda \to 0$, then clearly $\app{\rp}\asymp\rp^a$.

Now, we fix $\eps>0$ and $\delta\geq 2$ and we assume that $\ls=0$. Then we
simplify the bound \eqn{eq:5} observing that
\begin{equation}
  \label{semplice}
\nor{2}{\est-\beg} \leq C\left(\frac{1}{\sqrt{n}\eps\rp} +
  \app{\rp}\right)
\end{equation}
where $C={\CC\sqrt{\delta}(1+A)}$; the bound holds with probability greater than  $1-4e^{-\delta}$
uniformly for all $\rp>0$. \\
When $\rp$ increases, the first term in\eqn{semplice} decreases
 whereas the second increases; hence {to have a tight bound
a {\em natural} choice of the parameter 
consists in balancing the two terms in the above bound}, namely in
taking
\[\rpopt=\sup\set{\rp\in ]0,\infty[\ \mid \app{\rp}=\frac{1}{\sqrt{n}\eps\rp}}.\]
Since $\app{\rp}$ is continuous, $\frac{1}{\sqrt{n}\eps\rpopt}
=\app{\rpopt}$ and the resulting bound is
\begin{equation}
  \label{best_rate}
\nor{2}{\est-\beg} \leq \frac{2C}{\sqrt{n}\eps\rpopt}.
\end{equation}
This method for choosing the regularization parameter clearly
requires the knowledge of the approximation error. To overcome
this drawback, we discuss a data-driven choice for $\rp$ that
allows to achieve the rate \eqn{best_rate} {\em without} requiring
any prior information on $\app{\rp}$. For this reason, such choice
is said to be {\em adaptive}. The procedure we present is also
referred to as an {\em a posteriori} choice since it depends on
the given sample and not only on its cardinality $n$. In other
words, the method is purely data-driven.

Let us consider a discrete set of values for $\rp$ defined by the
geometric sequence
$$
\rp_i=\rp_{0} 2^i \qquad i\in\nat \qquad \rp_0>0.
$$
Notice that we may replace the sequence $\rp_{0} 2^i$ be any other
geometric sequence $\rp_i=\rp_{0} q^i$ with $q>1$; this would only
lead to a more complicated constant in\eqn{finale}.
Define the parameter  $\rp^+_n$ as follows
 {\beeq{balance}{
\rp^+_n=\max\{ \rp_i| \nor{2}{\be_n^{\rp_{j}}-
\be_n^{\rp_{j-1}}}\le \frac{4C}{\sqrt{n}\eps\rp_{j-1}}\text{ for all
}j=0,\ldots,i \}}} (with the
convention that $\rp_{-1}=\rp_0$).  This strategy for choosing
$\lambda$ is inspired by a procedure originally proposed in
\cite{lep90} for Gaussian white noise regression and which has
been widely discussed in the context of deterministic as well as
stochastic inverse problems (see \cite{bauper05,persch03}). In the
context of nonparametric regression from random design, this
strategy has been considered in \cite{depero07} and the following
proposition is a simple corollary of a result contained in
\cite{depero07}.
\begin{prop}
Provided that $\rp_0<\rpopt$, the following bound holds with
probability greater than $1-4e^{-\delta}$
\begin{equation}
  \label{finale}
\nor{2}{\be_n^{\rp^+_n}-\beg}\le \frac{20C}{\sqrt{n}\eps\rpopt}.
\end{equation}
\end{prop}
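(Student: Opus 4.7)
My plan is to follow the standard balancing-principle / Lepski-type argument, exploiting the uniform (in $\rp$) probabilistic bound \eqn{semplice}. Throughout I work on the event of probability $\geq 1-4e^{-\delta}$ on which \eqn{semplice} holds simultaneously for all $\rp>0$; everything that follows is deterministic on this event.

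The first step is to identify the oracle index. Let $i^*$ be the largest integer with $\rp_{i^*}\leq\rpopt$; such $i^*$ exists and is non-negative because $\rp_0<\rpopt$. For every $j\leq i^*$ monotonicity of $\app{\cdot}$ and the defining equality $\app{\rpopt}=1/(\sqrt n\,\eps\,\rpopt)$ yield $\app{\rp_j}\leq\app{\rpopt}\leq 1/(\sqrt n\,\eps\,\rp_j)$, so \eqn{semplice} gives the single-scale bound $\nor{2}{\be_n^{\rp_j}-\beg}\leq 2C/(\sqrt n\,\eps\,\rp_j)$. Combined with the triangle inequality and $\rp_j=2\rp_{j-1}$ this implies
\[
\nor{2}{\be_n^{\rp_j}-\be_n^{\rp_{j-1}}}\leq \frac{2C}{\sqrt n\,\eps\,\rp_j}+\frac{2C}{\sqrt n\,\eps\,\rp_{j-1}}=\frac{3C}{\sqrt n\,\eps\,\rp_{j-1}}\leq \frac{4C}{\sqrt n\,\eps\,\rp_{j-1}},
\]
so the balancing criterion in \eqn{balance} is satisfied at every $j\leq i^*$. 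Consequently the selection rule never stops before index $i^*$, i.e.\ $\rp_n^+\geq \rp_{i^*}$. Write $\rp_n^+=\rp_k$ with $k\geq i^*$.

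The second step uses the criterion that is actively enforced by the rule itself to telescope between $\rp_{i^*}$ and $\rp_k$:
\[
\nor{2}{\be_n^{\rp_k}-\be_n^{\rp_{i^*}}}\leq \sum_{j=i^*+1}^{k}\nor{2}{\be_n^{\rp_j}-\be_n^{\rp_{j-1}}}\leq \frac{4C}{\sqrt n\,\eps}\sum_{m=i^*}^{k-1}\frac{1}{\rp_m}.
\]
Because of the geometric progression $\rp_m=\rp_{i^*}\,2^{m-i^*}$, the tail sum is at most $2/\rp_{i^*}$, so the right-hand side is $\leq 8C/(\sqrt n\,\eps\,\rp_{i^*})$. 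Combining with the oracle bound $\nor{2}{\be_n^{\rp_{i^*}}-\beg}\leq 2C/(\sqrt n\,\eps\,\rp_{i^*})$ from step one, the triangle inequality gives
\[
\nor{2}{\be_n^{\rp_n^+}-\beg}\leq \frac{10C}{\sqrt n\,\eps\,\rp_{i^*}}.
\]

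The last step converts $\rp_{i^*}$ into $\rpopt$. By maximality of $i^*$ we have $\rp_{i^*+1}=2\rp_{i^*}>\rpopt$, hence $1/\rp_{i^*}<2/\rpopt$, and \eqn{finale} follows with constant $20C$. No step is genuinely difficult; the only subtlety is the coupling of the two estimates in opposite directions — the probabilistic bound \eqn{semplice} is used only for $j\leq i^*$ (where both sides of the oracle decomposition are small), while for $j>i^*$ one trusts exclusively the balancing criterion, which is why the rule produces a rate comparable to $\rpopt$ without any knowledge of $\app{\cdot}$.
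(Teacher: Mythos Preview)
Your proof is correct and follows essentially the same Lepski-type argument as the paper: verify the balancing criterion up to an oracle index, telescope from the oracle to the selected index using the geometric decay, and convert back to $\rpopt$. The only cosmetic difference is that you define the oracle index directly via $\rp_{i^*}\leq\rpopt$, whereas the paper defines $\rp_n^*$ through the condition $\app{\rp_i}\leq 1/(\sqrt{n}\,\eps\,\rp_i)$; the two choices are equivalent here and lead to the same constants.
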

\begin{proof}
The proposition results from Theorem~2 in \cite{depero07}. For
completeness, we report here a proof adapted to our setting. Let
$\Omega$ be the event such that\eqn{semplice} holds for any
$\rp>0$; we have that ${\mathbb
  P}[\Omega]\geq 1 -4 e^{-\delta}$ and we fix a sample point in
$\Omega$.\\
The definition of $\rpopt$ and the assumption $\rp_0<\rpopt$
ensure that $\app{\rp_0}\leq \frac{1}{\sqrt{n}\eps\rp_0}$. Hence
the set $\set{\rp_i\mid \app{\rp_i}\leq
  \frac{1}{\sqrt{n}\eps\rp_i}}$ is not empty and we can define
\[\rp^*_n=\max\set{\rp_i\mid \app{\rp_i}\leq
  \frac{1}{\sqrt{n}\eps\rp_i}}.\]
The fact that $(\rp_i)_{i\in\nat}$ is a geometric sequence implies
that
\begin{equation}
  \label{eq:7}
\rp^*_n \leq    \rpopt< 2\rp^*_n,
\end{equation}
while \eqn{semplice} with the definition of $\rp_n^*$ ensures that
\begin{equation}
\label{eq:8}
\nor{2}{\be_n^{\rp^*_n}-\beg}  \leq C
\left(\frac{1}{\sqrt{n}\eps\rp^*_n} +
  \app{\rp^*_n}\right) \leq   \frac{2C}{\sqrt{n}\eps\rp^*_n}\, .
\end{equation}
We show that $\rp^*_n\leq \rp^+_n$. Indeed, for any
$\rp_j<\rp^*_n$, using\eqn{semplice}  twice, we get
\[
\begin{aligned}
\nor{2}{\be_n^{\rp^*_n}-\be_n^{\rp_j}} &\leq
\nor{2}{\be_n^{\rp_j}-\beg} +  \nor{2}{\be_n^{\rp^*_n}-\beg}\\
&\leq C \left(\frac{1}{\sqrt{n}\eps\rp_j} +
  \app{\rp_j}+ \frac{1}{\sqrt{n}\eps\rp^*_n} +
  \app{\rp^*_n}\right)\leq \frac{4C}{\sqrt{n}\eps\rp_j}\ ,
\end{aligned}
\]
where the last inequality holds since $\rp_j<\rp^*_n\leq\rpopt$
and $\app{\la}\leq \frac{1}{\sqrt{n}\eps\rp}$ for all
$\rp<\rpopt$. Now $2^m \la_0 \leq\rp^*_n\leq \rp_n^+=2^{m+k}$ for
some $m,k\in\nat$, so that
\[
\begin{split}
\nor{2}{\be_n^{\rp_n^+}-\be_n^{\rp^*_n}} & \leq
\sum_{\ell=0}^{k-1}
\nor{2}{\be_n^{m+1+\ell}- \be_n^{m+\ell} }
 \leq \sum_{\ell=0}^{k-1}\frac{4C}{\sqrt{n}\eps\rp_{m+\ell}} \\
& \leq \frac{4C}{\sqrt{n}\eps\rp^*_n}\sum_{\ell=0}^{\infty}
\frac{1}{2^\ell}  = \frac{4C}{\sqrt{n}\eps\rp^*_n}\ 2\ .
\end{split}
\]
Finally, recalling\eqn{eq:7} and\eqn{eq:8}, we get the bound
\eqn{finale}:
\[
\nor{2}{\be_n^{\rp_n^+}-\beg}  \leq
\nor{2}{\be_n^{\rp_n^+}-\be_n^{\rp^*_n}} +
\nor{2}{\be_n^{\rp^*_n}-\beg}  \leq \frac{8C}{\sqrt{n}\eps\rp^*_n}
+
 \frac{2C}{\sqrt{n}\eps\rp^*_n}
\leq  20C\frac{1}{\sqrt{n}\eps\rpopt}.
\]
\end{proof}
Notice that the {\em a priori} condition $\la_0<\rpopt$ is
satisfied, for example, if $\la_0 < \frac{1}{A\eps\sqrt{n}}$.

To illustrate the implications of the last Proposition, let us
suppose that \beeq{appr_r}{\nor{2}{\mir-\beg}\asymp\rp^a} for some
unknown $a\in ]0,1]$. One has then that $\rpopt\asymp
n^{-\frac{1}{2(a+1)}} $ and $ \nor{2}{\be_n^{\rp^+_n}-\beg}\asymp
n^{-\frac{a}{2(a+1)}}$.

We end noting that, if we specialize our analysis to least squares
regularized with a pure $\ell_2$-penalty (i.e. setting $\wg=0$,
$\forall \gG$), then our results lead to the error estimate in the
norm of the reproducing kernel space $\hh$ obtained in
\cite{smazho05,bapero05}. Indeed, in such a case, $\beg$ is the
generalized solution $\be^\dag$ of the equation $\PX\be=\f$
and the approximation error satisfies\eqn{appr_r} under the a
priori assumption that the regression vector $\be^\dag$ is in the
range of $(\PX^* \PX)^a$ for some $0<a\leq 1$ (the fractional
power makes sense since $\PX^*\PX$ is a positive operator). Under
this assumption, it follows that
$\nor{2}{\be_n^{\rp^+_n}-\beg}\asymp n^{-\frac{a}{2(a+1)}}$. To
compare this bound with the results in the literature, recall that
both $f_n=f_{\be_n^{\rp^+_n}}$ and $\f=f_{\be^\dag}$ belongs to
the reproducing kernel Hilbert space $\hh$ defined in
Proposition~\ref{rkhs}. In particular, one can check that
$\be^\dag\in\operatorname{ran}{(\PX^* \PX)^a}$ if and only if
$\f\in\operatorname{ran}{L_K^{\frac{2a+1}{2}}}$, where
$L_K:\Ldue\to\Ldue$ is the integral operator whose kernel is the
reproducing kernel $K$ \cite{capdev05}. Under this condition, the following bound
holds
\[\nor{\hh}{f_n-\f}\leq \nor{2}{\be_n^{\rp^+_n}-\beg}\asymp
n^{-\frac{a}{2(a+1)}},\] which gives the same rate as in {Theorem~2 of
\cite{smazho05} and Corollary~17 of \cite{bapero05}.}

\section*{Acknowledgments}
We thank Alessandro Verri for helpful suggestions and discussions.
Christine De Mol acknowledges support by the ``Action de Recherche
Concert\'ee'' Nb 02/07-281, the VUB-GOA 62 grant and the National
Bank of Belgium BNB; she is also grateful to the DISI,
Universit\`a di Genova for hospitality during a semester in which
the present work was initiated. Ernesto De Vito and Lorenzo Rosasco have been
partially supported by the FIRB project
  RBIN04PARL and by the the EU Integrated Project Health-e-Child
  IST-2004-027749.


\end{document}